%%%%%%%% ICML 2026 EXAMPLE LATEX SUBMISSION FILE %%%%%%%%%%%%%%%%%

\documentclass{article}

% Recommended, but optional, packages for figures and better typesetting:
\usepackage{microtype}
\usepackage{graphicx}
\usepackage{subcaption}
\usepackage{booktabs} % for professional tables

% hyperref makes hyperlinks in the resulting PDF.
% If your build breaks (sometimes temporarily if a hyperlink spans a page)
% please comment out the following usepackage line and replace
% \usepackage{icml2026} with \usepackage[nohyperref]{icml2026} above.
\usepackage{hyperref}

% Attempt to make hyperref and algorithmic work together better:

% Use the following line for the initial blind version submitted for review:
% \usepackage{icml2026}

% For preprint, use
% \usepackage[preprint]{icml2026}

% If accepted, instead use the following line for the camera-ready submission:
\usepackage[accepted]{icml2026}

\usepackage{amsmath,bm,bbm}
\usepackage{amssymb}
\usepackage{mathtools}
\usepackage{amsthm}
\usepackage{enumerate}
\usepackage{enumitem}
\usepackage{mathrsfs}
\usepackage[dvipsnames]{xcolor}
\usepackage{algorithm}
\usepackage{algorithmic}
\usepackage{authblk}
\usepackage{pdfpages}
\usepackage{mdframed}
\usepackage{eqparbox}
\usepackage[all]{xy}

% if you use cleveref..
\usepackage[capitalize,noabbrev]{cleveref}

%%%%%%%%%%%%%%%%%%%%%%%%%%%%%%%%
% THEOREMS
%%%%%%%%%%%%%%%%%%%%%%%%%%%%%%%%
\theoremstyle{plain}
\newtheorem{theorem}{Theorem}[section]
\newtheorem*{itheorem}{Informal Theorem}

\newtheorem{lemma}[theorem]{Lemma}
\newtheorem{remark}[theorem]{Remark}

\theoremstyle{definition}
\newtheorem{definition}[theorem]{Definition}

\theoremstyle{remark}

\DeclareMathOperator*{\argmin}{arg\,min}

\newcommand\N{\mathcal{N}}
\newcommand\naturalnumber{\mathbb{N}}
\newcommand\R{\mathbb{R}}
\newcommand\E{\mathbb{E}}

\newcommand\Rd{\mathbb{R}^{d}}

\newcommand\hpc{\mathcal{H}}

\newcommand\datasetregression{\{(\bm{x}_{i}, y_{i})\}_{i=1}^{n}\subseteq\Rd\times\R}

\newcommand\teacher{N^{*}(\bm{x})}
\newcommand\student{N(\bm{x};\bm{\theta})}

\def\ddefloop#1{\ifx\ddefloop#1\else\ddef{#1}\expandafter\ddefloop\fi}
\def\ddef#1{\expandafter\def\csname v#1\endcsname{\ensuremath{\boldsymbol{#1}}}}
\ddefloop abcdefghijklmnopqrstuvwxyzABCDEFGHIJKLMNOPQRSTUVWXYZ\ddefloop
\def\ddef#1{\expandafter\def\csname v#1\endcsname{\ensuremath{\boldsymbol{\csname #1\endcsname}}}}
\ddefloop {alpha}{beta}{gamma}{delta}{epsilon}{varepsilon}{zeta}{eta}{theta}{var
theta}{iota}{kappa}{lambda}{mu}{nu}{xi}{pi}{varpi}{rho}{varrho}{sigma}{varsigma}
{tau}{upsilon}{phi}{varphi}{chi}{psi}{omega}{Gamma}{Delta}{Theta}{Lambda}{Xi}{Pi
}{Sigma}{varSigma}{Upsilon}{Phi}{Psi}{Omega}{ell}\ddefloop
	
\newlength\oversetwidth
\newlength\underwidth

% Todonotes is useful during development; simply uncomment the next line
%    and comment out the line below the next line to turn off comments
%\usepackage[disable,textsize=tiny]{todonotes}
\usepackage[textsize=tiny]{todonotes}

% The \icmltitle you define below is probably too long as a header.
% Therefore, a short form for the running title is supplied here:
\icmltitlerunning{Provable Grokking in Ridge Regression}

\begin{document}

\twocolumn[
  \icmltitle{To Grok Grokking: Provable Grokking in Ridge Regression}

  % It is OKAY to include author information, even for blind submissions: the
  % style file will automatically remove it for you unless you've provided
  % the [accepted] option to the icml2026 package.

  % List of affiliations: The first argument should be a (short) identifier you
  % will use later to specify author affiliations Academic affiliations
  % should list Department, University, City, Region, Country Industry
  % affiliations should list Company, City, Region, Country

  % You can specify symbols, otherwise they are numbered in order. Ideally, you
  % should not use this facility. Affiliations will be numbered in order of
  % appearance and this is the preferred way.
  \icmlsetsymbol{equal}{*}

  \begin{icmlauthorlist}
    \icmlauthor{Mingyue Xu}{purdue}
    \icmlauthor{Gal Vardi}{weizmann}
    \icmlauthor{Itay Safran}{ben-gurion}
    % \icmlauthor{Firstname4 Lastname4}{sch}
    % \icmlauthor{Firstname5 Lastname5}{yyy}
    % \icmlauthor{Firstname6 Lastname6}{sch,yyy,comp}
    % \icmlauthor{Firstname7 Lastname7}{comp}
    % %\icmlauthor{}{sch}
    % \icmlauthor{Firstname8 Lastname8}{sch}
    % \icmlauthor{Firstname8 Lastname8}{yyy,comp}
    %\icmlauthor{}{sch}
    %\icmlauthor{}{sch}
  \end{icmlauthorlist}

  \icmlaffiliation{purdue}{Department of Computer Science, Purdue University, West Lafayette, IN, USA}
  \icmlaffiliation{weizmann}{Department of Computer Science and Applied Mathematics, Weizmann Institute of Science, Israel}
  \icmlaffiliation{ben-gurion}{Stein Faculty of Computer and Information Science, Ben-Gurion University of the Negev, Israel}

  \icmlcorrespondingauthor{Mingyue Xu}{xu1864@purdue.edu}
  % \icmlcorrespondingauthor{Firstname2 Lastname2}{first2.last2@www.uk}

  % You may provide any keywords that you find helpful for describing your
  % paper; these are used to populate the "keywords" metadata in the PDF but
  % will not be shown in the document
  \icmlkeywords{Grokking, Ridge Regression, Generalization, Learning Theory}

  \vskip 0.3in
]

% this must go after the closing bracket ] following \twocolumn[ ...

% This command actually creates the footnote in the first column listing the
% affiliations and the copyright notice. The command takes one argument, which
% is text to display at the start of the footnote. The \icmlEqualContribution
% command is standard text for equal contribution. Remove it (just {}) if you
% do not need this facility.

% Use ONE of the following lines. DO NOT remove the command.
% If you have no special notice, KEEP empty braces:
\printAffiliationsAndNotice{}  % no special notice (required even if empty)
% Or, if applicable, use the standard equal contribution text:
% \printAffiliationsAndNotice{\icmlEqualContribution}

\begin{abstract}
    We study \emph{grokking}---the onset of generalization long after overfitting---in a classical ridge regression setting. We prove end-to-end grokking results for learning over-parameterized linear regression models using gradient descent with weight decay. Specifically, we prove that the following stages occur: (i) the model overfits the training data early during training; (ii) poor generalization persists long after overfitting has manifested; and (iii) the generalization error eventually becomes arbitrarily small. Moreover, we show, both theoretically and empirically, that grokking can be amplified or eliminated in a principled manner through proper hyperparameter tuning. To the best of our knowledge, these are the first rigorous quantitative bounds on the generalization delay (which we refer to as the ``grokking time'') in terms of training hyperparameters. Lastly, going beyond the linear setting, we empirically demonstrate that our quantitative bounds also capture the behavior of grokking on non-linear neural networks. Our results suggest that grokking is not an inherent failure mode of deep learning, but rather a consequence of specific training conditions, and thus does not require fundamental changes to the model architecture or learning algorithm to avoid. 
\end{abstract}

\section{Introduction}
  \label{sec:inrtoduction}
By the standard of classical machine learning, overfitting the training data is usually believed to be harmful to generalization. To overcome this, regularization techniques such as early-stopping, dropout and weight decay have been developed. However, deep learning can sometimes exhibit counterintuitive phenomena that contravene this traditional wisdom. One of the most striking examples is ``grokking'' \citep{power2022grokking}, a phenomenon where generalization only starts improving long after the model achieves perfect training performance. The phenomenon of grokking has been extensively studied in recent years \citep{chughtai2023toy,tan2023understanding,notsawo2023predicting,fan2024deep}, and was also encountered in models that are different from neural networks \citep{blanc2020implicit,humayun2024deep,merrill2023tale}, as well as in large language models \citep{zhu2024critical,wang2024grokking,xulet} (see Subsection~\ref{subsec:related-works} for further discussion). But despite its increasing popularity, only a few prior works have established rigorous theoretical guarantees.

Several existing theoretical papers attribute the occurrence of grokking to a transition in the optimization dynamics from the lazy to the rich regime \citep{lyu2023dichotomy,mohamadi2024you,kumar2024grokking}. Among these works, \citet{lyu2023dichotomy} consider training homogeneous neural networks for both classification and regression problems using gradient flow with weight decay, and prove a sharp transition between the kernel and rich regimes. However, their technique only guarantees convergence to a KKT (Karush-Kuhn-Tucker) point, which is, in general, not sufficient for arguing global optimality, and their result does not provably imply grokking. In another work, \citet{mohamadi2024you} provide a theoretical foundation for grokking on the specific regression problem of modular addition. They consider training two-layer quadratic networks with an $\ell_{\infty}$ regularization, and show that grokking is a consequence of the transition from kernel-like behavior to the limiting behavior of gradient descent (GD). Yet, their analysis does not establish that GD converges to a solution with a small weight norm that generalizes well, despite empirically verifying this. To the best of our knowledge, the work closest to showing a provable grokking result is \citet{xu2023benign}. By studying a high-dimensional clustered XOR dataset in a binary classification setting, they show that after a catastrophic one-step overfitting---where generalization is no better than a random guess---the model eventually achieves perfect test accuracy. Still, they do not show that the onset of generalization is delayed beyond the first iteration of GD, which could begin as early as the second iteration. While many prior works attribute grokking to the transition between the lazy and rich regimes during optimization, a recent work by \citet{boursier2025theoretical} focused specifically on studying the role of weight decay in grokking, revealing that grokking can also arise from the transition between the ``ridgeless'' to the ``ridge'' regimes. They considered training using gradient flow with weight decay on a smooth loss objective and showed a two-phase behavior of the trajectory. Nevertheless, they do not prove that the unregularized solution does not generalize and the ridge solution does generalize, and thus their results do not imply provable grokking. Relatedly, \citet{notsawo2025grokking} study a general
loss-plus-regularizer framework, showing an early loss-minimization phase
followed by a regularization-driven phase on a time scale proportional to
$1/(\eta\lambda)$. Our work provides sharper end-to-end guarantees in the
specific setting of ridge regression.

Given the current theoretical gaps around grokking, our goal is to prove an end-to-end grokking guarantee demonstrating that: (i) the model overfits early, while test performance remains poor; (ii) poor generalization persists long after overfitting; and (iii) the optimization algorithm eventually converges to a model that achieves good generalization. See Figure~\ref{fig:gorkking-for-ridge-regression-and-full-trained-ReLU-neural-networks} for an illustration. To this end, we investigate the phenomenon of grokking within a classical teacher-student framework. We specifically consider linear regression, a classical statistical method that has been extensively studied over the centuries \citep{galton1886regression,tikhonov1963solution,hoerl1970ridge,uyanik2013study}. To address overfitting, fundamental regularization techniques such as Lasso and ridge regression have been developed and proven invaluable. Despite being a special case of overfitting, it is surprising that almost no previous work has focused on grokking in linear regression. 
While prior work has observed this phenomenon mainly in complex, non-linear architectures, our findings reveal that neither deep nor non-linear structural components are strictly necessary for grokking to occur, offering a rigorous, foundational framework for the phenomenon in purely linear settings.
To our best knowledge, the only theoretical work on this topic is \citet{levi2024grokking}, in which the authors analyze grokking in a linear regression setting by leveraging random matrix theory. Yet, despite the similar setting being studied, their analysis lacks formal guarantees. Apart from the fact that the results are non-rigorous, their work differs from ours in the following aspects. (1) They assume only Gaussian data, whereas our data distribution is more general; (2) their results consider the regimes where the feature dimension roughly equals the sample size, while we allow arbitrary over-parameterization; and (3) they do not include weight decay as we do.

In this paper, we consider an over-parameterized linear regression problem of learning a ground-truth realizable teacher function, by training a student linear model using randomly initialized GD, on a squared loss objective, and with an $\ell_{2}$ regularization. We provide the first end-to-end provable grokking result, and unlike \citet{xu2023benign}, we are able to prove that poor generalization persists beyond the point where we overfit. Our results hold for a family of teacher functions: any function that is realizable by a linear model over any fixed feature map. Moreover, we also demonstrate, both theoretically and empirically, that grokking can be fully and quantitatively controlled by hyperparameter tuning. Specifically, one way to effectively manipulate the delay before generalization begins is to tune the weight decay, which aligns with the empirical findings of \citet{lyu2023dichotomy}. To summarize the main novelty in this work, for the first time, (1) we show an end-to-end provable grokking result, while all prior works on grokking in neural networks or linear models did not establish such a comprehensive result; (2) we give a formal analysis of grokking in the fundamental setting of linear/ridge regression; and (3), we provide exact quantitative bounds on the grokking time that reveal how each hyperparameter affects grokking.

\begin{figure}[t]
  \begin{subfigure}{0.49\columnwidth}
     \includegraphics[width=\columnwidth, height=3.0cm]{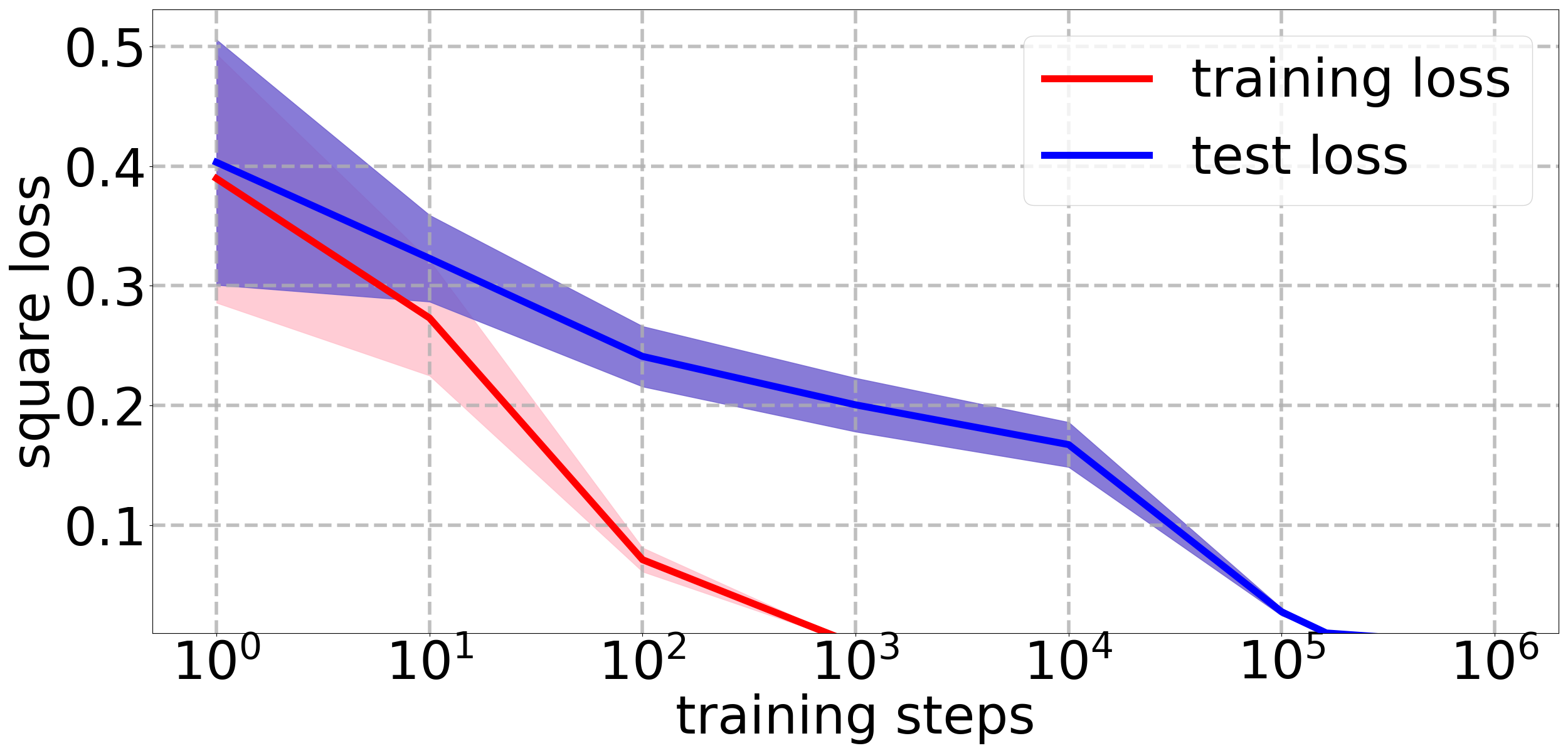}
  \end{subfigure}
  \hfill
  \begin{subfigure}{0.49\columnwidth}
     \includegraphics[width=\columnwidth, height=3.0cm]{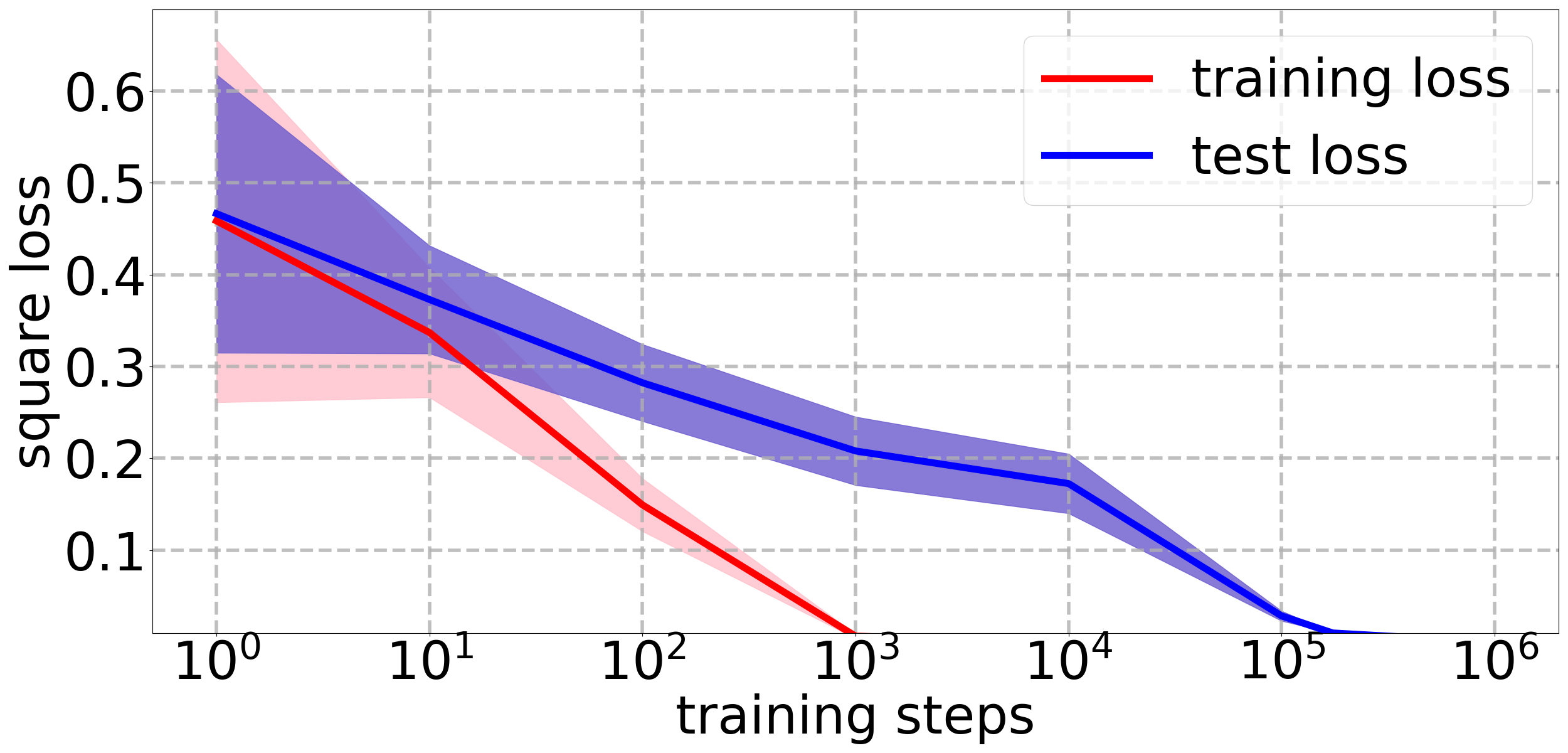}
  \end{subfigure}
\caption{Comparing training and test squared losses using GD with weight decay over 50 independent runs (with independent datasets and student initializations). \textbf{Left:} training a ridge regression model to learn the zero teacher; \textbf{Right:} training a two-layer ReLU neural network (training both layers) to learn the zero teacher. We scale the x-axis logarithmically, which is commonly adopted for plotting the grokking phenomenon \citep{power2022grokking,lyu2023dichotomy,xu2023benign,mohamadi2024you}. See Section~\ref{sec:experiments} for further details of the experimental setup.}
\label{fig:gorkking-for-ridge-regression-and-full-trained-ReLU-neural-networks}
\vspace{-5pt}
\end{figure}

% \subsection*{Our Contributions}
%   \label{subsec:our-contributions}
In a bit more detail, our main contributions are as follows:

\setlist{nolistsep}
\begin{itemize}
    \item We study an over-parameterized ridge regression problem (Section~\ref{sec:grokking-in-regression}), and show the first end-to-end provable grokking result (Theorem~\ref{thm:e2e-provable-grokking-rf-realizable-case}) for learning any realizable teacher function using gradient descent. Specifically, we prove the following separate components:
        \begin{itemize}
            \item We prove that when optimizing the $\ell_{2}$ regularized squared loss using gradient descent, the training (empirical) squared error decays at a fast convergence rate (Theorem~\ref{thm:training-performance-rf-realizable-case}).
            \item We provide a lower bound for the generalization error which admits a slower convergence rate than the one for the training error, implying long-term overfitting (Theorem~\ref{thm:overfitting-rf-realizable-case}).
            \item Lastly, we show that gradient descent will eventually reach a global minimum with a generalization guarantee (Theorem~\ref{thm:generalization-rf-realizable-case}).
        \end{itemize}
    \item We quantitatively state the sufficient conditions of the hyperparameters to realize grokking (Equations~\eqref{eq:sample-size-realizable-case}, \eqref{eq:width-realizable-case} and \eqref{eq:weight-decay-realizable-case}) and provide a rigorous lower bound on the grokking time in terms of model hyperparameters (Equations~\eqref{eq:grokking-time-realizable-case-t1} and \eqref{eq:grokking-time-realizable-case-t2}). We then analyze how different hyperparameters affect grokking, which is in keeping with our experimental verifications.
    \item We support our theoretical findings with empirical simulations that illustrate how grokking can be controlled in a principled manner (Section~\ref{sec:experiments}). 
    \item 
    Finally, we conduct experiments on non-linear neural networks, and demonstrate empirically that the dependencies of the grokking time on hyperparameters qualitatively match our provable predictions in linear regression.  
\end{itemize}

\subsection{Additional Related Work}
  \label{subsec:related-works}
The intriguing phenomenon of grokking was first coined by \citet{power2022grokking}, who observed it when training on algorithmic datasets for modular addition. However, the underlying behavior---where generalization emerges long after overfitting---had already been noted in earlier work by \citet{blanc2020implicit}, in the context of matrix sensing and simplified shallow neural networks. Following these seminal works, grokking has also been observed in other learning paradigms, which include learning sparse parities \citep{barak2022hidden,merrill2023tale}, learning the greatest common divisor of integers \citep{charton2023learning}, and image classification \citep{radhakrishnan2022mechanism}. 

Several works proposed thoughtful ideas to explain grokking, although none of them provided rigorous end-to-end grokking results. \citet{liu2022towards} offered an explanation via the lens of representation learning. \citet{thilak2022slingshot} attributed grokking to the use of adaptive optimizers---an optimization anomaly named the ``Slingshot Mechanism'' (cyclic phase transitions). \citet{liu2022omnigrok} identified a mismatch between training and test loss landscapes, which they called the ``LU mechanism'', and used it to explain certain aspects of grokking. \citet{nanda2023progress} revealed that grokking can arise from the gradual amplification of structured mechanisms encoded in the weights via progress measuring. \citet{davies2023unifying} hypothesized that grokking and double descent can be understood using the same pattern-learning model, and provided the first demonstration of model-wise grokking. \citet{merrill2023tale} studied the problem of training two-layer neural networks on a sparse parity task, and attributed grokking therein to the competition between dense and sparse subnetworks. \citet{gromov2023grokking} demonstrated that fully-connected two-layer networks exhibit grokking on modular arithmetic tasks without any regularization and attributed grokking to feature learning. \citet{miller2024grokking} studied grokking beyond neural networks, e.g.\ Bayesian models. \citet{beck2024grokking} explored the phenomenon of grokking in a logistic regression problem. They provided evidence that grokking may occur under some asymptotic assumptions on the hyperparameters, but they did not rigorously prove grokking. \citet{varma2023explaining} interpreted grokking from the perspective of circuit efficiency, and discovered two related phenomena named ``ungrokking'' and ``semi-grokking''. \citet{jeffares2024deep} provided empirical insights into grokking, by investigating a telescoping model that suggests that grokking may reflect a transition to a measurably benign overfitting regime during training. \citet{humayun2024deep} attributed grokking of deep neural networks to their delayed robustness. \citet{prieto2025grokking} demonstrated the dependence of grokking on regularization by showing that Softmax Collapse (i.e.\ floating point errors due to numerical instability) is responsible for the absence of grokking without regularization. 
We note that in addition to \citet{prieto2025grokking}, several other papers considered grokking without explicit regularization (e.g., \citet{xu2023benign,levi2024grokking}), but the grokking literature mostly focuses on settings with explicit regularization, as in this paper.
\citet{mallinar2024emergence} revealed that grokking is not specific to neural networks nor to GD-based optimization methods by showing that grokking occurs when learning modular arithmetic with Recursive Feature Machines (RFM). \citet{vzunkovivc2024grokking} analyzed grokking in linear classification, and showed quantitative bounds on the grokking time, but they did not provide rigorous proofs. \citet{yunis2024approaching} showed empirically that grokking is connected to rank minimization, namely, that the sudden drop in the test loss coincides precisely with the onset of low-rank behavior in the singular values of the weight matrices. \citet{jeffaresposition} argue that the settings in which grokking and other counterintuitive phenomena in deep learning can be shown to occur are limited in practice, as they may only appear in very specific situations.

\subsection{Organization}
  \label{subsec:organization}
The remaining of the paper is organized as follows. In Section~\ref{sec:preliminaries}, we formally describe the problem setup and introduce necessary notations. In Section~\ref{sec:main-results}, we provide an informal theorem to convey our main results in an accessible manner. In Section~\ref{sec:grokking-in-regression}, we present our main results of end-to-end provable grokking in ridge regression. In Section~\ref{sec:experiments}, we support our theory with empirical verifications. In Section~\ref{sec:discussion-and-future-work}, we conclude the paper and propose some interesting future research directions. All proofs and additional experiments are deferred to the appendix.

\section{Preliminaries}
  \label{sec:preliminaries}
\textbf{Notations.} We use bold-face letters to denote vectors such as $\vx=(x_{1},\ldots,x_{d})\in\Rd$ and use bold-face capital letters to denote matrices such as $\vX=(\vx_{1},\ldots,\vx_{n})^{\top}\in\R^{n\times d}$. We denote by $\vI_{d}$ the $d$-dimensional identity matrix. For a vector $\vx$, we denote by $\lVert\vx\rVert_{2}$ its $\ell_{2}$ norm. For a matrix $\vX$, we denote by $\lVert\vX\rVert_{\mathrm{op}}$ and $\lVert\vX\rVert_{F}$ its operator and Frobenius norms, respectively. For a symmetric matrix $\vA$, let $\lambda_{\mathrm{min}}(\vA)$ and $\lambda_{\mathrm{min}}^{+}(\vA)$ denote its smallest eigenvalue and its smallest positive eigenvalue, respectively. We use $\mathbbm{1}\{\cdot\}$ to denote the indicator function, which equals one when its Boolean input is true and zero otherwise. We use standard asymptotic notations $O(\cdot)$ and $\Omega(\cdot)$ to hide numerical constant factors.

\noindent\textbf{Linear regression and data generation.} We consider a fundamental regression problem of learning a teacher function $\teacher$ where $\vx\in\Rd$. The training data $\datasetregression$ are generated according to the following: the input $\vx\sim\mathcal{D}_{\vx}$ follows some marginal distribution $\mathcal{D}_{\vx}$ (which we do not specify), and is exactly labeled by the teacher function, i.e.\ $y=N^{*}(\vx)$. We train a student linear regression model 
\begin{equation*}
    \student=\langle\vtheta,\vphi(\vx)\rangle ,
\end{equation*}
where $\vphi(\vx):\R^{d}\mapsto\R^{m}$ is some fixed feature map and $\vtheta\in\R^{m}$ is the trainable parameters, to learn the teacher. We assume that the teacher is realizable, i.e., $N^{*}(\vx)=\langle\vtheta^{*},\vphi(\vx)\rangle$ for some $\vtheta^{*}\in\R^{m}$. 

\noindent\textbf{Ridge regression.} In a regression problem, it is natural to train the model by minimizing the empirical mean squared loss $\frac{1}{n}\sum_{i=1}^{n}(N(\vx_{i};\vtheta)-N^{*}(\vx_{i}))^{2}$ and evaluate the learning performance on the population squared loss $\E_{\vx}[(\student-\teacher)^{2}]$. We use ridge regression by appending an $\ell_{2}$-regularization term to the loss function. Let $L_{n}(\vtheta;\lambda)$ denote the regularized mean squared training error, let $L_{n}(\vtheta)$ denote the unregularized mean squared loss, and let $L_{\lambda}(\vtheta)$ denote the $\ell_2$ penalty. Our training objective is 
\begin{align}
  \label{eq:training-objective}
    L_{n}&(\vtheta;\lambda) = L_{n}(\vtheta)+L_{\lambda}(\vtheta) \nonumber \\ 
    = &\frac{1}{2n}\sum_{i=1}^{n}\left(N(\vx_{i};\vtheta)-N^{*}(\vx_{i})\right)^{2} + \frac{\lambda}{2}\lVert\vtheta\rVert_{2}^{2} ,
\end{align}
where $\lambda>0$ is the regularization parameter. We scale the loss objective by a constant factor of $1/2$ for the simplicity of gradient analysis. We optimize Equation~\eqref{eq:training-objective} using the vanilla Gradient Descent (GD) algorithm, which updates $\vtheta$ via 
\begin{equation*}
    \vtheta^{(t+1)}=\vtheta^{(t)}-\eta\nabla_{\vtheta}L_{n}(\vtheta^{(t)};\lambda) ,\;\; \forall t\in\naturalnumber ,
\end{equation*}
where $\eta>0$ is the step-size or learning rate. 
While we focus on GD with a fixed step size, we note that our analysis should also hold for GD with a decaying step size and for gradient flow.

When $\student=\langle\vtheta,\vphi(\vx)\rangle$ and $N^{*}(\vx)=\langle\vtheta^{*},\vphi(\vx)\rangle$, our loss objective in Equation~\eqref{eq:training-objective} can be simplified to
\begin{equation*}
    L_{n}(\vtheta;\lambda) = \frac{1}{2n}\sum_{i=1}^{n}\left(\left\langle\vtheta-\vtheta^{*}, \vphi(\vx)\right\rangle\right)^{2}+\frac{\lambda}{2}\lVert\vtheta\rVert_{2}^{2} ,
\end{equation*}
and each GD iteration realizes an update in the form of
\begin{align}
  \label{eq:gd-update-rf}
    &\vtheta^{(t+1)} = \vtheta^{(t)} - \eta\nabla_{\vtheta}L_{n}(\vtheta^{(t)};\lambda) \nonumber \\
    &= \vtheta^{(t)} - \frac{\eta}{n}\sum_{i=1}^{n}\left\langle\vtheta^{(t)}-\vtheta^{*}, \vphi(\vx_{i})\right\rangle\vphi(\vx_{i}) - \eta\lambda\vtheta^{(t)} 
\end{align}
To present our results, we introduce the following additional notations. Let $L(\vtheta)=\E_{\vx}[(N(\vx;\vtheta)-N^{*}(\vx))^{2}]$ denote the generalization (population) squared loss. We define the empirical feature map as $\vPhi=(\vphi(\vx_{1}),\ldots,\vphi(\vx_{n}))^{\top}\in\R^{n\times m}$. Let $L=\frac{1}{n}\lVert\vPhi\rVert_{F}^{2}=\frac{1}{n}\sum_{i=1}^{n}\left\lVert\vphi(\vx_{i})\right\rVert_{2}^{2}$ denote the (average) norm of the feature vectors. Let $\vSigma=\E_{\vx}[\vphi(\vx)\vphi(\vx)^{\top}]\in\R^{m\times m}$ denote the feature population covariance matrix.

\section{Informal Results}
  \label{sec:main-results}
To demonstrate a clear manifestation of grokking, our main theorem is formulated as follows: GD achieves a training error smaller than $\epsilon>0$ in an early stage of training (at step $t_{1}$), while a poor generalization error well above some constant $c>0$ persists far beyond the point of overfitting (until step $t_{2}>t_{1}$), yet eventually a generalization error smaller than $\epsilon$ is reached. In particular, this establishes grokking when $c\geq\epsilon$. Our results give a lower bound on $(t_{2}-t_{1})$ in terms of both $\epsilon$ and $c$, where this difference can be made arbitrarily large via proper hyperparameter tuning for any choice of $\epsilon$ and $c\geq\epsilon$. Specifically, let $t_{1}$ be the largest number of training steps such that the empirical squared loss is above $\epsilon$, and let $t_{2}$ be the smallest number of training steps for which the generalization loss is below $c$. Our main results are summarized by the following informal theorem, establishing end-to-end provable grokking in ridge regression.

\begin{itheorem}  [\textbf{End-to-end provable grokking for ridge regression}]
  \label{ithm:e2e-provable-grokking}
Consider a linear regression problem of training a student model $N(\vx;\vtheta)=\langle\vtheta,\vphi(\vx)\rangle$ to learn a realizable teacher $N^{*}(\vx)=\langle\vtheta^{*},\vphi(\vx)\rangle$, where $\vtheta\in\R^{m}$ is the trainable parameter and $\vtheta^{*}$ is unknown. Suppose that the training is done by optimizing an MSE over $n$ samples using randomly initialized gradient descent at $\vtheta^{(0)}$ with weight decay parameter $\lambda$. 

Under the initialization $\vtheta^{(0)}\sim\N(\bm{0},\nu^{2}\vI_{m})$ for some $\nu^{2}>0$, and some mild distributional assumptions on the feature map $\vphi(\vx)$, we have with high probability that for a sufficiently large sample size $n$, and a sufficiently large feature space dimensionality $m$, $(t_{2}-t_{1})$ can be lower bounded by an arbitrarily large value by choosing a sufficiently small weight decay $\lambda$.
\end{itheorem}

\section{Grokking in Ridge Regression}
  \label{sec:grokking-in-regression}
We now formalize our end-to-end provable grokking results for ridge regression problems. For any realizable teacher function, we provide a quantitative lower bound for the grokking time $(t_{2}-t_{1})$ in terms of training hyperparameters, as well as bounds on the hyperparameters that suffice to realize grokking. Based on these bounds, we analyze how the different hyperparameters affect grokking, which is corroborated in our experiments in Section~\ref{sec:experiments}. 

\subsection{Warmup: Grokking with the Zero Teacher}
  \label{subsec:grokking-zero-teacher}
We start by considering a simple scenario where the target teacher is the zero function. When $N^{*}(\vx)=0$, we have
\begin{align*}
    &L_{n}(\vtheta;\lambda) = \frac{1}{2n}\sum_{i=1}^{n}(N(\vx_{i};\vtheta))^{2} + \frac{\lambda}{2}\lVert\vtheta\rVert_{2}^{2} ; \\
    &\vtheta^{(t+1)}=\vtheta^{(t)}-\frac{\eta}{n}\sum_{i=1}^{n}\langle\vtheta^{(t)},\vphi(\vx_{i})\rangle\vphi(\vx_{i})-\eta\lambda\vtheta^{(t)} .
\end{align*}
The following theorem describes an end-to-end provable grokking result for learning the zero teacher in a ridge regression setting using gradient descent. To develop some initial intuition on what causes grokking, we state the aforementioned three stages of grokking in terms of the convergence rates of the training/generalization squared losses, in contrast to the format of our Theorem~\ref{thm:e2e-provable-grokking-rf-realizable-case} which appears later. 

\begin{theorem}  [\textbf{End-to-end provable grokking for the zero teacher}]
  \label{thm:e2e-provable-grokking-rf-zero-teacher}
Suppose that the teacher function is $N^{*}(\vx)=0$. Consider training a student $N(\vx;\vtheta)=\langle\vtheta,\vphi(\vx)\rangle$ to learn the teacher via optimizing the ridge regression objective in Equation~\eqref{eq:training-objective} using randomly initialized GD with $\vtheta^{(0)}\sim\N(\bm{0},\nu^{2}\vI_{m})$. Assume that $\eta\leq 2/(L+2\lambda)$. Then for any $t\in\naturalnumber$, we have
\begin{itemize}
    \item[$\mathrm{(i)}$] $L_{n}(\vtheta^{(t)}) \leq \frac{L}{2}\cdot(1-\frac{1}{n}\eta\lambda_{\mathrm{min}}^{+}(\vPhi^{\top}\vPhi)-\eta\lambda)^{2t}\cdot\lVert\vtheta^{(0)}\rVert_{2}^{2};$
    \item[$\mathrm{(ii)}$] With probability at least $1-2e^{-(m-n)/32}$, 
    
    $L(\vtheta^{(t)}) \geq \lambda_{\mathrm{min}}(\vSigma)\cdot(1-\eta\lambda)^{2t}\cdot\frac{(m-n)\nu^{2}}{2};$
    \item[$\mathrm{(iii)}$] $\lVert\vtheta^{(t)}\rVert_{2}^{2} \leq (1-\eta\lambda)^{2t}\cdot\lVert\vtheta^{(0)}\rVert_{2}^{2}.$
\end{itemize}
\end{theorem}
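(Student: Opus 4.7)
The plan is to exploit the fact that with a zero teacher the GD recursion is strictly linear: writing $\vM := (1-\eta\lambda)\vI_{m} - \frac{\eta}{n}\vPhi^{\top}\vPhi$, we have $\vtheta^{(t+1)} = \vM\vtheta^{(t)}$ and hence $\vtheta^{(t)} = \vM^{t}\vtheta^{(0)}$. The key structural observation is the orthogonal decomposition $\R^{m} = \ker(\vPhi)\oplus\mathrm{row}(\vPhi)$, and both subspaces are $\vM$-invariant: on $\ker(\vPhi)$ the operator $\vM$ acts as the scalar $1-\eta\lambda$, while on $\mathrm{row}(\vPhi)$ it is diagonalized by the right singular vectors of $\vPhi$, with eigenvalues $1-\eta\lambda - \eta\sigma_{i}^{2}/n$ indexed by the positive eigenvalues $\sigma_{i}^{2}$ of $\vPhi^{\top}\vPhi$. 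Writing $\vtheta^{(0)} = \vtheta^{(0)}_{K} + \vtheta^{(0)}_{R}$ for the decomposition of the initialization, this yields $\vtheta^{(t)} = (1-\eta\lambda)^{t}\vtheta^{(0)}_{K} + \vM^{t}\vtheta^{(0)}_{R}$.

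Part (iii) then reduces to showing $\lVert\vM\rVert_{\mathrm{op}} \leq 1-\eta\lambda$, which is immediate: under $\eta\leq 2/(L+2\lambda)$ together with $\sigma_{i}^{2} \leq \lVert\vPhi\rVert_{F}^{2} = nL$, every eigenvalue $1-\eta\lambda - \eta\sigma_{i}^{2}/n$ lies in $[-(1-\eta\lambda),\, 1-\eta\lambda]$. For part (i), I would write $L_{n}(\vtheta^{(t)}) = \frac{1}{2n}\lVert\vPhi\vtheta^{(t)}\rVert_{2}^{2}$, use that $\vPhi\vtheta^{(0)}_{K}=0$, and apply the SVD $\vPhi = \vU\vS\vV^{\top}$ to obtain
\[
\lVert\vPhi\vtheta^{(t)}\rVert_{2}^{2} = \sum_{i}\sigma_{i}^{2}\bigl(1-\eta\lambda - \eta\sigma_{i}^{2}/n\bigr)^{2t}(\vV^{\top}\vtheta^{(0)})_{i}^{2}.
\]
Bounding each $\sigma_{i}^{2} \leq nL$, bounding $(1-\eta\lambda - \eta\sigma_{i}^{2}/n)^{2t}$ by its value at $\sigma_{i}^{2} = \lambda_{\mathrm{min}}^{+}(\vPhi^{\top}\vPhi)$, and using $\sum_{i}(\vV^{\top}\vtheta^{(0)})_{i}^{2} \leq \lVert\vtheta^{(0)}\rVert_{2}^{2}$, gives the claimed rate.

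For part (ii), the lower bound is driven entirely by the slowly decaying kernel component. Since $\vSigma$ is PSD, $L(\vtheta^{(t)}) = (\vtheta^{(t)})^{\top}\vSigma\vtheta^{(t)} \geq \lambda_{\mathrm{min}}(\vSigma)\lVert\vtheta^{(t)}\rVert_{2}^{2} \geq \lambda_{\mathrm{min}}(\vSigma)\lVert\vtheta^{(t)}_{K}\rVert_{2}^{2} = \lambda_{\mathrm{min}}(\vSigma)(1-\eta\lambda)^{2t}\lVert\vtheta^{(0)}_{K}\rVert_{2}^{2}$. Since the initialization is independent of the data, conditioning on $\vPhi$ the projection $\vtheta^{(0)}_{K} = P_{K}\vtheta^{(0)}$ satisfies $\lVert\vtheta^{(0)}_{K}\rVert_{2}^{2}/\nu^{2} \sim \chi^{2}_{d_{K}}$ with $d_{K} := \dim\ker(\vPhi) \geq m-n$. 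A standard Laurent--Massart lower-tail bound then gives $\lVert\vtheta^{(0)}_{K}\rVert_{2}^{2} \geq (m-n)\nu^{2}/2$ with probability at least $1 - 2e^{-(m-n)/32}$, completing (ii).

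The most delicate step is the monotonicity claim used in (i): I need to ensure that $(1-\eta\lambda - \eta\sigma^{2}/n)^{2t}$ is indeed maximized over $\sigma^{2}\in\{\sigma_{i}^{2}\}$ at $\sigma^{2} = \lambda_{\mathrm{min}}^{+}(\vPhi^{\top}\vPhi)$ even when some eigenvalues are large enough to make the base negative. This requires a short case analysis that ties $\sigma_{i}^{2}\leq nL$ to $\eta\leq 2/(L+2\lambda)$ to control $|1-\eta\lambda - \eta\sigma_{i}^{2}/n|$. Everything else---the spectral decomposition, the invariant-subspace splitting, the SVD manipulation, and the $\chi^{2}$ concentration---is routine.
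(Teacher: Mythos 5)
Your proposal is correct, and for parts (i) and (iii) it takes a more directly spectral route than the paper, while part (ii) is essentially the paper's argument. The paper uses the same splitting of $\vtheta^{(t)}$ into its components on the row space and null space of $\vPhi$, but it proves (i) by combining the $L$-Lipschitzness of $\nabla L_{n}$ (a quadratic upper bound around $\vtheta^{*}=\bm{0}$) with the contraction of the parallel component, and proves (iii) by expanding $\lVert\vtheta^{(t+1)}\rVert_{2}^{2}$ and applying Cauchy--Schwarz to $\langle\nabla L_{n}(\vtheta^{(t)}),\vtheta^{(t)}\rangle$, which is where $\eta\leq 2/(L+2\lambda)$ enters. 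You instead diagonalize the iteration matrix $\vM=(1-\eta\lambda)\vI_{m}-\frac{\eta}{n}\vPhi^{\top}\vPhi$ once and read off both (i), via the SVD expansion of $\lVert\vPhi\vtheta^{(t)}\rVert_{2}^{2}$ together with $\sigma_{\max}^{2}\leq\lVert\vPhi\rVert_{F}^{2}=nL$, and (iii), via $\lVert\vM\rVert_{\mathrm{op}}\leq 1-\eta\lambda$, for which $\eta\leq 2/(L+2\lambda)$ suffices. For (ii) both arguments lower bound $L(\vtheta^{(t)})\geq\lambda_{\mathrm{min}}(\vSigma)\lVert\vtheta^{(t)}_{K}\rVert_{2}^{2}$, use the exact rate $(1-\eta\lambda)^{t}$ on the null-space component, and apply a $\chi^{2}$ lower tail in dimension at least $m-n$ (the paper's chi-squared concentration lemma with deviation $1/2$ gives the same $1-2e^{-(m-n)/32}$ as Laurent--Massart). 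Your route buys a single unified recursion and makes explicit where the step-size condition is used; the paper's smoothness argument is the template it reuses later in the realizable case.

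On the one step you deferred: bounding $|1-\eta\lambda-\eta\sigma_{i}^{2}/n|$ by $|1-\eta\lambda-\eta\lambda_{\mathrm{min}}^{+}(\vPhi^{\top}\vPhi)/n|$ when the base can be negative is indeed needed, but tying it only to $\sigma_{i}^{2}\leq nL$ is not quite enough, since that yields $|1-\eta\lambda-\eta\sigma_{i}^{2}/n|\leq\max\{1-\eta\lambda-\eta\lambda_{\mathrm{min}}^{+}/n,\ \eta\lambda+\eta L-1\}$ and the second term need not be dominated by the first under $\eta\leq 2/(L+2\lambda)$ alone. The clean fix is a trace refinement: if $\vPhi^{\top}\vPhi$ has at least two positive eigenvalues, then $\sigma_{i}^{2}+\lambda_{\mathrm{min}}^{+}(\vPhi^{\top}\vPhi)\leq\mathrm{tr}(\vPhi^{\top}\vPhi)=nL$, so $\eta\lambda+\eta\sigma_{i}^{2}/n-1\leq 1-\eta\lambda-\eta\lambda_{\mathrm{min}}^{+}/n$ follows directly from $\eta(L+2\lambda)\leq 2$; if the rank is at most one, the only positive eigenvalue is $\lambda_{\mathrm{min}}^{+}$ itself and the claim is trivial (and one checks that a negative base at $\lambda_{\mathrm{min}}^{+}$ can only occur in that rank-one case under the step-size condition). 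The even exponent $2t$ then gives the stated bound. It is worth noting that the paper's own proof asserts the contraction with base $1-\eta\lambda_{\mathrm{min}}^{+}(\vPhi^{\top}\vPhi)/n-\eta\lambda$ without addressing the possibly negative eigenvalues of the iteration matrix, so your writeup is, if anything, the more careful one on this point.
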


Studying the zero teacher as a warmup yields the following advantages. First, choosing the zero teacher enables us to build a much simpler generalization guarantee. Clearly, item (iii) above implies that GD eventually converges to the solution $\vtheta^{*}=\bm{0}$. Since $N^{*}(\vx)=0$, this yields generalization. Additionally, choosing the zero teacher allows us to derive matching lower (ii) and upper (iii) bounds for the generalization loss in terms of the convergence rate.

More importantly, Theorem~\ref{thm:e2e-provable-grokking-rf-zero-teacher} reveals that the training and test losses can decrease at different rates, and the potentially considerable discrepancy between these rates causes grokking. Specifically, given the empirical feature map $\vPhi^{\top}\vPhi$, a sufficiently small regularization $\lambda$ can only have a negligible effect on the convergence rate of the training MSE. This is because the convergence rate of $(1-\frac{1}{n}\eta\lambda_{\mathrm{min}}^{+}(\vPhi^{\top}\vPhi)-\eta\lambda)^{2t}$ will be dominated by the term $(1-\frac{1}{n}\eta\lambda_{\mathrm{min}}^{+}(\vPhi^{\top}\vPhi))^{2t}$ when $\lambda$ is sufficiently small. However, decreasing $\lambda$ can make the generalization convergence rate $(1-\eta\lambda)^{2t}$ arbitrarily slow. In conclusion, Theorem~\ref{thm:e2e-provable-grokking-rf-zero-teacher} establishes a provable instance of grokking whenever $m\gg n$, $\lambda\ll\frac{1}{n}\lambda_{\mathrm{min}}^{+}(\vPhi^{\top}\vPhi)$ and $\lambda\rightarrow0$.

\subsection{Grokking with Realizable Teachers}
  \label{subsec:grokking-realizable-teacher}
For the most general setting studied in this paper, we consider the realizable case where the ground-truth teacher is $N^{*}(\vx)=\langle\vtheta^{*},\vphi(\vx)\rangle$ for some unknown $\vtheta^{*}\in\R^{m}$, and the student is $N(\vx;\vtheta)=\langle\vtheta,\vphi(\vx)\rangle$ where $\vtheta\in\R^{m}$ is the vector of trainable parameters. We show that for any teacher, there is a realization of training hyperparameters that results in an arbitrarily long grokking time. Our main result is the following.

\begin{theorem}  [\textbf{End-to-end provable grokking for realizable ridge regression}]
  \label{thm:e2e-provable-grokking-rf-realizable-case}
Assume a realizable teacher function $N^{*}(\vx)=\langle\vtheta^{*},\vphi(\vx)\rangle$ for some $\vtheta^{*}\in\R^{m}$. Assume the boundedness of the feature map, i.e.\ $\lVert\vphi(\vx)\rVert_{2}\leq b$ for any $\vx$ and some $b>0$. Consider training a student $N(\vx;\vtheta)=\langle\vtheta,\vphi(\vx)\rangle$ to learn the teacher via optimizing the ridge regression objective in Equation~\eqref{eq:training-objective} using randomly initialized GD with $\vtheta^{(0)}\sim\N(\bm{0},\nu^{2}\vI_{m})$. For all $\epsilon>0$, let $c\geq\epsilon$ be arbitrary. We define 
\begin{equation*}
    t_{1} := t_{1}(\epsilon) := \max\left(\left\{t\in\naturalnumber: L_{n}(\vtheta^{(t)}) \geq \epsilon\right\}\right)
\end{equation*}
and
\begin{equation*}
    t_{2} := t_{2}(c) := \min\left(\left\{t\in\naturalnumber: L(\vtheta^{(t)})\leq c\right\}\right) .
\end{equation*}
For any $\delta\in(0,1)$, assume a sufficiently large sample size:
\begin{equation}
  \label{eq:sample-size-realizable-case}
    n = \Omega\left(\frac{b^{4}\lVert\vtheta^{*}\rVert_{2}^{4}}{\epsilon^{2}}\log\left(\frac{1}{\delta}\right)\right) ,
\end{equation}
a sufficiently large dimensionality of the feature map:
\begin{equation}
  \label{eq:width-realizable-case}
    m = n + \Omega\left(\max\left\{\log\left(\frac{1}{\delta}\right), \frac{\lVert\vtheta^{*}\rVert_{2}^{2}}{\nu^{2}}, \frac{c}{\lambda_{\mathrm{min}}(\vSigma)\nu^{2}}\right\}\right) ,
\end{equation}
and finally, a sufficiently small weight decay:
\begin{equation}
  \label{eq:weight-decay-realizable-case}
    \lambda = O\left(\min\left\{\frac{\epsilon}{\lVert\vtheta^{*}\rVert_{2}^{2}}, \frac{b\sqrt{\epsilon}}{\lVert\vtheta^{*}\rVert_{2}}\right\}\right) .
\end{equation}
Then, if $\eta<1/(\lambda+b^{2})$, we have with probability at least $1-\delta$ that
\begin{equation}
  \label{eq:grokking-time-realizable-case-t1}
    t_{1} \leq \frac{n\ln\left(\frac{6
    b^{2}\lVert\vtheta^{(0)}\rVert_{2}^{2}}{\epsilon}\right)}{2\eta\lambda_{\mathrm{min}}^{+}(\vPhi^{\top}\vPhi)}
\end{equation}
and
\begin{equation}
  \label{eq:grokking-time-realizable-case-t2}
    t_{2} \geq \frac{\ln\left(\frac{(m-n)\nu^{2}}{2}\left(\sqrt{\frac{c}{\lambda_{\mathrm{min}}(\vSigma)}} + \lVert\vtheta^{*}\rVert_{2}\right)^{-2}\right)}{4\eta\lambda} .
\end{equation}
Moreover, $L(\vtheta^{(t)})\leq\epsilon$ for all sufficiently large $t$.
\end{theorem}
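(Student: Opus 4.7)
The plan is to assemble three ingredients flagged in the paper's roadmap---a geometric upper bound on the training loss (Theorem~\ref{thm:training-performance-rf-realizable-case}), a geometric lower bound on the population loss that decays only at the weight-decay rate (Theorem~\ref{thm:overfitting-rf-realizable-case}), and convergence of the trajectory to a ridge estimator with small test error (Theorem~\ref{thm:generalization-rf-realizable-case})---and then convert per-iterate bounds into bounds on $t_{1}$ and $t_{2}$ by algebra. The sample-size, width, and weight-decay conditions in Equations~\eqref{eq:sample-size-realizable-case}--\eqref{eq:weight-decay-realizable-case} are calibrated so that these three ingredients hold jointly with probability at least $1-\delta$, so the skeleton of the proof is a union bound over three high-probability events followed by inverting monotone decay rates for $t$.

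For the upper bound on $t_{1}$, I would start from Theorem~\ref{thm:training-performance-rf-realizable-case}, which I expect to give a bound of the form
\begin{equation*}
    L_{n}(\vtheta^{(t)}) \le B\cdot\bigl(1 - \eta\lambda - \eta\lambda_{\mathrm{min}}^{+}(\vPhi^{\top}\vPhi)/n\bigr)^{2t},
\end{equation*}
where $B$ is a pre-factor controlled by $b$, $\lVert\vtheta^{(0)}\rVert_{2}$, and $\lVert\vtheta^{*}\rVert_{2}$, plus a bias term of order $\lambda\lVert\vtheta^{*}\rVert_{2}^{2}$ reflecting the gap between the unregularized and regularized minimizers. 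The step-size hypothesis $\eta<1/(\lambda+\lambda_{\mathrm{min}}^{+}(\vPhi^{\top}\vPhi)/n)$ keeps the contraction factor in $(0,1)$; the weight-decay hypothesis $\lambda=O(\epsilon/\lVert\vtheta^{*}\rVert_{2}^{2})$ absorbs the bias into $\epsilon/2$; and using $1-x\le e^{-x}$ to invert the geometric decay for the remaining $\epsilon/2$ target rearranges into Equation~\eqref{eq:grokking-time-realizable-case-t1}, with the constant ``$6$'' inside the logarithm absorbing the triangle-inequality constants. Since the right-hand side above is monotone decreasing in $t$, any $t$ for which it drops below $\epsilon$ is automatically an upper bound for the \emph{last} iterate at which the training loss exceeds $\epsilon$.

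The lower bound on $t_{2}$ is the heart of the statement and uses Theorem~\ref{thm:overfitting-rf-realizable-case}. I would track the component of $\vtheta^{(t)}$ inside an $(m-n)$-dimensional subspace in the kernel of $\vPhi$, where the GD update in Equation~\eqref{eq:gd-update-rf} reduces to scalar multiplication by $(1-\eta\lambda)$. Because $\vtheta^{(0)}\sim\N(\bm{0},\nu^{2}\vI_{m})$, standard $\chi^{2}$ concentration gives that the initial norm of this component is at least $\Omega(\sqrt{m-n}\,\nu)$ with high probability, after which $L(\vtheta^{(t)})\ge\lambda_{\mathrm{min}}(\vSigma)\lVert\vtheta^{(t)}-\vtheta^{*}\rVert_{2}^{2}$ lets me pass from parameter norm to population loss; a triangle inequality $\lVert\vtheta^{(t)}-\vtheta^{*}\rVert_{2}\ge\lVert\vtheta^{(t)}\rVert_{2}-\lVert\vtheta^{*}\rVert_{2}$ then produces the additive $\lVert\vtheta^{*}\rVert_{2}$ bias visible inside Equation~\eqref{eq:grokking-time-realizable-case-t2}. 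Rearranging $L(\vtheta^{(t)})>c$ for $t$, using $-\ln(1-\eta\lambda)\le 2\eta\lambda$, and noting that Equation~\eqref{eq:width-realizable-case} is exactly what makes the initial kernel norm dominate both $\lVert\vtheta^{*}\rVert_{2}$ and $\sqrt{c/\lambda_{\mathrm{min}}(\vSigma)}$, yields the claimed bound. I expect this step to be the main obstacle: one has to decouple the dynamics along the range and the near-kernel of $\vPhi^{\top}\vPhi/n$ cleanly, and the teacher direction $\vtheta^{*}$ couples into the residual so that the natural fixed-point $\vtheta^{*}_{\lambda}=(\lambda\vI_{m}+\vPhi^{\top}\vPhi/n)^{-1}\vPhi^{\top}\vPhi\vtheta^{*}/n$ and not $\vtheta^{*}$ is the one that GD contracts toward.

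Finally, for the ``$L(\vtheta^{(t)})\le\epsilon$ for all sufficiently large $t$'' claim, I would invoke Theorem~\ref{thm:generalization-rf-realizable-case}: iterating the linear recursion in Equation~\eqref{eq:gd-update-rf} shows that $\vtheta^{(t)}\to\vtheta^{*}_{\lambda}$, so it suffices to bound $L(\vtheta^{*}_{\lambda})\le\epsilon$. This is where the sample-size condition $n=\Omega(b^{4}\lVert\vtheta^{*}\rVert_{2}^{4}/\epsilon^{2})$ and the second branch of the weight-decay condition $\lambda=O(b\sqrt{\epsilon}/\lVert\vtheta^{*}\rVert_{2})$ come in: matrix concentration controls $\vPhi^{\top}\vPhi/n$ against $\vSigma$, after which the bias $\lambda(\lambda\vI_{m}+\vPhi^{\top}\vPhi/n)^{-1}\vtheta^{*}$ is small in the $\vSigma$-norm. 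A final union bound over the three high-probability events produced by the three intermediate theorems delivers overall probability $1-\delta$ and completes the proof.
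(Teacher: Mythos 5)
Your handling of $t_{1}$ and $t_{2}$ is essentially the paper's own argument: the same decomposition into the row space of $\vPhi$ and its orthogonal complement, contraction of the parallel component of $\vtheta^{(t)}-\vtheta^{*}_{\lambda}$ at rate $1-\eta\lambda_{\mathrm{min}}^{+}(\vPhi^{\top}\vPhi)/n-\eta\lambda$, decay of the orthogonal component at rate $1-\eta\lambda$, chi-squared concentration of $\lVert\vtheta^{(0)}_{\perp}\rVert_{2}$, the lower bound $L(\vtheta^{(t)})\geq\lambda_{\mathrm{min}}(\vSigma)\bigl(\lVert\vtheta^{(t)}_{\perp}\rVert_{2}-\lVert\vtheta^{*}\rVert_{2}\bigr)^{2}$ via the triangle inequality, and the logarithm inversions $\ln(1-x)\leq-x$ and $\ln(1-\eta\lambda)\geq-2\eta\lambda$, assembled by a union bound. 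One bookkeeping slip: the branch $\lambda=O(b\sqrt{\epsilon}/\lVert\vtheta^{*}\rVert_{2})$ is not consumed by the generalization step, as you claim; in the paper it is needed in the training-loss convergence to absorb the cross term $2\lambda\bigl(1-\eta\lambda_{\mathrm{min}}^{+}(\vPhi^{\top}\vPhi)/n-\eta\lambda\bigr)^{t}\lVert\vtheta^{(0)}\rVert_{2}\lVert\vtheta^{*}\rVert_{2}$, which your ``geometric term plus bias of order $\lambda\lVert\vtheta^{*}\rVert_{2}^{2}$'' description omits, while $\lambda=O(\epsilon/\lVert\vtheta^{*}\rVert_{2}^{2})$ handles the additive bias $L_{n}(\vtheta^{*}_{\lambda})\leq\tfrac{\lambda}{2}\lVert\vtheta^{*}\rVert_{2}^{2}$. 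This is minor.

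The genuine gap is in your route to the final claim $L(\vtheta^{(t)})\leq\epsilon$. Convergence $\vtheta^{(t)}\to\vtheta^{*}_{\lambda}$ is fine (the regularized objective is strongly convex), but bounding $L(\vtheta^{*}_{\lambda})$ by ``matrix concentration of $\tfrac{1}{n}\vPhi^{\top}\vPhi$ against $\vSigma$'' plus a $\vSigma$-norm bias computation does not go through under the theorem's hypotheses: the theorem lives in the over-parameterized regime $m>n$ of Equation~\eqref{eq:width-realizable-case}, where $\tfrac{1}{n}\vPhi^{\top}\vPhi$ has rank at most $n$, so operator-norm concentration toward $\vSigma$ requires a sample size scaling with the (effective) dimension---e.g.\ matrix Bernstein pays at least a logarithm of the intrinsic dimension---which is absent from Equation~\eqref{eq:sample-size-realizable-case}. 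What you actually need is to control the quadratic form $(\vtheta^{*}_{\lambda}-\vtheta^{*})^{\top}\vSigma(\vtheta^{*}_{\lambda}-\vtheta^{*})$ at a data-dependent vector of norm at most $2\lVert\vtheta^{*}\rVert_{2}$, i.e.\ a uniform bound over a norm ball. This is exactly what the paper's Theorem~\ref{thm:generalization-rf-realizable-case} supplies, dimension-free, via the Rademacher complexity of $\{\vx\mapsto\langle\vtheta,\vphi(\vx)\rangle:\lVert\vtheta\rVert_{2}\leq\lVert\vtheta^{*}\rVert_{2}\}$ together with the boundedness $\lVert\vphi(\vx)\rVert_{2}\leq b$ (which makes the squared loss Lipschitz and bounded on this ball), giving $L(\vtheta^{*}_{\lambda})\leq 2L_{n}(\vtheta^{*}_{\lambda})+\epsilon$ and then $L_{n}(\vtheta^{*}_{\lambda})\leq\tfrac{\lambda}{2}\lVert\vtheta^{*}\rVert_{2}^{2}=O(\epsilon)$ by the weight-decay condition. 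Replacing your concentration step by this uniform-convergence argument (or adding an effective-dimension assumption to the sample size) closes the gap and brings your proof in line with the paper's.
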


The main intuition behind this discrepancy between $t_{1}$ and $t_{2}$ can be explained as follows: When the student regression model is sufficiently over-parameterized ($m\gg n$), we are solving a high-dimensional regression problem. Therefore, when the weight decay $\lambda$ is small, the GD optimization process only effectively updates the projection of the weight vector $\vtheta_{\parallel}$ onto the data-spanning subspace to fit the training data, while having negligible effect on the component $\vtheta_{\perp}$ in the complementary subspace, which remains close to its initialization. This occurs since the information captured by the data only contributes to the convergence of $\vtheta_{\parallel}$ at a rate of $(1-\frac{1}{n}\eta\lambda_{\mathrm{min}}^{+}(\vPhi^{\top}\vPhi)-\eta\lambda)^{t}$. In contrast, $\vtheta_{\perp}$ converges at a negligible rate of $(1-\eta\lambda)^{t}$ as a consequence of long-term weight decay. Such an imbalanced trajectory prevents timely generalization and leads to harmful overfitting. As a remedy, weight decay plays a crucial role in decreasing the complexity of the model class, which eventually guarantees uniform convergence and thus good generalization. 

Next, we disentangle the influence of individual hyperparameters on the grokking time according to our theory, which aligns closely with our experimental results in Section~\ref{sec:experiments}. Notably, these functional dependencies on the hyperparameters are not only qualitatively corroborated by our experiments, but they also quantitatively predict the expected scaling rates (see Figure~\ref{fig:hyperparameter-test-quantitatively}).

Evidently, Equations~\eqref{eq:grokking-time-realizable-case-t1} and \eqref{eq:grokking-time-realizable-case-t2} together provide strong control of the grokking time $(t_{2}-t_{1})$, which reveals the following effects of hyperparameter tuning on grokking:
\begin{itemize}
    \item \textbf{Weight decay $\lambda$}: For small $\lambda$, decreasing $\lambda$ increases $t_{2}$ with $t_{2}\propto 1/\lambda$ when other hyperparameters are held fixed, and $\lambda$ has no effect on our bound for $t_{1}$. Therefore, $(t_{2}-t_{1})\rightarrow\infty$ as $\lambda\rightarrow0$. We remark that for large $\lambda$ (which is not our focus in this work), our upper bound in Equation~\eqref{eq:grokking-time-realizable-case-t1} is not tight in general, as a tight bound should incorporate the term $\eta\lambda_{\mathrm{min}}^{+}(\vPhi^{\top}\vPhi)+\eta\lambda$ in the denominator.
    \item \textbf{Sample size $n$ and feature dimensionality $m$}: It is difficult to deduce clean dependencies that capture the behavior of $t_{1}$ as a function of $n$ and $m$, since there is no known quantitative bound revealing how $\lambda_{\mathrm{min}}^{+}(\vPhi^{\top}\vPhi)$ depends on $m$ and $n$, even when making strong assumptions on the distribution of the features (see Remark~\ref{rem:discussion-on-the-smallest-positive-engenvalue-of-the-sample-covariance} for a discussion). On a more positive note, we show empirically that the behavior of $t_{1}$ as a function of $n$ and $m$ strongly accords with our theoretical bound in Equation~\eqref{eq:grokking-time-realizable-case-t1}. 
    
    As for $t_{2}$, since we make no assumption on the feature map $\vphi(\vx)$, the quantity $\lambda_{\mathrm{min}}(\vSigma)$ is not necessarily controlled. Indeed, more explicit dependencies of $t_{2}$ on $n$ and $m$ can be derived by assuming distributional assumptions. For instance, if $\vphi(\vx_{1}),\ldots,\vphi(\vx_{n})$ are i.i.d.\ spherical Gaussian vectors. We present such clean bounds in Equation~\eqref{eq:theoretical-bounds} and show that it strongly matches our experimental results.
    \item \textbf{Initialization scale $\nu^{2}$}: Increasing $\nu^{2}$ increases $t_{1}$ and $t_{2}$ simultaneously with $t_{1},t_{2}\propto\ln(\nu^{2})$. More precisely, when $\lambda$ is sufficiently small, we have $t_{2}>t_{1}$, and moreover $t_{1}\leq c_{1}\ln(\nu^{2})+a_{1}$ and $t_{2}\geq c_{2}\ln(\nu^{2})+a_{2}$ for some $c_{2}>c_{1}>0$ and some $a_{1},a_{2}$. In such a scenario, increasing $\nu^{2}$ increases $(t_{2}-t_{1})$ and amplifies grokking.
\end{itemize}

\begin{remark}  [\textbf{on $\lambda_{\mathrm{min}}^{+}(\vPhi^{\top}\vPhi)$ and its dependence on $n$ and $m$}]
  \label{rem:discussion-on-the-smallest-positive-engenvalue-of-the-sample-covariance}
$\lambda_{\mathrm{min}}^{+}(\vPhi^{\top}\vPhi)$ is an empirical quantity which depends on the sampled features. We note that even for simple distributions such as $\vphi(\vx)\sim\N(\bm{0},\frac{1}{m}\vI_{m})$, the behavior of $\lambda_{\mathrm{min}}^{+}(\vPhi^{\top}\vPhi)$ as a function of $n$ and $m$ is not yet fully understood. While there is a known asymptotic result given by the celebrated Marchenko–Pastur law \citep{marvcenko1967distribution}, showing that when $m,n\rightarrow\infty$ and $m/n\rightarrow\gamma$ for some $\gamma\in(0,\infty)$, $\lambda_{\mathrm{min}}^{+}(\vPhi^{\top}\vPhi)\rightarrow\gamma^{-1}(1-\sqrt{\gamma})^{2}$, to the best of our knowledge, there is no quantitative bound revealing how this depends on $m$ and $n$.
\end{remark}

Next, we state a separate theorem for each of the three stages of our grokking result, as it is helpful for understanding each hyperparameter effect separately.

\begin{theorem}  [\textbf{Training loss convergence}]
  \label{thm:training-performance-rf-realizable-case}
Suppose that $N^{*}(\vx)=\langle\vtheta^{*},\vphi(\vx)\rangle$ for some $\vtheta^{*}\in\R^{m}$, and that $\vtheta^{(0)}\sim\N(\bm{0},\nu^{2}\vI_{m})$. Assume that $\eta<1/(\lambda+\lambda_{\mathrm{max}}(\vPhi^{\top}\vPhi)/n)$. For any $\epsilon>0$ and $\delta\in(0,1)$, suppose that
\begin{equation*}
    m = \Omega\left(\max\left\{\log\left(\frac{1}{\delta}\right), \frac{\lVert\vtheta^{*}\rVert_{2}^{2}}{\nu^{2}}\right\}\right)
\end{equation*}
and
\begin{equation*}
    \lambda = O\left(\min\left\{\frac{\epsilon}{\lVert\vtheta^{*}\rVert_{2}^{2}}, \frac{\sqrt{L\epsilon}}{\lVert\vtheta^{*}\rVert_{2}}\right\}\right).
\end{equation*}
Then, with probability at least $1-\delta$, we have $L_{n}(\vtheta^{(t)})\leq\epsilon$ for all
\begin{equation*}
    t \geq \frac{1}{2}\log_{1-\frac{\eta\lambda_{\mathrm{min}}^{+}(\vPhi^{\top}\vPhi)}{n}-\eta\lambda}\left(\frac{\epsilon}{6L\lVert\vtheta^{(0)}\rVert_{2}^{2}}\right).
\end{equation*}
\end{theorem}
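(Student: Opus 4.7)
The plan is to analyze the GD iteration by centering it at the ridge-regression fixed point and then exploiting the fact that the unregularized training loss depends only on the data-spanning subspace. Concretely, set $\vB := \vPhi^\top\vPhi/n$ and define $\vtheta^\infty := (\vB + \lambda\vI)^{-1}\vB\vtheta^*$; a direct substitution shows that this is the unique fixed point of $\vtheta \mapsto \vtheta - \eta\nabla L_n(\vtheta;\lambda)$. By linearity, the recursion then becomes $\vtheta^{(t)} - \vtheta^\infty = (\vI - \eta(\vB+\lambda\vI))^t(\vtheta^{(0)} - \vtheta^\infty)$. The key structural observation is that while this operator contracts only at the slow rate $(1-\eta\lambda)$ on the null space of $\vPhi$, the loss $L_n$ does not see that subspace, so we will get the fast rate $(1-\rho)$ with $\rho := \eta(\lambda_{\mathrm{min}}^+(\vPhi^\top\vPhi)/n + \lambda)$.

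To turn this into a bound on $L_n$, I apply $(a+b)^2 \leq 2a^2 + 2b^2$ to get $L_n(\vtheta^{(t)}) \leq \frac{1}{n}\|\vPhi(\vtheta^{(t)} - \vtheta^\infty)\|_2^2 + 2L_n(\vtheta^\infty)$. For the transient, I use the commutation identity $\vPhi f(\vB) = f(\vK)\vPhi$ with $\vK := \vPhi\vPhi^\top/n$: since the nonzero eigenvalues of $\vK$ coincide with those of $\vB$ and are all at least $\lambda_{\mathrm{min}}^+(\vPhi^\top\vPhi)/n$, the restriction of $(\vI - \eta(\vK+\lambda\vI))^t$ to the range of $\vPhi$ contracts at rate $(1-\rho)$. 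Combined with $\|\vPhi\|_{\mathrm{op}}^2/n \leq L$, this yields $\frac{1}{n}\|\vPhi(\vtheta^{(t)} - \vtheta^\infty)\|_2^2 \leq L(1-\rho)^{2t}\|\vtheta^{(0)} - \vtheta^\infty\|_2^2$.

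For the bias $L_n(\vtheta^\infty)$, writing $\vtheta^\infty - \vtheta^* = -\lambda(\vB+\lambda\vI)^{-1}\vtheta^*_{\parallel} - \vtheta^*_{\perp}$ and using that $\vPhi$ kills the perpendicular part, an eigenvalue calculation together with $\max_{\mu\geq 0}\mu/(\mu+\lambda)^2 = 1/(4\lambda)$ gives $L_n(\vtheta^\infty) \leq \lambda\|\vtheta^*\|_2^2/8$; the condition $\lambda = O(\epsilon/\|\vtheta^*\|_2^2)$ therefore makes this small, and the companion $\lambda = O(\sqrt{L\epsilon}/\|\vtheta^*\|_2)$ arises from handling the same bias through the alternative estimate $L_n(\vtheta^\infty) \leq \frac{L}{2}\|\vtheta^\infty - \vtheta^*\|_2^2$, which is tighter in a complementary regime. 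For $\|\vtheta^{(0)} - \vtheta^\infty\|_2^2$, I note that $(\vB+\lambda\vI)^{-1}\vB$ has operator norm at most $1$, so $\|\vtheta^\infty\|_2 \leq \|\vtheta^*\|_2$; a chi-squared concentration bound for $\|\vtheta^{(0)}\|_2^2/\nu^2 \sim \chi^2_m$ then shows, under $m = \Omega(\max\{\log(1/\delta), \|\vtheta^*\|_2^2/\nu^2\})$, that with probability at least $1-\delta$ we have $\|\vtheta^*\|_2^2 \leq 2\|\vtheta^{(0)}\|_2^2$, whence $\|\vtheta^{(0)} - \vtheta^\infty\|_2^2 \leq 6\|\vtheta^{(0)}\|_2^2$.

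Assembling everything gives $L_n(\vtheta^{(t)}) \leq 6L(1-\rho)^{2t}\|\vtheta^{(0)}\|_2^2 + 2L_n(\vtheta^\infty)$, so imposing $(1-\rho)^{2t} \leq \epsilon/(6L\|\vtheta^{(0)}\|_2^2)$ together with the $\lambda$ conditions yields $L_n(\vtheta^{(t)}) \leq \epsilon$, and solving for $t$ by taking $\log_{1-\rho}$ on both sides recovers the stated bound. The main obstacle I anticipate is deploying the commutation identity carefully enough that the transient is driven by the fast rate $(1-\rho)$ rather than by $(1-\eta\lambda)$; essentially, one must work on the range of $\vPhi$ where $\vB$ and $\vK$ share the same spectrum bounded below by $\lambda_{\mathrm{min}}^+(\vPhi^\top\vPhi)/n$, rather than on the full ambient space where null directions drag the contraction down. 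The chi-squared concentration step is mechanical but must be stated precisely enough to justify both the $\log(1/\delta)$ and the $\|\vtheta^*\|_2^2/\nu^2$ parts of the lower bound on $m$.
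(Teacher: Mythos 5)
Your proof is correct and shares the paper's high-level strategy (center the dynamics at the ridge minimizer, contract fast on the data-spanning subspace, control the bias via $\lambda$), but the technical execution is genuinely different. The paper characterizes the limit variationally, $\vtheta^{*}_{\lambda}=\argmin_{\vtheta}L_{n}(\vtheta;\lambda)$ (which coincides with your closed form $(\vB+\lambda\vI_{m})^{-1}\vB\vtheta^{*}$), derives the same linear recursion for $\vtheta^{(t)}-\vtheta^{*}_{\lambda}$ from the stationarity condition, and extracts the rate $1-\frac{\eta\lambda_{\mathrm{min}}^{+}(\vPhi^{\top}\vPhi)}{n}-\eta\lambda$ via the $\vtheta_{\parallel}/\vtheta_{\perp}$ decomposition rather than your commutation $\vPhi\vB=\vK\vPhi$; these two mechanisms are equivalent. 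The real divergence is in how the loss is split: the paper expands $L_{n}(\vtheta^{(t)})-L_{n}(\vtheta^{*}_{\lambda})$ exactly, which produces a cross term $-\lambda\langle\vu^{(t)}_{\parallel},\vtheta^{*}_{\lambda,\parallel}\rangle$ bounded by Cauchy--Schwarz (this is where the condition $\lambda=O(\sqrt{L\epsilon}/\lVert\vtheta^{*}\rVert_{2})$ is actually consumed), and it bounds the bias by objective comparison, $L_{n}(\vtheta^{*}_{\lambda})\leq\frac{\lambda}{2}\lVert\vtheta^{*}\rVert_{2}^{2}$. You instead use $(a+b)^{2}\leq 2a^{2}+2b^{2}$, which removes the cross term at the cost of a factor $2$, and bound the bias spectrally via $\max_{\mu\geq 0}\mu/(\mu+\lambda)^{2}=1/(4\lambda)$, yielding the sharper $\lambda\lVert\vtheta^{*}\rVert_{2}^{2}/8$; what the paper's three-term split buys is that each term is naturally at most $\epsilon/3$, so the constant $6$ in the stated time threshold drops out immediately. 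Two minor caveats in your write-up: (i) your attribution of the second $\lambda$ condition to the alternative estimate $L_{n}(\vtheta^{\infty})\leq\frac{L}{2}\lVert\vtheta^{\infty}-\vtheta^{*}\rVert_{2}^{2}$ does not actually work, since $\vtheta^{\infty}-\vtheta^{*}$ contains the full component $-\vtheta^{*}_{\perp}$ and is not $O(\lambda)$; but your argument simply never needs that condition, and since it is assumed in the theorem this is harmless. (ii) With your threshold, $(1-\rho)^{2t}\leq\epsilon/(6L\lVert\vtheta^{(0)}\rVert_{2}^{2})$ makes the transient alone as large as $\epsilon$, so you must split $\epsilon$ between transient and bias; this goes through because your constant $6$ has slack (in fact $(1+\sqrt{2})^{2}<6$) and the bias constant is absorbed by the big-$O$ freedom in the $\lambda$ hypothesis, but it deserves an explicit sentence.
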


The above assumption that $m=\Omega(\lVert\vtheta^{*}\rVert_{2}^{2}/\nu^{2})$ implies that the student is initialized to have a large norm with respect to the teacher's norm. Since we aim to upper bound $L_{n}(\vtheta^{(t)})$ and not $L_{n}(\vtheta^{(t)};\lambda)$, the assumed upper bound on $\lambda$ is for technical purposes, and it does not limit our results since small values of $\lambda$ facilitate grokking. 

\begin{theorem}  [\textbf{Poor generalization when overfitting}]
  \label{thm:overfitting-rf-realizable-case}
Suppose that $N^{*}(\vx)=\langle\vtheta^{*},\vphi(\vx)\rangle$ for some $\vtheta^{*}\in\R^{m}$, that $\vtheta^{(0)}\sim\N(\bm{0},\nu^{2}\vI_{m})$, and further assume that $\eta<1/\lambda$. For any $c>0$ and $\delta\in(0,1)$, suppose that
\begin{equation*}
    m = n + \Omega\left(\max\left\{\log\left(\frac{1}{\delta}\right), \frac{\lVert\vtheta^{*}\rVert_{2}^{2}}{\nu^{2}}, \frac{c}{\lambda_{\mathrm{min}}(\vSigma)\nu^{2}}\right\}\right).
\end{equation*}
Then, with probability at least $1-\delta$, we have $L(\vtheta^{(t)}) \geq c$ for all
\begin{equation*}
    t \leq \log_{1-\eta\lambda}\left(\sqrt{\frac{2}{(m-n)\nu^{2}}}\left(\sqrt{\frac{c}{\lambda_{\mathrm{min}}(\vSigma)}} + \lVert\vtheta^{*}\rVert_{2}\right)\right).
\end{equation*}
\end{theorem}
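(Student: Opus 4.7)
The plan is to decompose the student parameter at each iteration into components parallel and orthogonal to the row-span of $\vPhi$, exploit the fact that the data term in the GD update lives entirely inside the parallel subspace, and then lower bound the population loss via the large, slowly decaying norm of the orthogonal component of $\vtheta^{(t)} - \vtheta^{*}$.

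Let $V := \mathrm{row}(\vPhi) \subseteq \R^{m}$ and $V_{\perp}$ its orthogonal complement, so $\dim V_{\perp} \geq m - n$. Write $\vtheta^{(t)} = \vtheta_{\|}^{(t)} + \vtheta_{\perp}^{(t)}$ for the induced orthogonal decomposition. Because each $\vphi(\vx_{i}) \in V$, projecting the GD recursion in Equation~\eqref{eq:gd-update-rf} onto $V_{\perp}$ annihilates the data-dependent sum and leaves the clean recursion $\vtheta_{\perp}^{(t+1)} = (1-\eta\lambda)\,\vtheta_{\perp}^{(t)}$, so $\vtheta_{\perp}^{(t)} = (1-\eta\lambda)^{t}\,\vtheta_{\perp}^{(0)}$. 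Since $\vtheta^{(0)} \sim \N(\bm{0}, \nu^{2}\vI_{m})$ is isotropic, conditional on $\vPhi$ the projection onto the fixed subspace $V_{\perp}$ has squared norm distributed as $\nu^{2}\chi^{2}_{k}$ with $k \geq m-n$; a standard Laurent--Massart tail bound yields $\lVert\vtheta_{\perp}^{(0)}\rVert_{2}^{2} \geq (m-n)\nu^{2}/2$ with probability at least $1-\delta$, provided $m - n = \Omega(\log(1/\delta))$.

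To convert this into a population-loss lower bound, I use $L(\vtheta) = (\vtheta - \vtheta^{*})^{\top}\vSigma(\vtheta - \vtheta^{*}) \geq \lambda_{\mathrm{min}}(\vSigma)\,\lVert\vtheta - \vtheta^{*}\rVert_{2}^{2}$ together with the reverse triangle inequality after projecting onto $V_{\perp}$:
\[
\lVert\vtheta^{(t)} - \vtheta^{*}\rVert_{2} \geq \lVert\vtheta_{\perp}^{(t)} - \vtheta_{\perp}^{*}\rVert_{2} \geq \lVert\vtheta_{\perp}^{(t)}\rVert_{2} - \lVert\vtheta^{*}\rVert_{2}.
\]
Substituting the explicit formula for $\lVert\vtheta_{\perp}^{(t)}\rVert_{2}$ together with the high-probability lower bound gives
\[
\sqrt{L(\vtheta^{(t)})} \geq \sqrt{\lambda_{\mathrm{min}}(\vSigma)}\left((1-\eta\lambda)^{t}\sqrt{(m-n)\nu^{2}/2} - \lVert\vtheta^{*}\rVert_{2}\right).
\]
Requiring the right-hand side to be at least $\sqrt{c}$ and rearranging produces
\[
(1-\eta\lambda)^{t} \geq \sqrt{\tfrac{2}{(m-n)\nu^{2}}}\left(\sqrt{\tfrac{c}{\lambda_{\mathrm{min}}(\vSigma)}} + \lVert\vtheta^{*}\rVert_{2}\right),
\]
and since the assumption $\eta < 1/\lambda$ forces $0 < 1-\eta\lambda < 1$, applying $\log_{1-\eta\lambda}$ (which reverses the inequality) recovers exactly the stated threshold on $t$.

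The main technical subtlety is ensuring the argument of the logarithm is strictly less than one, so that the bound is non-vacuous. This is precisely where the hypothesis $m - n = \Omega(\max\{\lVert\vtheta^{*}\rVert_{2}^{2}/\nu^{2},\; c^{2}/(\lambda_{\mathrm{min}}^{2}(\vSigma)\nu^{2})\})$ enters: it forces $(m-n)\nu^{2}$ to dominate both $\lVert\vtheta^{*}\rVert_{2}^{2}$ and $c/\lambda_{\mathrm{min}}(\vSigma)$, so the bracketed subtraction above is positive and the chain of inequalities is meaningful. A subtler point worth spelling out is that $V$ and hence $V_{\perp}$ depend on the data, but because $\vtheta^{(0)}$ is drawn independently and isotropically, conditioning on $\vPhi$ reduces the chi-squared step to a deterministic-subspace computation. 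Notably, no fine quantitative control of $\lambda_{\mathrm{min}}^{+}(\vPhi^{\top}\vPhi)$ is needed here, since the entire argument operates on the orthogonal complement of the row-span, where the data-dependent part of the gradient simply vanishes.
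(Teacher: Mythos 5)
Your proposal is correct and follows essentially the same route as the paper's proof: project the GD iterate onto the orthogonal complement of the row space of $\vPhi$, observe that the orthogonal component decays only as $(1-\eta\lambda)^{t}$ from its Gaussian initialization, lower bound $\lVert\vtheta^{(0)}_{\perp}\rVert_{2}^{2}$ by $(m-n)\nu^{2}/2$ via chi-squared concentration, and combine $L(\vtheta)\geq\lambda_{\mathrm{min}}(\vSigma)\lVert\vtheta-\vtheta^{*}\rVert_{2}^{2}$ with the reverse triangle inequality before inverting via $\log_{1-\eta\lambda}$. The only cosmetic difference is that you apply the triangle inequality after projecting the difference onto $V_{\perp}$ (and you make explicit the conditioning on $\vPhi$ and the non-vacuousness of the log argument), which the paper handles in a slightly different but equivalent order.
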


The lower bound on $m$ here is stronger than the one in Theorem~\ref{thm:training-performance-rf-realizable-case}, which ensures that we are in a high-dimensional regression situation. In our last theorem, we prove that good generalization eventually occurs.

\begin{theorem}  [\textbf{Generalization}]
  \label{thm:generalization-rf-realizable-case}
Suppose that $N^{*}(\vx)=\langle\vtheta^{*},\vphi(\vx)\rangle$ for some $\vtheta^{*}\in\R^{m}$. Assume that $\lVert\vphi(\vx)\rVert_{2}\leq b$ for any $\vx$ for some $b>0$. Let $\vtheta^{*}_{\lambda}=\argmin_{\vtheta}\left\{L_{n}(\vtheta;\lambda)\right\}$. For any $\epsilon>0$ and $\delta\in(0,1)$, suppose that
\begin{equation*}
    n = \Omega\left(\frac{b^{4}\lVert\vtheta^{*}\rVert_{2}^{4}}{\epsilon^{2}}\log\left(\frac{1}{\delta}\right)\right) .
\end{equation*}
Then, if $\eta \leq 1/(\lambda+b^{2})$, GD converges to $\vtheta^{*}_{\lambda}$ satisfying $L(\vtheta^{*}_{\lambda}) \leq 2L_{n}(\vtheta^{*}_{\lambda})+\epsilon$, with probability at least $1-\delta$.
\end{theorem}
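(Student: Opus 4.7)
The plan is to decompose the argument into three pieces: (i) convergence of GD to the regularized minimizer $\vtheta^{*}_{\lambda}$; (ii) a deterministic bound on $\lVert\vtheta^{*}_{\lambda}\rVert_{2}$ that lets me restrict uniform convergence to a compact ball of radius $\lVert\vtheta^{*}\rVert_{2}$; and (iii) a standard Rademacher-complexity uniform-convergence bound over that ball, together with the trivial observation that $L_{n}(\vtheta^{*}_{\lambda})\geq 0$, which turns an additive comparison $L \leq L_{n} + \epsilon$ into the stated $L \leq 2L_{n} + \epsilon$.

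For step (i), $L_{n}(\vtheta;\lambda)$ is the sum of the convex quadratic $L_{n}(\vtheta)$ and the $\lambda$-strongly convex term $\tfrac{\lambda}{2}\lVert\vtheta\rVert_{2}^{2}$, hence it is $\lambda$-strongly convex with $\big(\lambda+\lambda_{\max}(\vPhi^{\top}\vPhi)/n\big)$-Lipschitz gradient. Standard smooth strongly convex GD theory then gives linear convergence to the unique minimizer $\vtheta^{*}_{\lambda}$ (this is the same step-size regime already assumed in Theorems~\ref{thm:training-performance-rf-realizable-case}--\ref{thm:overfitting-rf-realizable-case}). For step (ii), realizability gives $L_{n}(\vtheta^{*})=0$, so $L_{n}(\vtheta^{*};\lambda)=\tfrac{\lambda}{2}\lVert\vtheta^{*}\rVert_{2}^{2}$. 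By optimality,
\begin{equation*}
    \frac{\lambda}{2}\lVert\vtheta^{*}_{\lambda}\rVert_{2}^{2} \;\leq\; L_{n}(\vtheta^{*}_{\lambda};\lambda) \;\leq\; L_{n}(\vtheta^{*};\lambda) \;=\; \frac{\lambda}{2}\lVert\vtheta^{*}\rVert_{2}^{2},
\end{equation*}
so $\lVert\vtheta^{*}_{\lambda}\rVert_{2}\leq\lVert\vtheta^{*}\rVert_{2}$. This is the key deterministic fact that lets me ignore the dependence of the class on $\lambda$.

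For step (iii), let $B=\{\vtheta\in\R^{m}:\lVert\vtheta\rVert_{2}\leq\lVert\vtheta^{*}\rVert_{2}\}$. For any $\vtheta\in B$ and any $\vx$ we have $|\langle\vtheta,\vphi(\vx)\rangle|\leq b\lVert\vtheta^{*}\rVert_{2}$, and since $|N^{*}(\vx)|\leq b\lVert\vtheta^{*}\rVert_{2}$ as well, the per-sample squared loss is bounded by $M=4b^{2}\lVert\vtheta^{*}\rVert_{2}^{2}$ and is $(4b\lVert\vtheta^{*}\rVert_{2})$-Lipschitz in the prediction. The Rademacher complexity of the linear class $\{\vx\mapsto\langle\vtheta,\vphi(\vx)\rangle:\vtheta\in B\}$ is at most $b\lVert\vtheta^{*}\rVert_{2}/\sqrt{n}$ by the standard norm-bounded linear class computation; Talagrand's contraction and a McDiarmid bounded-differences bound then yield, with probability at least $1-\delta$,
\begin{equation*}
    \sup_{\vtheta\in B}\,\bigl|L(\vtheta)-L_{n}(\vtheta)\bigr| \;\leq\; C\,b^{2}\lVert\vtheta^{*}\rVert_{2}^{2}\sqrt{\frac{\log(1/\delta)}{n}}
\end{equation*}
for a universal constant $C$. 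Under the stated sample-size assumption $n=\Omega\!\left(b^{4}\lVert\vtheta^{*}\rVert_{2}^{4}\epsilon^{-2}\log(1/\delta)\right)$, the right-hand side is at most $\epsilon$. Since $\vtheta^{*}_{\lambda}\in B$ by step (ii), we conclude
\begin{equation*}
    L(\vtheta^{*}_{\lambda}) \;\leq\; L_{n}(\vtheta^{*}_{\lambda}) + \epsilon \;\leq\; 2L_{n}(\vtheta^{*}_{\lambda}) + \epsilon,
\end{equation*}
using $L_{n}(\vtheta^{*}_{\lambda})\geq 0$ in the last step.

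The main obstacle is the uniform-convergence step. Care is needed to (a) verify the loss really is bounded and Lipschitz on $B$ with the constants claimed, so that Talagrand's contraction principle applies cleanly to the composition with the squared loss, and (b) correctly account for the $\sqrt{\log(1/\delta)/n}$ factor from McDiarmid and the $1/\sqrt{n}$ from the Rademacher term, so that the combined bound scales as $b^{2}\lVert\vtheta^{*}\rVert_{2}^{2}\sqrt{\log(1/\delta)/n}$ and matches the assumption on $n$. The factor of $2$ in front of $L_{n}(\vtheta^{*}_{\lambda})$ is not essential to any sharp calculation; it is simply a convenient reformulation of an additive deviation bound using non-negativity of the empirical loss, and provides slack that could be useful in later corollaries.
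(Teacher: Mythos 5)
Your overall route is the same as the paper's: argue GD converges to the regularized minimizer (the paper does this via convexity of the ridge objective), use optimality plus realizability to get $\lVert\vtheta^{*}_{\lambda}\rVert_{2}\leq\lVert\vtheta^{*}\rVert_{2}$, and then apply a Rademacher-complexity uniform-convergence bound (norm-bounded linear class, contraction for the squared loss, boundedness $4b^{2}\lVert\vtheta^{*}\rVert_{2}^{2}$) over the ball of radius $\lVert\vtheta^{*}\rVert_{2}$. Steps (i) and (ii) and the complexity calculations are fine and match the paper (your Lipschitz constant $4b\lVert\vtheta^{*}\rVert_{2}$ is in fact the careful one).

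However, your step (iii) contains a genuine error in how the factor of $2$ arises, and the uniform-convergence claim as you state it is false. In this paper $L_{n}(\vtheta)=\frac{1}{2n}\sum_{i}(N(\vx_{i};\vtheta)-N^{*}(\vx_{i}))^{2}$ carries a factor $\tfrac12$, while $L(\vtheta)=\E_{\vx}[(N(\vx;\vtheta)-N^{*}(\vx))^{2}]$ does not; hence $\E[L_{n}(\vtheta)]=\tfrac12 L(\vtheta)$, and $L_{n}$ is \emph{not} the empirical counterpart of $L$. Consequently $\sup_{\vtheta\in B}\bigl|L(\vtheta)-L_{n}(\vtheta)\bigr|$ does not go to zero with $n$: for any fixed $\vtheta\in B$ with $L(\vtheta)$ of constant order, $L_{n}(\vtheta)\to\tfrac12 L(\vtheta)$, so the gap tends to $\tfrac12 L(\vtheta)$, which can be far larger than $\epsilon$ no matter how large $n$ is. The correct uniform-convergence statement compares $L(\vtheta)$ with $2L_{n}(\vtheta)$ (the empirical average of the same per-sample loss $(\hat y-y)^{2}$), and it yields $L(\vtheta^{*}_{\lambda})\leq 2L_{n}(\vtheta^{*}_{\lambda})+\epsilon$ directly — this is exactly how the paper obtains the theorem. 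In particular, the factor $2$ in the statement is not a harmless reformulation via $L_{n}\geq 0$ of a (purported) bound $L\leq L_{n}+\epsilon$; it is precisely the normalization mismatch between $L_{n}$ and $L$, and the intermediate inequality $L(\vtheta^{*}_{\lambda})\leq L_{n}(\vtheta^{*}_{\lambda})+\epsilon$ that your write-up passes through is not justified by your argument (and is false in general). The fix is purely bookkeeping, but as written the proof does not go through. A minor further caveat: the step-size regime $\eta<1/(\lambda+\lambda_{\mathrm{min}}^{+}(\vPhi^{\top}\vPhi)/n)$ you cite from the earlier theorems does not by itself bound $\eta$ by the reciprocal of the smoothness constant $\lambda+\lambda_{\mathrm{max}}(\vPhi^{\top}\vPhi)/n$, so your linear-convergence remark in step (i) needs that (standard) condition stated explicitly; the paper sidesteps this by invoking convexity of the objective without quantifying the rate.
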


The lower bound on the sample size $n=\Omega(b^{4}\lVert\vtheta^{*}\rVert_{2}^{4}\epsilon^{-2})$ stems from a standard uniform convergence argument based on Rademacher complexity.

\section{Experiments}
  \label{sec:experiments}
In this section, we systematically verify our theoretical understanding of grokking via experiments. Throughout, we use $n,\eta$ and $\lambda$ to denote the training sample size, GD step size and weight decay parameter, respectively. In all of our experiments, we set the error thresholds $c=\epsilon=0.01$.

\subsection{Hyperparameter Control} 
  \label{subsec:hyperparameter-control}
We empirically verify our quantitative bounds on how the model's hyperparameters affect grokking in realizable ridge regression. Specifically, we train a student linear regression model using gradient descent with weight decay, to learn a realizable teacher with unit norm, i.e.\ $\lVert\vtheta^{*}\rVert_{2}=1$. We simply choose the feature map to be the identity function, i.e.\ $\vphi(\vx)=\vx\in\R^{m}$. We use $\nu^{2}$ to denote the initialization scale of weights in line with our theory.

As discussed in Section~\ref{sec:grokking-in-regression}, $\lambda_{\mathrm{min}}(\vSigma)$ cannot be controlled without making additional assumptions on $\vphi(\vx)$. Consequentially, by introducing additional distributional assumptions, we are able to provide more explicit bounds for $t_{1}$ and $t_{2}$ based on Theorem~\ref{thm:e2e-provable-grokking-rf-realizable-case}. Specifically, we assume that $\vphi(\vx_{1}),\ldots,\vphi(\vx_{n})$ are drawn i.i.d.\ from a Gaussian distribution $\N(\bm{0},\frac{1}{m}\vI_{m})$. In such a case, if $\eta\lambda\leq 0.01$, a simple calculation (see Remark~\ref{rem:more-explicit-bounds} for details) yields that
\begin{equation}
  \label{eq:theoretical-bounds}
    t_{2} \geq \frac{\ln\left(\frac{(m-n)\nu^{2}}{8m\epsilon}\right)}{2.02\eta\lambda} \;\;\mathrm{and}\;\; t_{1} \leq \frac{n\ln\left(\frac{
    14m\nu^{2}}{\epsilon}\right)}{2\eta\lambda_{\mathrm{min}}^{+}(\vPhi^{\top}\vPhi)} .
\end{equation}
Our experiments are also implemented with such a distributional assumption. We set $\eta=1$, and unless otherwise stated for comparison purposes, we use the default values $n=100$, $m=1000$, $\nu^{2}=1$ and $\lambda=10^{-4}$.

\begin{figure}[t]
  \begin{subfigure}{0.49\columnwidth}
     \includegraphics[width=\columnwidth, height=3.5cm]{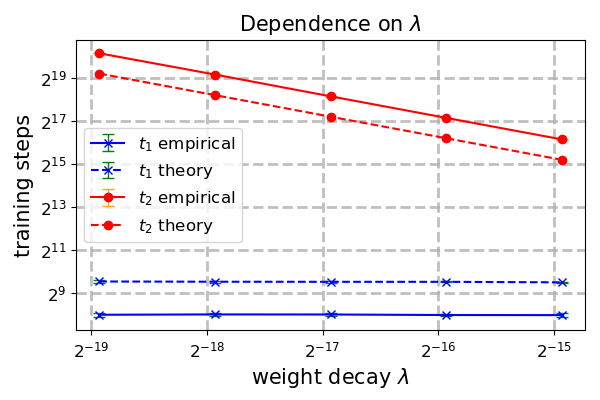}
  \end{subfigure}
  \hfill
  \begin{subfigure}{0.49\columnwidth}
     \includegraphics[width=\columnwidth, height=3.5cm]{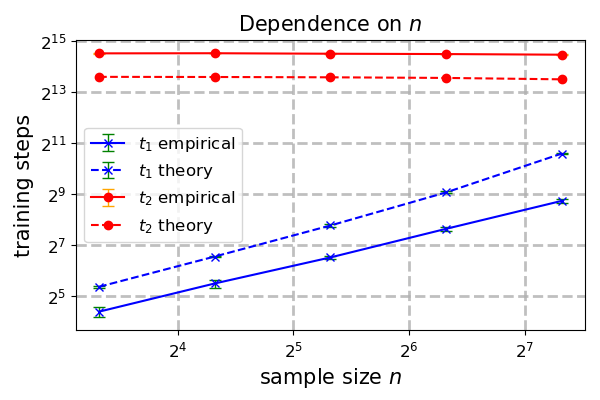}
  \end{subfigure}
  \medskip
  \begin{subfigure}{0.49\columnwidth}
     \includegraphics[width=\columnwidth, height=3.5cm]{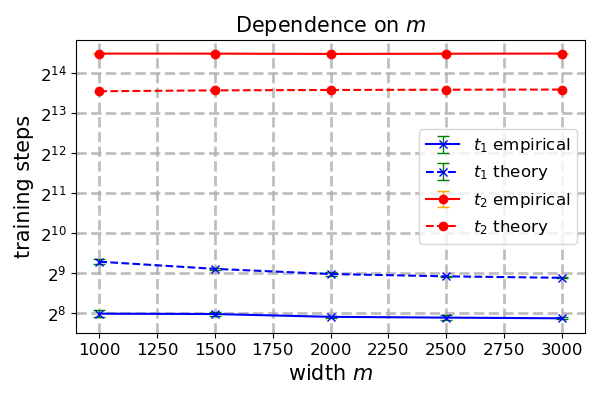}
  \end{subfigure}
  \hfill
  \begin{subfigure}{0.49\columnwidth}
     \includegraphics[width=\columnwidth, height=3.5cm]{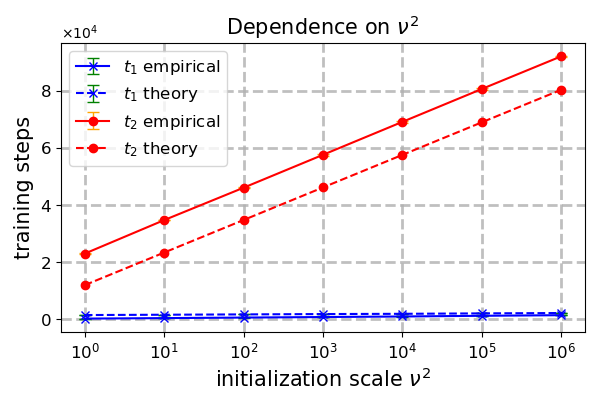}
  \end{subfigure}
\caption{Plots for the effects of hyperparameters on grokking in ridge regression. \textbf{Left Upper:} decreasing weight decay extends the generalization delay ($t_{2}\propto 1/\lambda$); \textbf{Right Upper:} decreasing sample size amplifies grokking by speeding up the convergence of the training loss (affects $t_{1}$); \textbf{Left Lower:} increasing the feature dimension has little effect on $t_{1}$ and $t_{2}$; \textbf{Right Lower:} increasing the initialization scale increases $t_{1}$ and $t_{2}$ simultaneously at logarithmic rates ($t_{1},t_{2}\propto\ln(\nu^{2})$).}
\label{fig:hyperparameter-test-quantitatively}
\end{figure}

The results are shown in Figure~\ref{fig:hyperparameter-test-quantitatively}, where the dashed lines represent our theoretical bounds for $t_{1}$ and $t_{2}$ in Equation~\eqref{eq:theoretical-bounds}, and solid lines represent the values of $t_{1}$ and $t_{2}$ that were observed empirically. Notably, all pairs of solid and dashed lines seem to match in their behavior, implying that our theoretical bounds provide strong control.

\subsection{Random-Features Neural Networks}
  \label{subsec:random-feature-relu-neural-networks}
In this subsection, we train a two-layer random ReLU features network to learn a single ReLU neuron teacher network with unit norm weights, i.e.\ $\vx \mapsto \sigma(\langle \vw^*, \vx \rangle)$ with $\lVert\vw^{*}\rVert_{2}=1$, where $\sigma(\cdot)=\max\{0,\cdot\}$ is the ReLU activation. A two-layer ReLU neural network is in the form of $\student=\sum_{j=1}^{m}a_{j}\sigma(\langle\vw_{j},\vx\rangle)$ for all $\vx\in\R^{d}$. For random-features neural networks, we only optimize the output layer weights $\vtheta=\va$, and the hidden layer weights are initialized and then fixed during training. Clearly, this random feature model is a special case of linear regression with feature map $\vphi(\vx)=(\vphi_{1}(\vx),\ldots,\vphi_{m}(\vx))$ where $\vphi_{j}(\vx)=\sigma(\langle\vw_{j},\vx\rangle)$ for all $j\in[m]$. We use the following initialization scheme $a_{j}^{(0)}\sim\N(0,1), \vw_{j}\sim\N(\bm{0},\frac{\nu^{2}}{dm}\vI_{d})$ for each $j\in[m]$.

\begin{figure}[t]
  \begin{subfigure}{0.49\columnwidth}
     \includegraphics[width=\columnwidth, height=3.0cm]{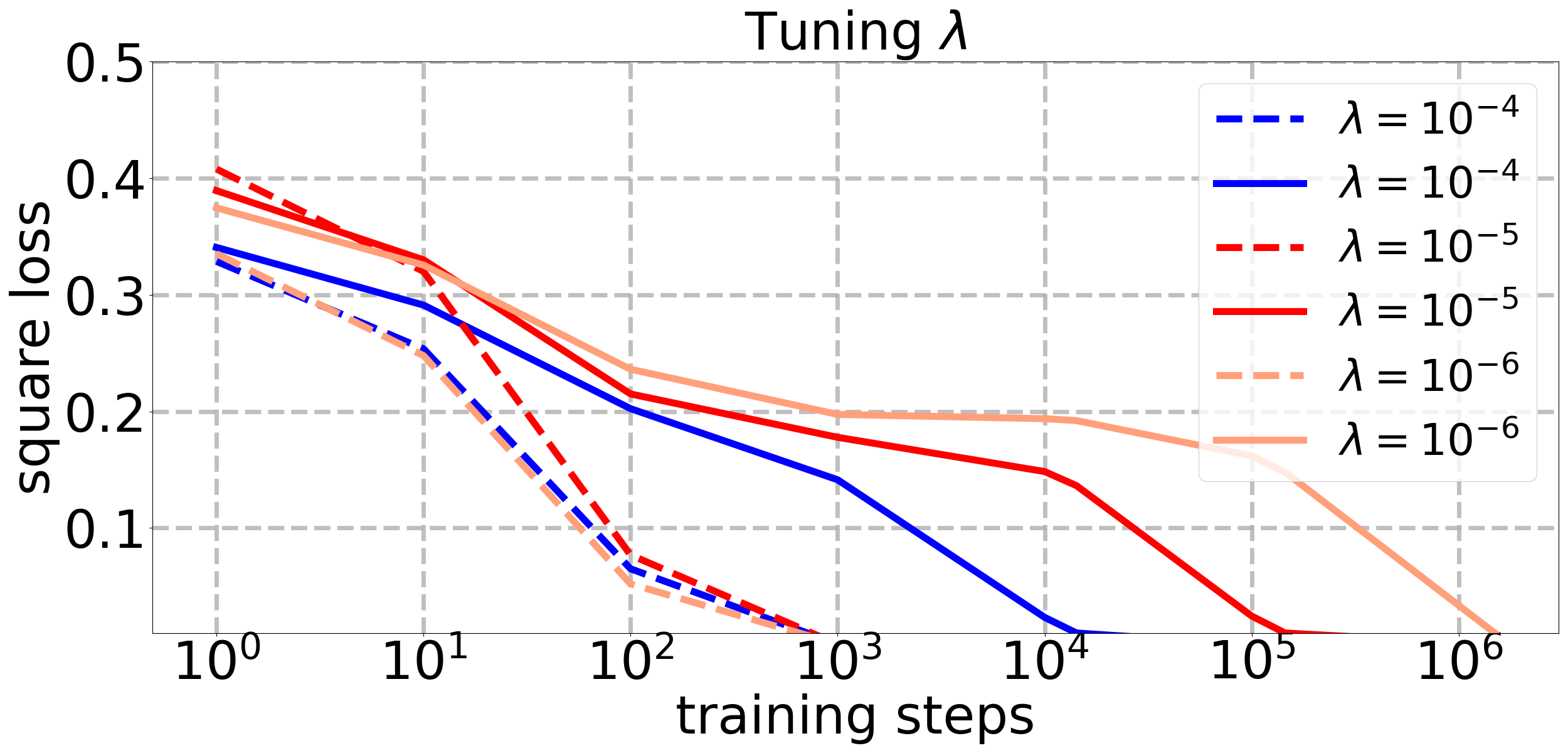}
  \end{subfigure}
  \hfill
  \begin{subfigure}{0.49\columnwidth}
     \includegraphics[width=\columnwidth, height=3.0cm]{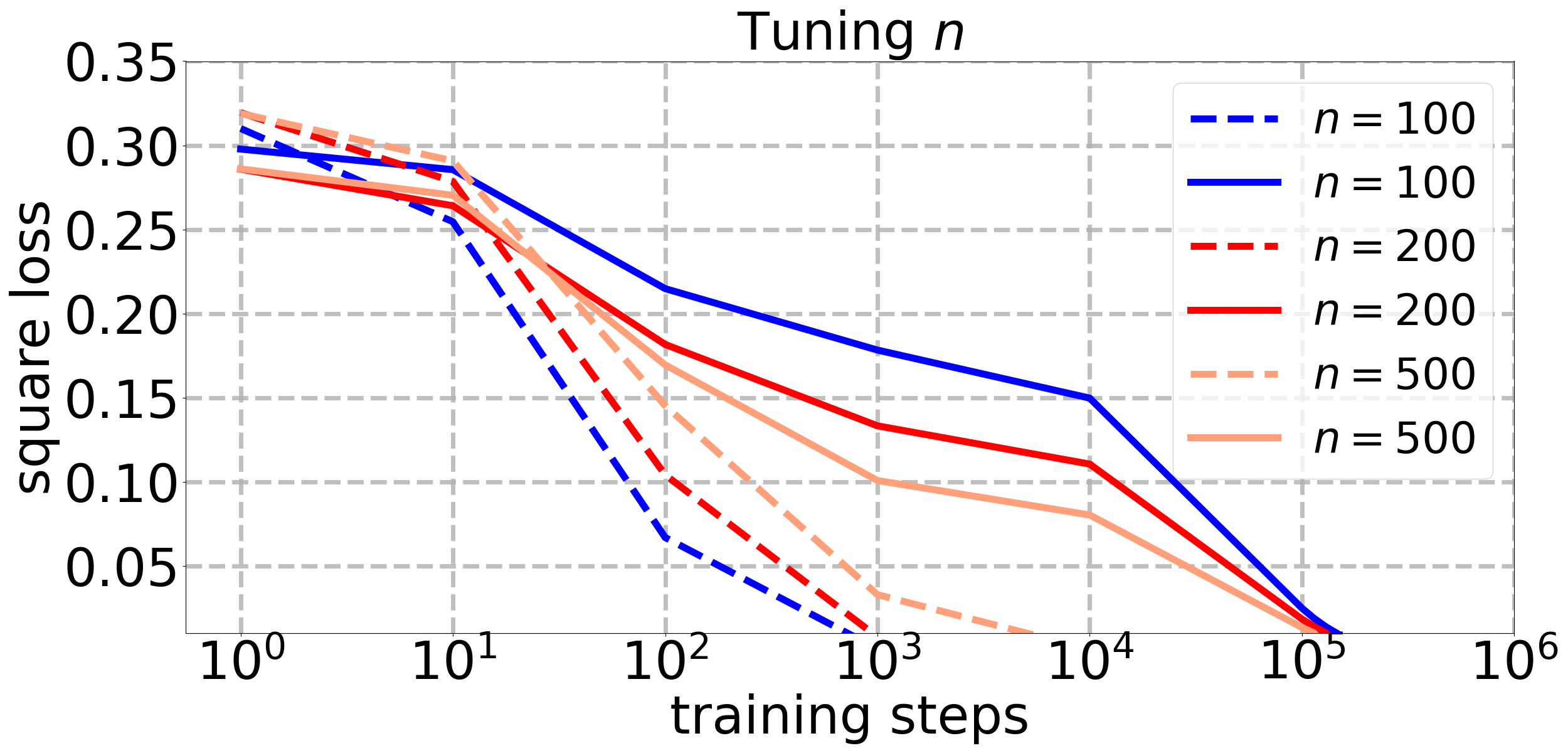}
  \end{subfigure}
  \medskip
  \begin{subfigure}{0.49\columnwidth}
     \includegraphics[width=\columnwidth, height=3.0cm]{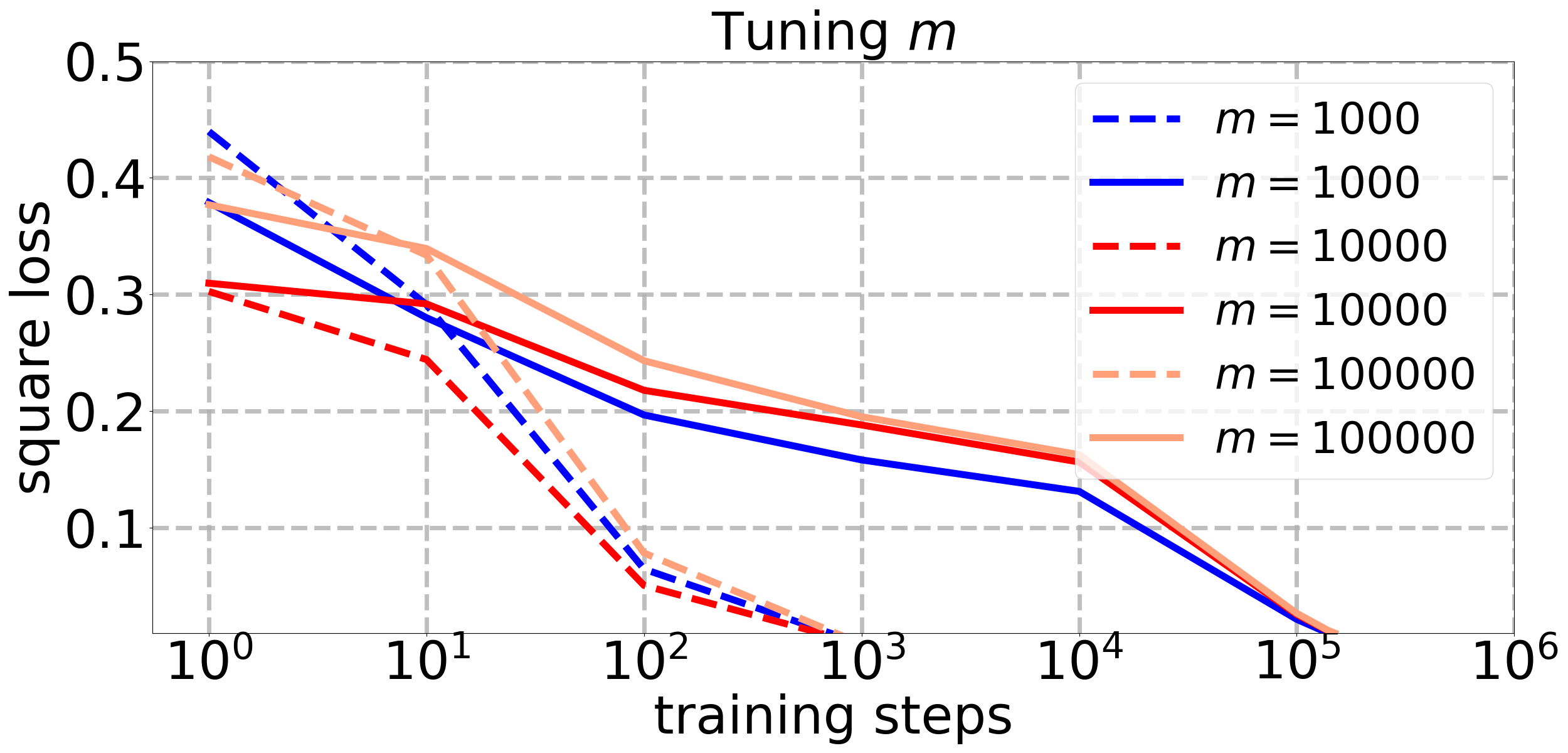}
  \end{subfigure}
  \hfill
  \begin{subfigure}{0.49\columnwidth}
     \includegraphics[width=\columnwidth, height=3.0cm]{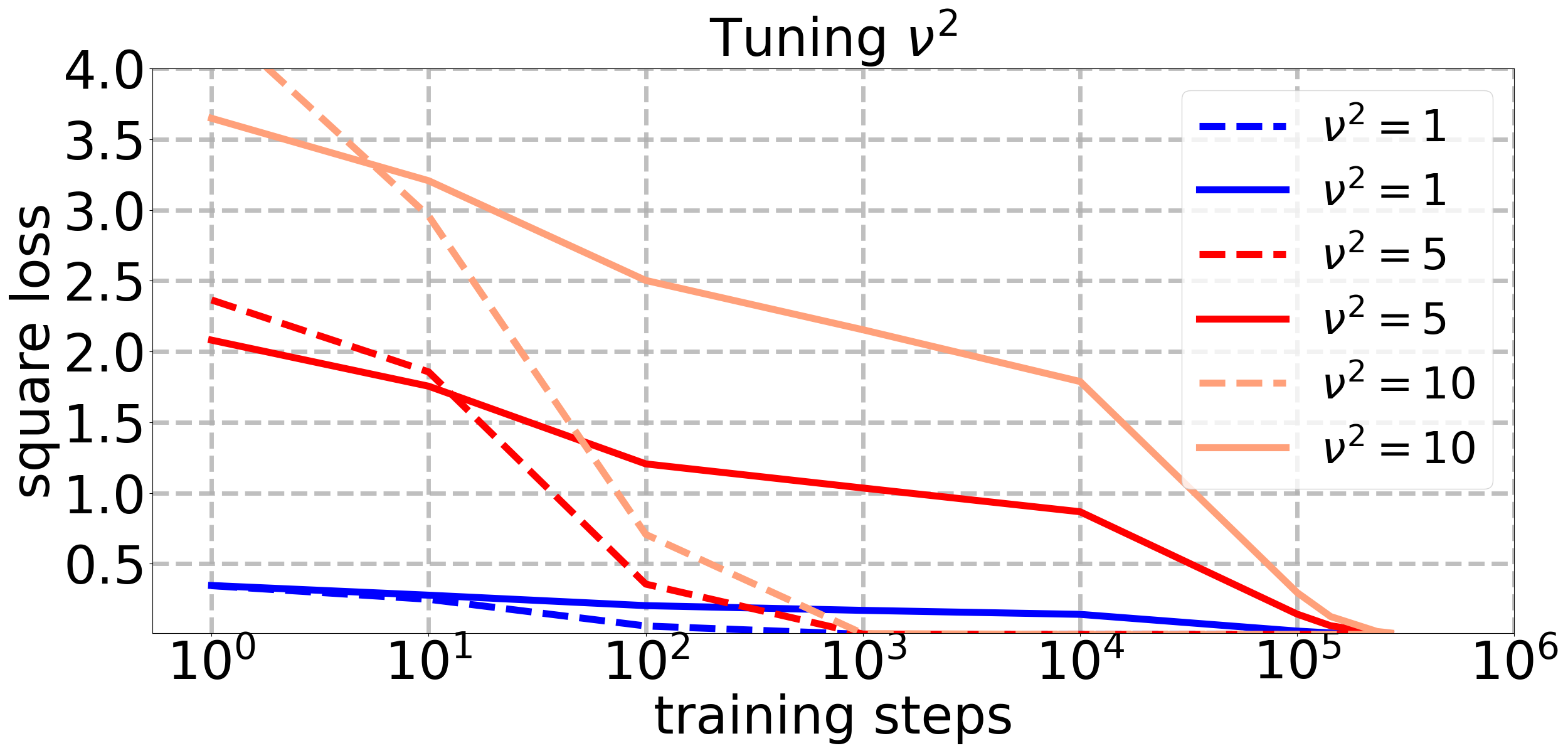}
  \end{subfigure}
\caption{Plots of training and test losses of a two-layer random ReLU features network. Dashed/solid lines indicate train/test loss respectively. \textbf{Left Upper:} using smaller weight decay amplifies the grokking time by delaying generalization (increases $t_{2}$); \textbf{Right Upper:} having smaller sample size amplifies grokking by speeding up the training convergence (decreases $t_{1}$); \textbf{Left Lower:} increasing the student's width does not significantly prolong grokking; \textbf{Left Lower:} increasing the initialization scale does not significantly prolong grokking, but instead widens the gap between the generalization and training losses during the overfitting stage.
}
\label{fig:hyperparameter-test-qualitatively}
\end{figure}

We present the results in Figure~\ref{fig:hyperparameter-test-qualitatively}, by a standard comparison between the training and test losses. We set: $d=100$, $\eta=1$, and unless otherwise stated for comparison purposes, we use the default values $n=100$, $\lambda=10^{-5}$, $m=10000$ and $\nu^{2}=1$. Since the teacher $\vw^{*}$ is sampled independently of the student network, this sets us in a non-realizable setting. However, since a sufficiently wide student network can approximate any teacher to arbitrary accuracy with high probability, we are essentially arbitrarily close to the realizable setting with sufficient over-parameterization. The figure shows behavior similar to that of realizable ridge regression.

\subsection{Non-linear Neural Networks}
  \label{subsec:non-linear-relu-neural-networks}
We also study empirically the effects of the different hyperparameters on grokking for training (both layers of) a two-layer ReLU neural network as defined in Subsection~\ref{subsec:random-feature-relu-neural-networks} with $\vtheta=(\vW,\va)$. We use the following initialization scheme $a_{j}^{(0)}\sim\N(0,\frac{1}{m}), \vw_{j}^{(0)}\sim\N(\bm{0},\frac{\nu^{2}}{d}\vI_{d})$ for each $j\in[m]$. Specifically, we choose the zero function to be the underlying teacher, and set $\eta=10^{-4}$, $d=50$, and unless otherwise stated for comparison purposes, we use the default values $n=50$, $m=1000$, $\nu^{2}=1$ and $\lambda=0.05$. 

\begin{figure}[t]
  \begin{subfigure}{0.49\columnwidth}
     \includegraphics[width=\columnwidth, height=3.5cm]{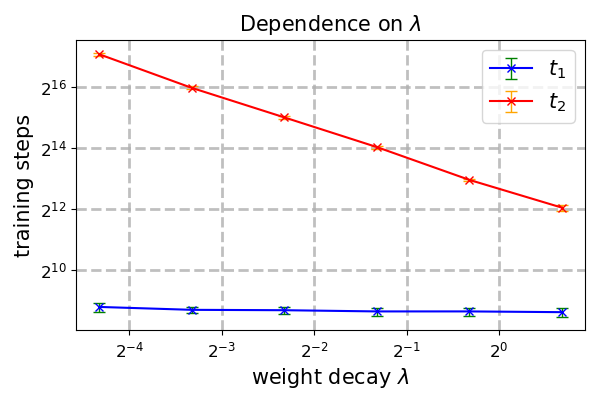}
  \end{subfigure}
  \hfill
  \begin{subfigure}{0.49\columnwidth}
     \includegraphics[width=\columnwidth, height=3.5cm]{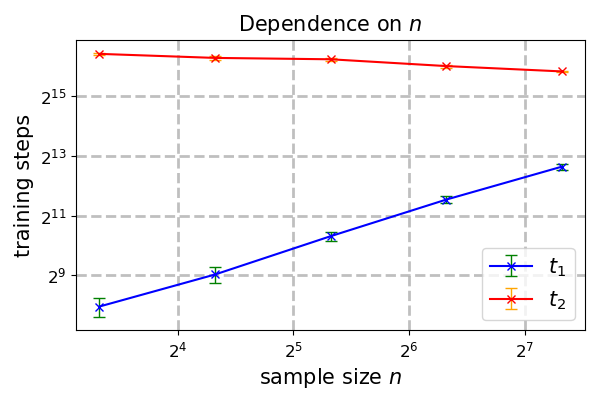}
  \end{subfigure}
  \medskip
  \begin{subfigure}{0.49\columnwidth}
     \includegraphics[width=\columnwidth, height=3.5cm]{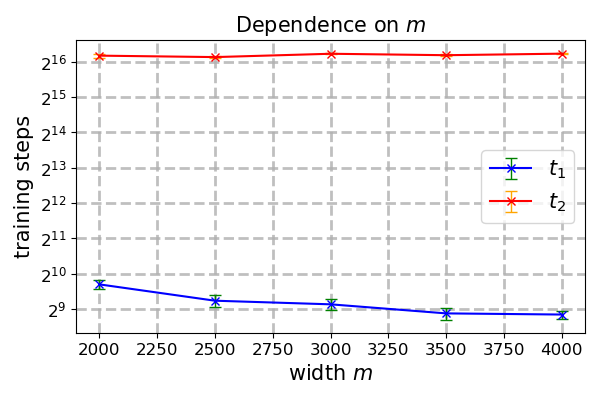}
  \end{subfigure}
  \hfill
  \begin{subfigure}{0.49\columnwidth}
     \includegraphics[width=\columnwidth, height=3.5cm]{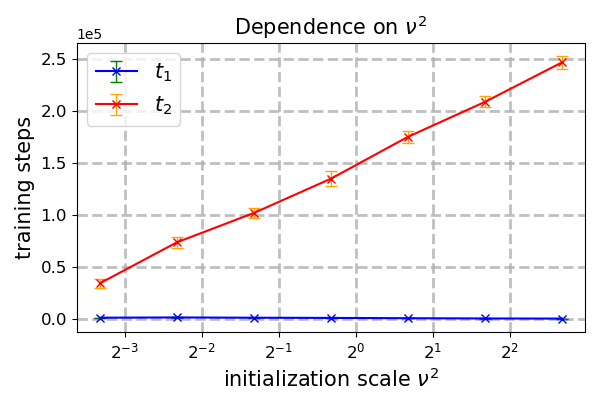}
  \end{subfigure}
\caption{Plots for the effect of the hyperparameters on grokking when training a two-layer ReLU network with the zero teacher.}
\label{fig:hyperparameter-test-fully-trained-two-layer-ReLU-neural-networks}
\end{figure}

The results are presented in Figure~\ref{fig:hyperparameter-test-fully-trained-two-layer-ReLU-neural-networks}. Notably, the dependencies of $t_{1}$ and $t_{2}$ on the hyperparameters match qualitatively to those in Figure~\ref{fig:hyperparameter-test-quantitatively}. This suggests that our bounds might capture behaviors that hold much more generally than our currently analyzed setting, and thus exploring it further is an intriguing direction for future work.

Finally, we include the experimental setup for Figure~\ref{fig:gorkking-for-ridge-regression-and-full-trained-ReLU-neural-networks} for completeness. \textbf{Left}: $n=100$, $m=10000$, $\lambda=10^{-5}$, $\nu^{2}=1$ and $\eta=1$; \textbf{Right}: $d=50$, $n=50$, $m=1000$, $\lambda=0.1$, $\nu^{2}=1$ and $\eta=10^{-4}$. 
See Appendix~\ref{sec:additional-experiments} for additional experiments.

\section{Discussion and Future Directions}
  \label{sec:discussion-and-future-work}
We study grokking and establish the first end-to-end provable grokking result for the classical ridge regression model. We derive quantitative bounds for the grokking time in terms of the model hyperparameters and verify our theoretical findings via experiments. 
Our work establishes a rigorous theoretical foundation for grokking. Observing this phenomenon within the fundamental framework of linear regression is not only surprising in its simplicity, but also highly illuminating; it provides a controlled environment that allows us to cleanly isolate the precise effects of individual hyperparameters. By uncovering the root causes of grokking in this transparent setting, our analysis may serve as a stepping stone toward demystifying its dynamics in modern deep learning architectures.

Our work leaves several interesting questions open. Firstly, can we show end-to-end provable grokking results for non-realizable ridge regression? This includes the special case of ridge regression with label noise (see Figure~\ref{fig:grokking-label noise} in Appendix~\ref{sec:additional-experiments}). Since we have already observed grokking empirically in non-realizable learning with two-layer random-features neural networks, it is natural to start studying such random features models. Indeed, deriving provable grokking results for learning agnostically with two-layer ReLU random features neural networks seems challenging. We find that current theoretical analyses fail to identify a hyperparameter realization for grokking. Specifically, when using our techniques, choosing an arbitrarily small weight decay parameter in the agnostic learning setting may impede generalization entirely. This suggests that new tools and more careful convergence analyses might be required.

Another question left open by our research is whether we can show end-to-end provable grokking as a result of a transition from the lazy to the rich regime of training neural networks. As discussed earlier, many existing works attribute grokking to such lazy-to-rich regime transitions, and have verified it empirically, but none provided a rigorous theoretical analysis proving it. Motivated by this, we have also explored grokking in a setting where we learn a target neural network teacher, by training a two-layer student network using GD with weight decay. Unlike the ridge regression setting addressed in this paper, proving that overfitting occurs remains challenging even under strong assumptions, such as a zero-teacher model where GD only optimizes the hidden layer weights. While a classical neural tangent kernel (NTK) analysis implies that each neuron remains close to its initialization during training, this does not guarantee persistent poor generalization; the collective contribution of $m$ neurons may still allow the trained network to converge toward the zero function.

We note that when training with a small weight decay coefficient, the overfitting solution that we obtain corresponds to training without weight decay, and the asymptotic solution is the minimum-norm interpolating predictor, which corresponds to training with zero initialization without weight decay \citep{gunasekar2018characterizing,vardi2021implicit}. Hence, in our setting grokking corresponds to a transition from a misspecified to a well-specified prior.
Moreover, \citet{lauditi2025adaptive} showed that weight decay in wide non-linear neural networks may cause the initial NTK to decay leaving only a data-dependent NTK. Thus, similarly to our work, in their setting weight decay changes an implicit prior. However, it is unclear whether their prior change can lead to grokking.

Finally, it is worth noting that grokking in regression tasks exhibits structural differences from its classification counterpart, particularly regarding the characteristic ``test error plateau''. In classification, this plateau is typically an artifact of evaluating a discrete metric (accuracy) rather than a continuous loss \citep[e.g.][Figure 1]{gromov2023grokking}. In regression, however, the continuous loss is the natural primary metric, which typically decays smoothly without an explicit plateau. To bridge this gap and investigate whether our setting captures this classic phenomenon, we define a threshold-based surrogate accuracy metric given by $\mathbb{P}_{\vx}((N(\vx;\vtheta^{(t)})-N^{*}(\vx))^{2}\leq\epsilon)$ for a fixed $\epsilon>0$. Remarkably, evaluating our model under this metric reveals the familiar plateauing phenomenon experimentally (see Figure~\ref{fig:comparing-grokking-with-accuracy-and-loss} in Appendix~\ref{sec:additional-experiments}). This confirms that the plateau is not unique to classification settings but can be recovered in linear regression under appropriate evaluation, offering a promising direction for future theoretical analysis.

\section*{Acknowledgements}
We thank the anonymous reviewers who provided useful suggestions to improve the quality of this paper. GV is supported by the Israel Science Foundation (grant No.\ 2574/25), by a research grant from Mortimer Zuckerman (the Zuckerman STEM Leadership Program), and by research grants from the Center for New Scientists at the Weizmann Institute of Science, and the Shimon and Golde Picker -- Weizmann Annual Grant. IS is supported by the Israel Science Foundation (grant No.\ 1753/25).

\section*{Impact Statement}
This paper presents work whose goal is to advance the field of Machine
Learning. Since this work is mainly theoretical in its nature, there are no societal implications that require discloser as far as we can discern.

% \clearpage
\bibliography{grokking_ridge_regression}
\bibliographystyle{icml2026}

%%%%%%%%%%%%%%%%%%%%%%%%%%%%%%%%%%%%%%%%%%%%%%%%%%%%%%%%%%%%%%%%%%%%%%%%%%%%%%%
%%%%%%%%%%%%%%%%%%%%%%%%%%%%%%%%%%%%%%%%%%%%%%%%%%%%%%%%%%%%%%%%%%%%%%%%%%%%%%%
% APPENDIX
%%%%%%%%%%%%%%%%%%%%%%%%%%%%%%%%%%%%%%%%%%%%%%%%%%%%%%%%%%%%%%%%%%%%%%%%%%%%%%%
%%%%%%%%%%%%%%%%%%%%%%%%%%%%%%%%%%%%%%%%%%%%%%%%%%%%%%%%%%%%%%%%%%%%%%%%%%%%%%%
\newpage
\appendix
\onecolumn

\section{Missing Proofs}
  \label{sec:missing-proofs}

\begin{theorem}  [\textbf{Theorem~\ref{thm:e2e-provable-grokking-rf-zero-teacher} restated}]
  \label{thm:e2e-provable-grokking-rf-zero-teacher-restated}
Assume that $0<\eta\leq 2/(L+2\lambda)$ and $\vtheta^{(0)}\sim\N(\bm{0},\nu^{2}\vI_{m})$. Then for any $t\in\naturalnumber$, we have
\begin{itemize}
    \item[$1.$] $L_{n}(\vtheta^{(t)}) \leq \frac{L}{2}\cdot(1-\frac{1}{n}\eta\lambda_{\mathrm{min}}^{+}(\vPhi^{\top}\vPhi)-\eta\lambda)^{2t}\cdot\lVert\vtheta^{(0)}\rVert_{2}^{2};$
    \item[$2.$] w.p. at least $1-2e^{-(m-n)/32}$, $L(\vtheta^{(t)}) \geq \lambda_{\mathrm{min}}(\vSigma)\cdot\left(1-\eta\lambda\right)^{2t}\cdot\frac{(m-n)\nu^{2}}{2};$
    \item[$3.$] $\lVert\vtheta^{(t)}\rVert_{2} \leq (1-\eta\lambda)^{t}\cdot\lVert\vtheta^{(0)}\rVert_{2}.$
\end{itemize}
\end{theorem}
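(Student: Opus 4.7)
Since $N^{*}(\vx) = 0$, every GD step is a linear map: defining $\vM := (1-\eta\lambda)\vI_{m} - \frac{\eta}{n}\vPhi^{\top}\vPhi$, Equation~\eqref{eq:gd-update-rf} becomes $\vtheta^{(t+1)} = \vM\vtheta^{(t)}$, so $\vtheta^{(t)} = \vM^{t}\vtheta^{(0)}$. My plan is to analyze $\vM$ via the invariant decomposition $\R^{m} = V_{\parallel} \oplus V_{\perp}$, where $V_{\parallel}$ is the span of positive-eigenvalue eigenvectors of $\vPhi^{\top}\vPhi$ and $V_{\perp} = \ker(\vPhi)$. On $V_{\perp}$, $\vM$ acts as $(1-\eta\lambda)\vI$; on $V_{\parallel}$, in an eigenbasis of $\vPhi^{\top}\vPhi$, it is diagonal with entries $1-\eta\lambda-\eta\mu_{j}/n$ where $\mu_{j} \in [\lambda_{\min}^{+}(\vPhi^{\top}\vPhi),\, nL]$ (the upper bound following from $\tfrac{1}{n}\operatorname{tr}(\vPhi^{\top}\vPhi) = L$). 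Writing $\vtheta^{(t)} = \vtheta_{\parallel}^{(t)} + \vtheta_{\perp}^{(t)}$, the orthogonal component evolves exactly as $\vtheta_{\perp}^{(t)} = (1-\eta\lambda)^{t}\vtheta_{\perp}^{(0)}$, while the in-data component decays at the spectral-radius rate of $\vM|_{V_{\parallel}}$.

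For Parts~1 and~3, I exploit the step-size condition $\eta \leq 2/(L+2\lambda)$, which rearranges to $\eta L + 2\eta\lambda \leq 2$ and hence gives $|1-\eta\lambda-\eta\mu/n| \leq 1-\eta\lambda$ for every $\mu \in [0, nL]$. Combined with the exact action on $V_{\perp}$, this shows $\|\vM\|_{\mathrm{op}} \leq 1-\eta\lambda$, proving Part~3 by iterating. For Part~1, I pass to the sample space via the commutation identity $\vPhi\vM = \vN\vPhi$ with $\vN := (1-\eta\lambda)\vI_{n} - \frac{\eta}{n}\vPhi\vPhi^{\top}$, so that $\vPhi\vtheta^{(t)} = \vN^{t}\vPhi\vtheta^{(0)}$. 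Since $\vPhi\vtheta^{(0)}$ lies in the column space of $\vPhi$, only the positive eigenvalues of $\vPhi\vPhi^{\top}$ (which coincide with those of $\vPhi^{\top}\vPhi$) enter the spectral bound
\begin{equation*}
    \|\vN^{t}\vPhi\vtheta^{(0)}\|_{2} \leq \left(1-\eta\lambda-\frac{\eta\lambda_{\min}^{+}(\vPhi^{\top}\vPhi)}{n}\right)^{t}\|\vPhi\vtheta^{(0)}\|_{2},
\end{equation*}
which holds after a short sign-based case analysis on $\mu \mapsto 1-\eta\lambda-\eta\mu/n$ (with the step-size condition controlling the negative regime). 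Combining with $\|\vPhi\vtheta^{(0)}\|_{2}^{2} \leq \|\vPhi\|_{\mathrm{op}}^{2}\|\vtheta^{(0)}\|_{2}^{2} \leq nL\|\vtheta^{(0)}\|_{2}^{2}$ and $2n\,L_{n}(\vtheta^{(t)}) = \|\vPhi\vtheta^{(t)}\|_{2}^{2}$ yields Part~1.

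For Part~2, I lower-bound $L(\vtheta^{(t)}) = \vtheta^{(t)\top}\vSigma\vtheta^{(t)} \geq \lambda_{\min}(\vSigma)\|\vtheta_{\perp}^{(t)}\|_{2}^{2}$ and use the exact evolution $\vtheta_{\perp}^{(t)} = (1-\eta\lambda)^{t}\vtheta_{\perp}^{(0)}$, reducing the task to a lower bound on $\|\vtheta_{\perp}^{(0)}\|_{2}^{2}$. Since $\vtheta^{(0)} \sim \N(\bm{0},\nu^{2}\vI_{m})$ and $V_{\perp}$ has dimension $m - \mathrm{rank}(\vPhi) \geq m-n$, the orthogonal projection of $\vtheta^{(0)}$ onto $V_{\perp}$ is a centered Gaussian there, so $\|\vtheta_{\perp}^{(0)}\|_{2}^{2}/\nu^{2}$ stochastically dominates a $\chi^{2}_{m-n}$ random variable. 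A Bernstein-type lower-tail bound for chi-squared then gives $\|\vtheta_{\perp}^{(0)}\|_{2}^{2} \geq (m-n)\nu^{2}/2$ with probability at least $1 - 2e^{-(m-n)/32}$, completing the proof. The step I expect to be most delicate is the sharp spectral-radius estimate on $V_{\parallel}$ in Part~1: the step-size condition alone does not force every eigenvalue of $\vN$ restricted to $\mathrm{col}(\vPhi)$ to be nonnegative, so when $1-\eta\lambda-\eta\mu/n < 0$ one must separately verify that $\eta\mu/n + \eta\lambda - 1 \leq 1 - \eta\lambda - \eta\lambda_{\min}^{+}(\vPhi^{\top}\vPhi)/n$. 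Once this sign bookkeeping is handled, everything else reduces to routine linear algebra on the invariant decomposition and a textbook Gaussian concentration estimate.
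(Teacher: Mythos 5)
Your proposal is correct and, at its core, follows the same route as the paper: the same orthogonal decomposition into the row space of $\vPhi$ and its null space, the same exact geometric decay $\vtheta_{\perp}^{(t)}=(1-\eta\lambda)^{t}\vtheta_{\perp}^{(0)}$ driving the lower bound in Part~2, and the same chi-squared lower-tail estimate giving $\lVert\vtheta_{\perp}^{(0)}\rVert_{2}^{2}\geq (m-n)\nu^{2}/2$ with probability $1-2e^{-(m-n)/32}$. The two places you deviate are cosmetic but worth noting: for Part~1 you pass to sample space via $\vPhi\vM=\vN\vPhi$ with $\vN=(1-\eta\lambda)\vI_{n}-\frac{\eta}{n}\vPhi\vPhi^{\top}$ and use $2nL_{n}(\vtheta)=\lVert\vPhi\vtheta\rVert_{2}^{2}$ together with $\lVert\vPhi\rVert_{\mathrm{op}}^{2}\leq\lVert\vPhi\rVert_{F}^{2}=nL$, whereas the paper stays in parameter space and converts $\lVert\vtheta_{\parallel}^{(t)}\rVert_{2}^{2}$ into a loss bound through the $L$-smoothness of $L_{n}$; both yield exactly the constant $L/2$. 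For Part~3 you read off $\lVert\vM\rVert_{\mathrm{op}}\leq 1-\eta\lambda$ from the spectrum (valid since $\eta(L+2\lambda)\leq 2$ gives $|1-\eta\lambda-\eta\mu/n|\leq 1-\eta\lambda$ for $\mu\in[0,nL]$), while the paper expands $\lVert\vtheta^{(t+1)}\rVert_{2}^{2}$ directly and uses Cauchy--Schwarz; for the linear zero-teacher dynamics these are equivalent, and your version is arguably cleaner.

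The one piece you defer --- the ``sign bookkeeping'' for the sharp contraction in Part~1 --- does close, but not by the estimate you set up. The trace bound $\mu\leq nL$ alone gives only $\eta\mu/n+\eta\lambda-1\leq 1-\eta\lambda$, which is weaker by the needed $\eta\lambda_{\mathrm{min}}^{+}(\vPhi^{\top}\vPhi)/n$ term. The fix is a pairing argument: if some positive eigenvalue $\mu>\lambda_{\mathrm{min}}^{+}(\vPhi^{\top}\vPhi)$ lands in the negative regime, then both $\mu$ and $\lambda_{\mathrm{min}}^{+}(\vPhi^{\top}\vPhi)$ appear in $\mathrm{tr}(\vPhi^{\top}\vPhi)=nL$, so $\mu+\lambda_{\mathrm{min}}^{+}(\vPhi^{\top}\vPhi)\leq nL$ and the step-size condition gives exactly $\eta\mu/n+\eta\lambda-1\leq 1-\eta\lambda-\eta\lambda_{\mathrm{min}}^{+}(\vPhi^{\top}\vPhi)/n$; in the remaining case $\mu=\lambda_{\mathrm{min}}^{+}(\vPhi^{\top}\vPhi)$ (e.g.\ rank one) the two sides have equal absolute value, which suffices because the exponent $2t$ is even. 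With that observation your spectral-radius claim $\max_{j}|1-\eta\lambda-\eta\mu_{j}/n|\leq|1-\eta\lambda-\eta\lambda_{\mathrm{min}}^{+}(\vPhi^{\top}\vPhi)/n|$ holds, and your proof is complete; it is worth adding, since the paper itself asserts this contraction without addressing the possibly negative factors at all, so your flagged case analysis is actually a point where your write-up is more careful than the original.
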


\begin{proof}[Proof of Theorem~\ref{thm:e2e-provable-grokking-rf-zero-teacher-restated}]
The proof of Theorem~\ref{thm:e2e-provable-grokking-rf-zero-teacher-restated} follows from the following Theorem~\ref{thm:training-performance-rf}, Theorem~\ref{thm:overfitting-rf} and Theorem~\ref{thm:generalization-rf}.
\end{proof}

\begin{theorem}  [\textbf{Training loss convergence}]
  \label{thm:training-performance-rf}
For all $t\in\naturalnumber$, we have
\begin{equation*}
    L_{n}(\vtheta^{(t)}) \leq \frac{L}{2}\cdot\left(1-\frac{\eta\lambda_{\mathrm{min}}^{+}(\vPhi^{\top}\vPhi)}{n}-\eta\lambda\right)^{2t}\cdot\lVert\vtheta^{(0)}\rVert_{2}^{2} .
\end{equation*}
\end{theorem}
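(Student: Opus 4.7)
The plan is to exploit that, under the zero teacher, the GD recursion in Equation~\eqref{eq:gd-update-rf} becomes a first-order linear iteration in $\vtheta^{(t)}$ alone that admits a closed form. Setting $\vA := \vPhi^{\top}\vPhi/n$, the update reads $\vtheta^{(t+1)} = \vB\vtheta^{(t)}$ with $\vB := (1-\eta\lambda)\vI_{m} - \eta\vA$, and iterating gives $\vtheta^{(t)} = \vB^{t}\vtheta^{(0)}$. Since $\vA$ and $\vB$ are symmetric and commute, the training loss admits the compact form
\begin{equation*}
L_{n}(\vtheta^{(t)}) \;=\; \tfrac{1}{2}\vtheta^{(t)\top}\vA\,\vtheta^{(t)} \;=\; \tfrac{1}{2}\vtheta^{(0)\top}\vA\,\vB^{2t}\,\vtheta^{(0)}.
\end{equation*}

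Next, I would diagonalize $\vA = \sum_{i}\tilde{\mu}_{i}\vu_{i}\vu_{i}^{\top}$, where the $\tilde{\mu}_{i}$ are the eigenvalues of $\vPhi^{\top}\vPhi/n$ and the $\vu_{i}$ form an orthonormal basis. Writing $c_{i}:=\langle\vu_{i},\vtheta^{(0)}\rangle$, $\beta_{i}:=1-\eta\lambda-\eta\tilde{\mu}_{i}$, and $I_{+}:=\{i:\tilde{\mu}_{i}>0\}$, this yields
\begin{equation*}
L_{n}(\vtheta^{(t)}) \;=\; \tfrac{1}{2}\sum_{i\in I_{+}}\tilde{\mu}_{i}\,\beta_{i}^{2t}\,c_{i}^{2},
\end{equation*}
since the null-space directions of $\vA$ contribute zero to the unregularized loss. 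The remaining task is to factor out a uniform upper bound on $|\beta_{i}|^{2t}$ over $i\in I_{+}$ and then control $\sum_{i}\tilde{\mu}_{i}c_{i}^{2}$.

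For the spectral factor, since $\tilde{\mu}_{i}\ge\lambda_{\mathrm{min}}^{+}(\vPhi^{\top}\vPhi)/n$ on $I_{+}$ and $\tilde{\mu}_{i}\le\lambda_{\max}(\vA)\le\mathrm{tr}(\vA)=L$, the step-size assumption $\eta\le 2/(L+2\lambda)$ is intended to guarantee that $|\beta_{i}|\le 1-\eta\lambda-\eta\lambda_{\mathrm{min}}^{+}(\vPhi^{\top}\vPhi)/n$ for every $i\in I_{+}$. Factoring this estimate out and then using
\begin{equation*}
\sum_{i}\tilde{\mu}_{i}c_{i}^{2} \;=\; \vtheta^{(0)\top}\vA\,\vtheta^{(0)} \;\le\; \mathrm{tr}(\vA)\,\|\vtheta^{(0)}\|_{2}^{2} \;=\; L\,\|\vtheta^{(0)}\|_{2}^{2}
\end{equation*}
closes the proof.

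The main obstacle is the uniform spectral bound $|\beta_{i}|\le 1-\eta\lambda-\eta\lambda_{\mathrm{min}}^{+}(\vPhi^{\top}\vPhi)/n$ on $I_{+}$. When $\eta\tilde{\mu}_{i}+\eta\lambda>1$ (possible for large $\tilde{\mu}_{i}$), $\beta_{i}$ becomes negative, and I must verify that its absolute value at the upper endpoint $\tilde{\mu}_{i}=\lambda_{\max}(\vA)$ does not exceed its value at the lower endpoint $\tilde{\mu}_{i}=\lambda_{\mathrm{min}}^{+}(\vPhi^{\top}\vPhi)/n$. The step-size bound $\eta\le 2/(L+2\lambda)$ together with $\lambda_{\max}(\vA)\le L$ is the right tool, but the check is delicate at the boundary where $\eta\tilde{\mu}_{i}+\eta\lambda$ is close to $1$; everything else in the argument is routine linear algebra.
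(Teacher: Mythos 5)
Your overall route is the paper's argument made spectral: the paper splits $\vtheta^{(t)}$ into its row-space and null-space components, observes that $L_{n}$ depends only on the former, bounds $L_{n}(\vtheta^{(t)})\le\frac{L}{2}\lVert\vtheta^{(t)}_{\parallel}\rVert_{2}^{2}$ via $L$-smoothness (your bound $\vtheta^{(0)\top}\vA\,\vtheta^{(0)}\le\mathrm{tr}(\vA)\lVert\vtheta^{(0)}\rVert_{2}^{2}$ plays exactly that role), and contracts the row-space component by the factor $1-\eta\lambda_{\mathrm{min}}^{+}(\vPhi^{\top}\vPhi)/n-\eta\lambda$ per step, which is your $\max_{i\in I_{+}}\lvert\beta_{i}\rvert$ bound. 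Everything up to that last bound is correct.

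However, the step you flag and do not carry out is a genuine gap, and the tools you name do not close it. For an index with $\beta_{i}<0$, the required inequality $\lvert\beta_{i}\rvert\le 1-\eta\lambda-\eta\tilde{\mu}_{\min}^{+}$ is equivalent to $\eta(\tilde{\mu}_{i}+\tilde{\mu}_{\min}^{+}+2\lambda)\le 2$; substituting only $\tilde{\mu}_{i}\le\lambda_{\max}(\vA)\le L$ and $\eta\le 2/(L+2\lambda)$ leaves the surplus term $\eta\tilde{\mu}_{\min}^{+}$, so $\lambda_{\max}(\vA)\le L$ alone is not ``the right tool.'' What closes it is a two-eigenvalue trace bound: if $\vA=\vPhi^{\top}\vPhi/n$ has at least two positive eigenvalues (counted with multiplicity), then $\tilde{\mu}_{i}+\tilde{\mu}_{\min}^{+}\le\mathrm{tr}(\vA)=L$ for every $i\in I_{+}$, and the step-size condition then yields both $1-\eta\lambda-\eta\tilde{\mu}_{\min}^{+}\ge 0$ and $\lvert\beta_{i}\rvert\le 1-\eta\lambda-\eta\tilde{\mu}_{\min}^{+}$. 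In the remaining rank-one case your stated inequality can genuinely fail as written (its right-hand side may be negative when $\eta(L+\lambda)>1$), but the theorem survives because there is only one $\beta_{i}$ and the claimed factor is precisely $\beta_{i}^{2t}$, an even power. Note also that some step-size restriction is indispensable: with eigenvalues $1$ and $0.1$ of $\vA$, $\lambda=0$ and $\eta=10$, the loss along the top direction grows like $81^{t}$ while the claimed factor vanishes, so the ``delicate check'' is not a formality. To be fair, the paper's own proof asserts the analogous contraction of $\lVert\vtheta^{(t)}_{\parallel}\rVert_{2}$ without spelling out this verification either; with the trace observation and the rank-one caveat added, your proof is complete and faithful to the paper's.
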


\begin{remark}
    Let $\vGamma:=\frac{1}{n}\vPhi^{\top}\vPhi+\lambda\vI_{m}$ 
    %denote the regularized kernel matrix 
    and let $\lambda_{\mathrm{min}}(\vGamma)$ denote its smallest eigenvalue. Following a standard argument for showing convergence for ridge regression, we can easily prove $L_{n}(\vtheta^{(t)}) \leq L(1-\eta\lambda_{\mathrm{min}}(\vGamma))^{2t}\lVert\vtheta^{(0)}\rVert_{2}^{2}$. To show grokking, we need to prove that the convergence of the training error is faster than the generalization error. However, when the feature map $\vPhi$ is not full rank, $\lambda_{\mathrm{min}}(\vGamma)=\lambda$, making this rate match the lower bound in Theorem~\ref{thm:overfitting-rf} below, and thus does not suffice to prove grokking. Therefore, a more tight bound on the convergence rate is required here.
\end{remark}

\begin{proof}[Proof of Theorem \ref{thm:training-performance-rf}]
For any vector $\vtheta\in\R^{m}$, we write the following unique decomposition $\vtheta=\vtheta_{\parallel}+\vtheta_{\perp}$ where $\vtheta_{\parallel}$ is in the row space of $\vPhi$ and $\vtheta_{\perp}$ is orthogonal to the row space of $\vPhi$ (or equivalently, $\vtheta_{\perp}$ is in the null space of $\vPhi$). We can express the training loss (at $\vtheta$)
\begin{equation*}
    L_{n}\left(\vtheta\right) = \frac{1}{2n}\sum_{i=1}^{n}\left(\left\langle\vtheta, \vphi(\vx_{i})\right\rangle\right)^{2} = \frac{1}{2n}\sum_{i=1}^{n}\left(\left\langle\vtheta_{\parallel}, \vphi(\vx_{i})\right\rangle\right)^{2} = L_{n}(\vtheta_{\parallel})
\end{equation*}
as a function of $\vtheta_{\parallel}$. Note that
\begin{align*}
    \left\lVert\nabla_{\vtheta}L_{n}(\vtheta)-\nabla_{\vtheta}L_{n}(\vtheta^{\prime})\right\rVert_{2} = &\left\lVert\frac{1}{n}\sum_{i=1}^{n}N(\vx_{i};\vtheta)\vphi(\vx_{i})-\frac{1}{n}\sum_{i=1}^{n}N(\vx_{i};\vtheta^{\prime})\vphi(\vx_{i})\right\rVert_{2} \\
    = &\left\lVert\frac{1}{n}\sum_{i=1}^{n}\vphi(\vx_{i})\left\langle\vtheta-\vtheta^{\prime},\vphi(\vx_{i})\right\rangle\right\rVert_{2} \\
    \leq &\sqrt{\left(\frac{1}{n}\sum_{i=1}^{n}\left(\left\langle\vtheta-\vtheta^{\prime},\vphi(\vx_{i})\right\rangle\right)^{2}\right)\left(\frac{1}{n}\sum_{i=1}^{n}\left\lVert\vphi(\vx_{i})\right\rVert_{2}^{2}\right)} \\
    \leq &\left(\frac{1}{n}\sum_{i=1}^{n}\left\lVert\vphi(\vx_{i})\right\rVert_{2}^{2}\right)\cdot\left\lVert\vtheta-\vtheta^{\prime}\right\rVert_{2} = L\left\lVert\vtheta-\vtheta^{\prime}\right\rVert_{2} .
\end{align*}
Due to the above Lipschitz continuity of the gradient, we have
\begin{align}
  \label{eq:training-performance-rf-step1}
    L_{n}(\vtheta^{(t)}) - L_{n}(\vtheta^{*}) = &L_{n}(\vtheta^{(t)}_{\parallel}) - L_{n}(\vtheta^{*}_{\parallel}) \nonumber \\
    \leq &\left(\nabla_{\vtheta}L_{n}(\vtheta^{*}_{\parallel})\right)^{\top}\left(\vtheta^{(t)}_{\parallel}-\vtheta^{*}_{\parallel}\right) + \frac{L}{2}\left\lVert\vtheta^{(t)}_{\parallel}-\vtheta^{*}_{\parallel}\right\rVert_{2}^{2} = \frac{L}{2}\left\lVert\vtheta^{(t)}_{\parallel}-\vtheta^{*}_{\parallel}\right\rVert_{2}^{2} .
\end{align}
Next, we write out the GD iteration as
\begin{align}
  \label{eq:training-performance-rf-step3}
    \vtheta^{(t)} = &\vtheta^{(t-1)} - \frac{\eta}{n}\sum_{i=1}^{n}N(\vx_{i};\vtheta^{(t-1)})\vphi(\vx_{i}) - \eta\lambda\vtheta^{(t-1)} \nonumber \\
    = &\vtheta^{(t-1)} - \frac{\eta}{n}\sum_{i=1}^{n}\left\langle\vtheta^{(t-1)},\vphi(\vx_{i})\right\rangle\vphi(\vx_{i}) - \eta\lambda\vtheta^{(t-1)} \nonumber \\
    = &\left(\vI_{m}-\eta\left(\frac{1}{n}\vPhi^{\top}\vPhi+\lambda\vI_{m}\right)\right)\vtheta^{(t-1)} .
\end{align}
Based on the orthogonal decomposition, we can further write
\begin{align*}
    \vtheta_{\parallel}^{(t+1)} = &\vtheta_{\parallel}^{(t)} - \eta\vPhi^{\top}\vPhi\vtheta_{\parallel}^{(t)} -\eta\lambda\vtheta_{\parallel}^{(t)} , \\
    \vtheta_{\perp}^{(t+1)} = &\vtheta_{\perp}^{(t)} - \eta\lambda\vtheta_{\perp}^{(t)} = \left(1-\eta\lambda\right)\vtheta_{\perp}^{(t)} .
\end{align*}
Let $\lambda_{\mathrm{min}}^{+}(\vPhi^{\top}\vPhi)$ denote the smallest non-zero eigenvalue of the matrix $\vPhi^{\top}\vPhi$. It follows that
\begin{equation}
  \label{eq:training-performance-rf-step4}
    \lVert\vtheta^{(t)}_{\parallel}\rVert_{2}^{2}
    = \left\lVert\left(\vI_{m}-\frac{\eta}{n}\vPhi^{\top}\vPhi-\eta\lambda\vI_{m}\right)^{t}\vtheta^{(0)}_{\parallel}\right\rVert_{2}^{2} \leq \left(1-\frac{\eta\lambda_{\mathrm{min}}^{+}(\vPhi^{\top}\vPhi)}{n}-\eta\lambda\right)^{2t}\lVert\vtheta^{(0)}_{\parallel}\rVert_{2}^{2} .
\end{equation}
To have the above hold, it suffices to choose a sufficiently small step size
\begin{equation*}
    \eta < \frac{1}{\lambda+(1/n)\lambda_{\mathrm{max}}(\vPhi^{\top}\vPhi)}~.
\end{equation*}
Since $\vtheta^{*}=\bm{0}$ and $L_{n}(\vtheta^{*})=0$, we have from Equations~\eqref{eq:training-performance-rf-step1} and \eqref{eq:training-performance-rf-step4} that
\begin{equation*}
    L_{n}\left(\vtheta^{(t)}\right) \leq \frac{L}{2}\lVert\vtheta^{(t)}_{\parallel}\rVert_{2}^{2} \leq \frac{L}{2}\cdot\left(1-\frac{\eta\lambda_{\mathrm{min}}^{+}(\vPhi^{\top}\vPhi)}{n}-\eta\lambda\right)^{2t}\cdot\lVert\vtheta^{(0)}_{\parallel}\rVert_{2}^{2} .
\end{equation*}
Note that the convergence rate of $(1-\eta\lambda_{\mathrm{min}}^{+}(\vPhi^{\top}\vPhi)/n-\eta\lambda/n)^{2t}$ is always faster than $(1-\eta\lambda/n)^{2t}$. In particular, a small weight decay $\lambda$ will amplify grokking. The proof is done!
\end{proof}

\begin{theorem}  [\textbf{Overfitting}]
  \label{thm:overfitting-rf}
Assume that $\vtheta^{(0)}\sim\N(\bm{0},\nu^{2}\vI_{m})$. Then, for all $t\in\naturalnumber$, we have
\begin{equation*}
    L(\vtheta^{(t)}) \geq \lambda_{\mathrm{min}}(\vSigma)\cdot\left(1-\eta\lambda\right)^{2t}\cdot\frac{(m-n)\nu^{2}}{2} ,
\end{equation*}
with probability at least $1-2e^{-(m-n)/32}$.
\end{theorem}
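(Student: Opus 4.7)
The plan is to exploit the orthogonal decomposition $\vtheta^{(t)}=\vtheta^{(t)}_{\parallel}+\vtheta^{(t)}_{\perp}$ already introduced in the proof of Theorem~\ref{thm:training-performance-rf}, where $\vtheta^{(t)}_{\perp}$ lies in the null space of $\vPhi$. Since $\vPhi^{\top}\vPhi$ annihilates this subspace, the GD recursion in Equation~\eqref{eq:training-performance-rf-step3} collapses there to pure weight-decay shrinkage, giving $\vtheta^{(t)}_{\perp}=(1-\eta\lambda)^{t}\vtheta^{(0)}_{\perp}$. This component never ``sees'' the training data, so its norm decays at a rate controlled only by $\lambda$, which is exactly the mechanism that forces persistent poor generalization in the grokking regime.

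I would then lower bound the population loss using only the perpendicular component. Writing $L(\vtheta^{(t)})=(\vtheta^{(t)})^{\top}\vSigma\vtheta^{(t)}\geq\lambda_{\mathrm{min}}(\vSigma)\lVert\vtheta^{(t)}\rVert_{2}^{2}$ and using $\lVert\vtheta^{(t)}\rVert_{2}^{2}\geq\lVert\vtheta^{(t)}_{\perp}\rVert_{2}^{2}$ by orthogonality, together with the recursion above, yields
\begin{equation*}
    L(\vtheta^{(t)})\;\geq\;\lambda_{\mathrm{min}}(\vSigma)\cdot(1-\eta\lambda)^{2t}\cdot\lVert\vtheta^{(0)}_{\perp}\rVert_{2}^{2}.
\end{equation*}
What remains is to lower bound $\lVert\vtheta^{(0)}_{\perp}\rVert_{2}^{2}$ with high probability over the random initialization.

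Because $\vPhi\in\R^{n\times m}$, its null space has dimension at least $m-n$. Rotation invariance of the isotropic Gaussian implies that the projection of $\vtheta^{(0)}\sim\N(\bm{0},\nu^{2}\vI_{m})$ onto any fixed subspace of dimension $k\geq m-n$ (measurable with respect to $\vPhi$, which is independent of $\vtheta^{(0)}$) is an isotropic Gaussian of variance $\nu^{2}$ per coordinate, so $\lVert\vtheta^{(0)}_{\perp}\rVert_{2}^{2}/\nu^{2}$ stochastically dominates a $\chi^{2}_{m-n}$ random variable. A standard lower-tail concentration bound for chi-squared (e.g.\ Laurent--Massart) then gives $\lVert\vtheta^{(0)}_{\perp}\rVert_{2}^{2}\geq (m-n)\nu^{2}/2$ with probability at least $1-2e^{-(m-n)/32}$, and combining this with the deterministic display above yields the claim for all $t\in\naturalnumber$ simultaneously, since the only randomness is in the initialization.

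The main obstacle is conceptual rather than technical: one must recognize that the GD dynamics decouple along the null space of $\vPhi$, so no amount of training ever modifies $\vtheta^{(t)}_{\perp}$ except via the isotropic weight-decay factor. Once this decoupling is observed and the high-dimensionality $m\gg n$ is used to guarantee a sizeable null space, the eigenvalue lower bound on $\vSigma$ and the Gaussian/chi-squared concentration step are routine.
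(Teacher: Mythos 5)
Your proposal is correct and follows essentially the same route as the paper's proof: lower bound $L(\vtheta^{(t)})$ by $\lambda_{\mathrm{min}}(\vSigma)\lVert\vtheta^{(t)}_{\perp}\rVert_{2}^{2}$, note that the null-space component evolves purely as $(1-\eta\lambda)^{t}\vtheta^{(0)}_{\perp}$, and apply chi-squared lower-tail concentration to $\lVert\vtheta^{(0)}_{\perp}\rVert_{2}^{2}$ over a subspace of dimension at least $m-n$. The only cosmetic difference is the choice of concentration reference (Laurent--Massart versus the cited Fact 5 of Dudeja--Hsu), which is immaterial.
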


\begin{proof}[Proof of Theorem \ref{thm:overfitting-rf}]
By definition, the generalization error satisfies 
\begin{equation*}
    \E_{\vx}\left[\left(N(\vx;\vtheta^{(t)})-N^{*}(\vx)\right)^{2}\right] = (\vtheta^{(t)})^{\top}\vSigma\vtheta^{(t)} \geq \lambda_{\mathrm{min}}(\vSigma)\cdot\lVert\vtheta^{(t)}\rVert_{2}^{2} .
\end{equation*}
Recall that $\vPhi:=(\vphi(\vx_{1}),\ldots,\vphi(\vx_{n}))^{\top}$ and $\vtheta^{(t+1)}=\vtheta^{(t)}-(\eta/n)\vPhi^{\top}\vPhi\vtheta^{(t)}-\eta\lambda\vtheta^{(t)}$. For any $\vtheta\in\R^{m}$, we inherit the notations $\vtheta=\vtheta_{\parallel}+\vtheta_{\perp}$ used in the proof of Theorem~\ref{thm:training-performance-rf}. It holds that
\begin{align*}
    \vtheta_{\parallel}^{(t+1)} = &\vtheta_{\parallel}^{(t)} - (\eta/n)\vPhi^{\top}\vPhi\vtheta_{\parallel}^{(t)} -\eta\lambda\vtheta_{\parallel}^{(t)} , \\
    \vtheta_{\perp}^{(t+1)} = &\vtheta_{\perp}^{(t)} - \eta\lambda\vtheta_{\perp}^{(t)} = \left(1-\eta\lambda\right)\vtheta_{\perp}^{(t)} .
\end{align*}
Hence, we have at step $t>0$ that
\begin{equation*}
    \vtheta_{\perp}^{(t)} = \left(1-\eta\lambda\right)\vtheta_{\perp}^{(t-1)} = \cdots = \left(1-\eta\lambda\right)^{t}\vtheta_{\perp}^{(0)} .
\end{equation*}
Note that $\vtheta_{\perp}^{(0)}$ lies in a subspace of dimensionality $k$, which is at least $m-n$. Let $\ve_{1},\ldots,\ve_{k}$ represent an orthogonal basis for the subspace, we have
\begin{equation*}
    \lVert\vtheta_{\perp}^{(0)}\rVert_{2}^{2} = \sum_{j=1}^{k}\left(\langle\vtheta^{(0)},\ve_{j}\rangle\right)^{2} .
\end{equation*}
Since $\vtheta^{(0)}\sim\N(\bm{0},\nu^{2}\vI_{d})$, we have that $\langle\vtheta^{(0)},\ve_{j}\rangle, j\in[k]$ are independent $\N(0,\nu^{2})$ random variables. By standard concentration inequality for Chi-squared distribution \citep[c.f.][Fact 5]{dudeja2018learning}, we have that with probability at least $1-2\exp(-k/32)$,
\begin{equation*}
    \lVert\vtheta_{\perp}^{(0)}\rVert_{2}^{2} = \sum_{j=1}^{k}\left(\langle\vtheta^{(0)},\ve_{j}\rangle\right)^{2} \geq \frac{k\nu^{2}}{2} .
\end{equation*}
Since $k\geq m-n$, we have with probability of at least $1-2\exp(-(m-n)/32)$,
\begin{equation*}
    \lVert\vtheta^{(t)}\rVert_{2}^{2} \geq \lVert\vtheta_{\perp}^{(t)}\rVert_{2}^{2} = \left(1-\eta\lambda\right)^{2t}\lVert\vtheta_{\perp}^{(0)}\rVert_{2}^{2} \geq \left(1-\eta\lambda\right)^{2t}\frac{(m-n)\nu^{2}}{2} .
\end{equation*}
It follows that
\begin{equation*}
    \E_{\vx}\left[\left(N(\vx;\vtheta^{(t)})-N^{*}(\vx)\right)^{2}\right] \geq \lambda_{\mathrm{min}}(\vSigma)\cdot\left(1-\eta\lambda\right)^{2t}\cdot\frac{(m-n)\nu^{2}}{2} ,
\end{equation*}
with probability at least $1-2e^{-(m-n)/32}$.
\end{proof}

\begin{theorem}  [\textbf{Generalization}]
  \label{thm:generalization-rf}
Assume that $0<\eta\leq 2/(L+2\lambda)$. Then, for all $t\in\naturalnumber$, we have
\begin{equation*}
    \lVert\vtheta^{(t)}\rVert_{2} \leq (1-\eta\lambda)^{t}\cdot\lVert\vtheta^{(0)}\rVert_{2} .
\end{equation*}
\end{theorem}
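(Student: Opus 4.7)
The plan is to iterate the closed-form GD recursion and then bound the spectral norm of the per-step operator by $(1-\eta\lambda)$ using the assumed step-size condition. Since this is the zero-teacher setting, the GD update derived earlier (Equation~\eqref{eq:training-performance-rf-step3}) simplifies to the clean linear recursion
\begin{equation*}
    \vtheta^{(t+1)} = \bigl(\vI_{m} - \eta\vGamma\bigr)\vtheta^{(t)}, \qquad \vGamma := \tfrac{1}{n}\vPhi^{\top}\vPhi + \lambda\vI_{m},
\end{equation*}
so that $\vtheta^{(t)} = (\vI_{m}-\eta\vGamma)^{t}\vtheta^{(0)}$. Thus the entire claim reduces to showing $\lVert\vI_{m}-\eta\vGamma\rVert_{\mathrm{op}} \leq 1-\eta\lambda$, after which submultiplicativity gives the bound $\lVert\vtheta^{(t)}\rVert_{2} \leq (1-\eta\lambda)^{t}\lVert\vtheta^{(0)}\rVert_{2}$.

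To control the operator norm, I would diagonalize $\vGamma$, which is symmetric PSD. The matrix $(1/n)\vPhi^{\top}\vPhi$ has nonnegative eigenvalues summing to $(1/n)\mathrm{tr}(\vPhi^{\top}\vPhi) = (1/n)\lVert\vPhi\rVert_{F}^{2} = L$, so all its eigenvalues lie in $[0, L]$. Consequently every eigenvalue $\mu$ of $\vGamma$ lies in $[\lambda, L+\lambda]$, and every eigenvalue of $\vI_{m} - \eta\vGamma$ has the form $1-\eta\mu$ for some such $\mu$. I would then show $|1-\eta\mu| \leq 1-\eta\lambda$ by a short case split: for $\mu = \lambda$ the value is exactly $1-\eta\lambda$, and since $1-\eta\mu$ is decreasing in $\mu$, for larger $\mu$ the value only decreases. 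The only subtle point is when $1-\eta\mu$ becomes negative, which can happen for $\mu = L+\lambda$ if $\eta(L+\lambda) > 1$; here I would invoke the assumption $\eta \leq 2/(L+2\lambda)$ to check that $\eta\mu - 1 \leq 1 - \eta\lambda$, which is equivalent to $\eta(L + 2\lambda) \leq 2$.

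Once the spectral bound is in hand, the proof concludes in one line:
\begin{equation*}
    \lVert\vtheta^{(t)}\rVert_{2} = \lVert(\vI_{m}-\eta\vGamma)^{t}\vtheta^{(0)}\rVert_{2} \leq \lVert\vI_{m}-\eta\vGamma\rVert_{\mathrm{op}}^{t}\,\lVert\vtheta^{(0)}\rVert_{2} \leq (1-\eta\lambda)^{t}\lVert\vtheta^{(0)}\rVert_{2}.
\end{equation*}
There is no real obstacle here; the only place one has to be careful is the sign-handling in the eigenvalue inequality, which is precisely where the step-size condition $\eta \leq 2/(L+2\lambda)$ is used (and explains why this particular condition appears in the hypothesis rather than the more permissive $\eta \leq 1/(L+\lambda)$). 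Note that this norm-decay result, together with $\vSigma \succeq \bm{0}$, immediately yields $L(\vtheta^{(t)}) = (\vtheta^{(t)})^{\top}\vSigma\vtheta^{(t)} \leq \lVert\vSigma\rVert_{\mathrm{op}}(1-\eta\lambda)^{2t}\lVert\vtheta^{(0)}\rVert_{2}^{2} \to 0$, delivering the generalization guarantee for the zero teacher.
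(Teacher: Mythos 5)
Your proof is correct, but it takes a genuinely different route from the paper. You exploit the fact that in the zero-teacher case GD is an exact linear recursion, $\vtheta^{(t+1)}=(\vI_{m}-\eta\vGamma)\vtheta^{(t)}$ with $\vGamma=\frac{1}{n}\vPhi^{\top}\vPhi+\lambda\vI_{m}$, and bound $\lVert\vI_{m}-\eta\vGamma\rVert_{\mathrm{op}}\leq 1-\eta\lambda$ spectrally, using $\lambda_{\mathrm{max}}\bigl(\frac{1}{n}\vPhi^{\top}\vPhi\bigr)\leq\mathrm{tr}\bigl(\frac{1}{n}\vPhi^{\top}\vPhi\bigr)=L$ and the case split on the sign of $1-\eta\mu$; your identification of $\eta\leq 2/(L+2\lambda)$ as exactly what is needed to handle the negative branch ($\eta(L+\lambda)-1\leq 1-\eta\lambda$) is the right accounting, and the edge condition $\eta\lambda\leq 1$ is also implied by the same assumption. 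The paper instead never diagonalizes: it expands $\lVert\vtheta^{(t+1)}\rVert_{2}^{2}=(1-\eta\lambda)^{2}\lVert\vtheta^{(t)}\rVert_{2}^{2}+\eta^{2}\lVert\nabla_{\vtheta}L_{n}(\vtheta^{(t)})\rVert_{2}^{2}-2\eta(1-\eta\lambda)\langle\nabla_{\vtheta}L_{n}(\vtheta^{(t)}),\vtheta^{(t)}\rangle$ and shows the last two terms are jointly nonpositive via Cauchy--Schwarz, $\lVert\nabla_{\vtheta}L_{n}(\vtheta)\rVert_{2}^{2}\leq L\cdot\frac{1}{n}\sum_{i=1}^{n}N(\vx_{i};\vtheta)^{2}=L\,\langle\nabla_{\vtheta}L_{n}(\vtheta),\vtheta\rangle$, which yields the same step-size threshold. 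Your spectral argument is arguably cleaner and makes transparent that the true per-step contraction factor is $\max\{1-\eta\lambda,\,\eta\lambda_{\mathrm{max}}(\vGamma)-1\}$ (so the bound could even be sharpened in terms of $\lambda_{\mathrm{max}}(\frac{1}{n}\vPhi^{\top}\vPhi)$ rather than $L$), but it relies on the update being a fixed linear map; the paper's descent-style computation works directly with the gradient and inner product and is the kind of argument that ports more readily to settings where one only controls $\lVert\nabla L_{n}\rVert_{2}$ and $\langle\nabla L_{n},\vtheta\rangle$ rather than an explicit iteration matrix.
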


\begin{remark}
    Note that a student network with zero output weights is exactly the zero function.
\end{remark}

\begin{proof}[Proof of Theorem \ref{thm:generalization-rf}]
We analyze the behavior of the regularization. Note that
\begin{align*}
    \left\lVert\vtheta^{(t+1)}\right\rVert_{2}^{2} = &\left\lVert\vtheta^{(t)}-\eta\nabla_{\vtheta}L_{n}(\vtheta^{(t)})-\eta\lambda\vtheta^{(t)}\right\rVert_{2}^{2} \\
    =&(1-\eta\lambda)^{2}\left\lVert\vtheta^{(t)}\right\rVert_{2}^{2} + \eta^{2}\left\lVert\nabla_{\vtheta}L_{n}(\vtheta^{(t)})\right\rVert_{2}^{2} - 2\eta(1-\eta\lambda)\left\langle\nabla_{\vtheta}L_{n}(\vtheta^{(t)}),\vtheta^{(t)}\right\rangle ,
\end{align*}
and then we have
\begin{align*}
    &\left\lVert\vtheta^{(t+1)}\right\rVert_{2}^{2}-\left\lVert\vtheta^{(t)}\right\rVert_{2}^{2} - \left[(1-\eta\lambda)^{2}-1\right]\left\lVert\vtheta^{(t)}\right\rVert_{2}^{2} \\
    = &\left\lVert\vtheta^{(t+1)}\right\rVert_{2}^{2} - (1-\eta\lambda)^{2}\left\lVert\vtheta^{(t)}\right\rVert_{2}^{2} \\
    = &\eta^{2}\left\lVert\nabla_{\vtheta}L_{n}(\vtheta^{(t)})\right\rVert_{2}^{2} - 2\eta(1-\eta\lambda)\left\langle\nabla_{\vtheta}L_{n}(\vtheta^{(t)}),\vtheta^{(t)}\right\rangle \\
    = &\eta^{2}\left\lVert\frac{1}{n}\sum_{i=1}^{n}N(\vx_{i};\vtheta^{(t)})\vphi(\vx_{i})\right\rVert_{2}^{2} - 2\eta(1-\eta\lambda)\cdot\frac{1}{n}\sum_{i=1}^{n}\left(N(\vx_{i};\vtheta^{(t)})\right)^{2} \\
    \leq &\eta^{2}\left(\frac{1}{n}\sum_{i=1}^{n}\left(N(\vx_{i};\vtheta^{(t)})\right)^{2}\right)\left(\frac{1}{n}\sum_{i=1}^{n}\left\lVert\vphi(\vx_{i})\right\rVert_{2}^{2}\right) - 2\eta(1-\eta\lambda)\cdot\frac{1}{n}\sum_{i=1}^{n}\left(N(\vx_{i};\vtheta^{(t)})\right)^{2} .
\end{align*}
It is clear that when
\begin{equation*}
    0 < \eta \leq \frac{2}{\frac{1}{n}\sum_{i=1}^{n}\lVert\vphi(\vx_{i})\rVert_{2}^{2}+2\lambda} = \frac{2}{L+2\lambda} ,
\end{equation*}
the total norm of parameters is decreasing in a linear rate:
\begin{equation*}
    \left\lVert\vtheta^{(t)}\right\rVert_{2} \leq (1-\eta\lambda)^{t}\cdot\left\lVert\vtheta^{(0)}\right\rVert_{2} .
\end{equation*}
This implies that GD eventually reaches $\vtheta^{*}=\bm{0}$ and yields generalization.
\end{proof}

\begin{theorem}  [\textbf{Theorem~\ref{thm:e2e-provable-grokking-rf-realizable-case} restated}]
  \label{thm:e2e-provable-grokking-rf-realizable-case-restated}
Assume a realizable teacher function $N^{*}(\vx)=\langle\vtheta^{*},\vphi(\vx)\rangle$ for some $\vtheta^{*}\in\R^{m}$. Assume the boundedness of the feature map, i.e.\ $\lVert\vphi(\vx)\rVert_{2}\leq b$ for any $\vx$ and some constant $b>0$. Randomly initialize a student linear regression model $N(\vx;\vtheta^{(0)})=\langle\vtheta^{(0)},\vphi(\vx)\rangle$ with $\vtheta^{(0)}\sim\N(\bm{0},\nu^{2}\vI_{m})$. Consider training the student to learn the teacher using gradient descent on the ridge regression objective in Equation~\eqref{eq:training-objective}. Let $c\geq\epsilon$ be a constant. For any $\epsilon>0$, define 
\begin{equation*}
    t_{1} := t_{1}(\epsilon) := \max\left(\left\{t\in\naturalnumber: L_{n}(\vtheta^{(t)})\geq\epsilon\right\}\right) \;\;\text{and}\;\; t_{2} := t_{2}(c) := \min\left(\left\{t\in\naturalnumber: L(\vtheta^{(t)})\leq c\right\}\right) .
\end{equation*}
For any $\delta\in(0,1)$, assume a sufficiently large sample size:
\begin{equation*}
    n = \Omega\left(\frac{b^{4}\lVert\vtheta^{*}\rVert_{2}^{4}}{\epsilon^{2}}\log\left(\frac{1}{\delta}\right)\right) ,
\end{equation*}
a sufficiently wide regression model:
\begin{equation*}
    m = n + \Omega\left(\max\left\{\log\left(\frac{1}{\delta}\right), \frac{\lVert\vtheta^{*}\rVert_{2}^{2}}{\nu^{2}}, \frac{c^{2}}{\lambda_{\mathrm{min}}^{2}(\vSigma)\nu^{2}}\right\}\right) ,
\end{equation*}
and finally a sufficiently small weight decay:
\begin{equation*}
    \lambda = O\left(\min\left\{\frac{\epsilon}{\lVert\vtheta^{*}\rVert_{2}^{2}}, \frac{\sqrt{L\epsilon}}{\lVert\vtheta^{*}\rVert_{2}}\right\}\right) .
\end{equation*}
Then, if $\eta<1/(\lambda+b^{2})$, we have with probability at least $1-\delta$,
\begin{equation*}
    t_{1} \leq \frac{n\ln\left(\frac{6
    b^{2}\lVert\vtheta^{(0)}\rVert_{2}^{2}}{\epsilon}\right)}{2\eta\lambda_{\mathrm{min}}^{+}(\vPhi^{\top}\vPhi)}
\end{equation*}
and
\begin{equation*}
    t_{2} \geq \frac{\ln\left(\frac{(m-n)\nu^{2}}{2}\left(\sqrt{\frac{c}{\lambda_{\mathrm{min}}(\vSigma)}} + \lVert\vtheta^{*}\rVert_{2}\right)^{-2}\right)}{4\eta\lambda} .
\end{equation*}
Moreover, $L(\vtheta^{(t)})\leq\epsilon$ for sufficiently large $t$.
\end{theorem}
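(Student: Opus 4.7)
The plan is to derive Theorem~\ref{thm:e2e-provable-grokking-rf-realizable-case} as a direct corollary obtained by combining its three ``component'' theorems (Theorems~\ref{thm:training-performance-rf-realizable-case}, \ref{thm:overfitting-rf-realizable-case}, and \ref{thm:generalization-rf-realizable-case}) via a union bound, and then converting their $\log_{1-x}(\cdot)$ style conclusions into the clean $1/(\eta\lambda)$ and $n/(\eta\lambda_{\min}^{+}(\vPhi^{\top}\vPhi))$ forms stated in the theorem. First I would verify that the hypotheses line up: the sample-size assumption~\eqref{eq:sample-size-realizable-case} matches what is needed by Theorem~\ref{thm:generalization-rf-realizable-case}; the dimension assumption~\eqref{eq:width-realizable-case} simultaneously satisfies the requirements of Theorems~\ref{thm:training-performance-rf-realizable-case} and~\ref{thm:overfitting-rf-realizable-case} (the $c^{2}/(\lambda_{\min}^{2}(\vSigma)\nu^{2})$ term being the additional one for overfitting); and the weight-decay assumption~\eqref{eq:weight-decay-realizable-case} is compatible with the training theorem since $L=\tfrac{1}{n}\sum_{i}\lVert\vphi(\vx_{i})\rVert_{2}^{2}\leq b^{2}$, so $\sqrt{L\epsilon}\leq b\sqrt{\epsilon}$.

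Next, I would apply Theorem~\ref{thm:training-performance-rf-realizable-case} with threshold $\epsilon$ to conclude that $L_{n}(\vtheta^{(t)})\leq\epsilon$ for every $t$ exceeding $\tfrac{1}{2}\log_{1-\eta\lambda_{\min}^{+}(\vPhi^{\top}\vPhi)/n-\eta\lambda}\bigl(\epsilon/(6L\lVert\vtheta^{(0)}\rVert_{2}^{2})\bigr)$. Using $-\ln(1-x)\geq x$ for $x\in(0,1)$ together with $L\leq b^{2}$, this upper bound collapses to the clean form in~\eqref{eq:grokking-time-realizable-case-t1}, which is exactly the claimed upper bound on $t_{1}$. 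Symmetrically, Theorem~\ref{thm:overfitting-rf-realizable-case} applied with threshold $c$ gives $L(\vtheta^{(t)})\geq c$ for all $t\leq\log_{1-\eta\lambda}\bigl(\sqrt{2/((m-n)\nu^{2})}\bigl(\sqrt{c/\lambda_{\min}(\vSigma)}+\lVert\vtheta^{*}\rVert_{2}\bigr)\bigr)$. Using $-\ln(1-\eta\lambda)\leq 2\eta\lambda$ (valid for the small $\eta\lambda$ regime guaranteed by~\eqref{eq:weight-decay-realizable-case}) and pulling the factor of $\tfrac{1}{2}$ out of the square root inside the logarithm produces exactly the denominator $4\eta\lambda$ and the form stated in~\eqref{eq:grokking-time-realizable-case-t2}.

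For the final claim that $L(\vtheta^{(t)})\leq\epsilon$ eventually, I would invoke Theorem~\ref{thm:generalization-rf-realizable-case}: since $\vtheta_{\lambda}^{*}=\argmin_{\vtheta}L_{n}(\vtheta;\lambda)$, in the realizable case we have $L_{n}(\vtheta_{\lambda}^{*})\leq L_{n}(\vtheta_{\lambda}^{*};\lambda)\leq L_{n}(\vtheta^{*};\lambda)=\tfrac{\lambda}{2}\lVert\vtheta^{*}\rVert_{2}^{2}$, which is $O(\epsilon)$ by the weight-decay bound~\eqref{eq:weight-decay-realizable-case}. Hence $L(\vtheta_{\lambda}^{*})\leq 2L_{n}(\vtheta_{\lambda}^{*})+\epsilon=O(\epsilon)$, and because the regularized loss is strongly convex with step-size $\eta<1/(\lambda+\lambda_{\min}^{+}(\vPhi^{\top}\vPhi)/n)$, GD converges to $\vtheta_{\lambda}^{*}$, after which $L(\vtheta^{(t)})\leq\epsilon$ (absorbing the constants into the $\Omega/O$ notation in the hypotheses). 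Taking a union bound over the three failure events at level $\delta/3$ each yields the overall probability $\geq 1-\delta$.

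The main obstacle is really in the component theorems themselves rather than in this combination step: the parallel/perpendicular decomposition $\vtheta=\vtheta_{\parallel}+\vtheta_{\perp}$ relative to the data subspace (as used already for the zero teacher) must be reworked so that $\vtheta_{\parallel}^{(t)}$ tracks the ridge solution on the data span while $\vtheta_{\perp}^{(t)}$ decays only through weight decay, and both directions must be controlled in the presence of a non-zero $\vtheta^{*}$ that is not necessarily aligned with either subspace. The delicate point is that the perpendicular component of the target contributes an irreducible bias that must be shown to remain small relative to $\sqrt{c/\lambda_{\min}(\vSigma)}+\lVert\vtheta^{*}\rVert_{2}$, which is why the lower bound on $m-n$ in~\eqref{eq:width-realizable-case} includes both $\lVert\vtheta^{*}\rVert_{2}^{2}/\nu^{2}$ and $c^{2}/(\lambda_{\min}^{2}(\vSigma)\nu^{2})$. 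Once these separate estimates are in hand, the combination above is essentially bookkeeping.
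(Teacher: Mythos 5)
Your proposal matches the paper's own proof essentially step for step: it assembles the three component theorems (training convergence, persistent overfitting, and Rademacher-based generalization of the ridge minimizer), converts the $\log_{1-x}$ expressions via $\ln(1-x)\leq -x$ and $\ln(1-\eta\lambda)\geq -2\eta\lambda$ to obtain the $2\eta\lambda_{\mathrm{min}}^{+}(\vPhi^{\top}\vPhi)/n$ and $4\eta\lambda$ denominators, bounds $L\leq b^{2}$, and closes with convexity of the regularized objective plus $L_{n}(\vtheta^{*}_{\lambda})\leq\frac{\lambda}{2}\lVert\vtheta^{*}\rVert_{2}^{2}=O(\epsilon)$ and a union bound. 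One immaterial quibble: your remark that $\sqrt{L\epsilon}\leq b\sqrt{\epsilon}$ reconciles the two weight-decay conditions goes in the wrong logical direction (the $\sqrt{L\epsilon}$ condition in the restated theorem is the stronger one actually needed by the training-loss theorem), but since the statement you are proving already assumes the $\sqrt{L\epsilon}$ form, this does not affect correctness.
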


\begin{proof}[Proof of Theorem~\ref{thm:e2e-provable-grokking-rf-realizable-case-restated}]
Based on the following Theorem~\ref{thm:training-performance-rf-realizable-case-restated} and Theorem~\ref{thm:overfitting-rf-realizable-case-restated}, 
%and Theorem~\ref{thm:generalization-rf-realizable-case-restated}, 
we can obtain that, with probability at least $1-\delta$,
\begin{equation*}
    t_{1} \leq \frac{1}{2}\log_{1-\frac{\eta\lambda_{\mathrm{min}}^{+}(\vPhi^{\top}\vPhi)}{n}-\eta\lambda}\left(\frac{\epsilon}{6
    L\lVert\vtheta^{(0)}\rVert_{2}^{2}}\right) ,
\end{equation*}
and with probability at least $1-\delta$,
\begin{equation*}
    t_{2} \geq \frac{1}{2}\log_{1-\eta\lambda}\left(\frac{2}{(m-n)\nu^{2}}\left(\sqrt{\frac{c}{\lambda_{\mathrm{min}}(\vSigma)}} + \lVert\vtheta^{*}\rVert_{2}\right)^{2}\right) .
\end{equation*}
We use the following Taylor expansion $\ln(1-x)=-x-\frac{x^{2}}{2}-\frac{x^{3}}{3}-\cdots$. For a sufficiently small $\lambda>0$, it implies that $\ln(1-\eta\lambda)\geq-2\eta\lambda$ and also $\ln(1-\frac{\eta\lambda_{\mathrm{min}}^{+}(\vPhi^{\top}\vPhi)}{n})\leq-\frac{\eta\lambda_{\mathrm{min}}^{+}(\vPhi^{\top}\vPhi)}{n}$. Now, we can further write
\begin{align*}
    t_{1} \leq \frac{1}{2}\log_{1-\frac{\eta\lambda_{\mathrm{min}}^{+}(\vPhi^{\top}\vPhi)}{n}-\eta\lambda}\left(\frac{\epsilon}{6
    L\lVert\vtheta^{(0)}\rVert_{2}^{2}}\right) \leq &\frac{1}{2}\log_{1-\frac{\eta\lambda_{\mathrm{min}}^{+}(\vPhi^{\top}\vPhi)}{n}}\left(\frac{\epsilon}{6
    L\lVert\vtheta^{(0)}\rVert_{2}^{2}}\right) \\
    = &\frac{\ln\left(\frac{\epsilon}{6
    L\lVert\vtheta^{(0)}\rVert_{2}^{2}}\right)}{2\ln\left(1-\frac{\eta\lambda_{\mathrm{min}}^{+}(\vPhi^{\top}\vPhi)}{n}\right)} \leq \frac{n\ln\left(\frac{6
    L\lVert\vtheta^{(0)}\rVert_{2}^{2}}{\epsilon}\right)}{2\eta\lambda_{\mathrm{min}}^{+}(\vPhi^{\top}\vPhi)} ,
\end{align*}
and
\begin{align*}
    t_{2} \geq \frac{1}{2}\log_{1-\eta\lambda}\left(\frac{2}{(m-n)\nu^{2}}\left(\sqrt{\frac{c}{\lambda_{\mathrm{min}}(\vSigma)}} + \lVert\vtheta^{*}\rVert_{2}\right)^{2}\right) = &\frac{\ln\left(\frac{2}{(m-n)\nu^{2}}\left(\sqrt{\frac{c}{\lambda_{\mathrm{min}}(\vSigma)}} + \lVert\vtheta^{*}\rVert_{2}\right)^{2}\right)}{2\ln(1-\eta\lambda)} \\
    \geq &\frac{\ln\left(\frac{(m-n)\nu^{2}}{2}\left(\sqrt{\frac{c}{\lambda_{\mathrm{min}}(\vSigma)}} + \lVert\vtheta^{*}\rVert_{2}\right)^{-2}\right)}{4\eta\lambda} .
\end{align*}
The bounds on the grokking time are proved via bounding $L\leq b^{2}$. Finally, we explain the generalization guarantee. Let $\vtheta^{*}_{\lambda}=\argmin_{\vtheta}\left\{L_{n}(\vtheta;\lambda)\right\}$. Fix some $t>0$, we have by triangle inequality that
\begin{align*}
    L(\vtheta^{(t)}) = &(L(\vtheta^{(t)})-L(\vtheta^{*}_{\lambda})) + (L(\vtheta^{*}_{\lambda})-2L_{n}(\vtheta^{*}_{\lambda})) + 2L_{n}(\vtheta^{*}_{\lambda}) \\
    \leq &(L(\vtheta^{(t)})-L(\vtheta^{*}_{\lambda})) + (L(\vtheta^{*}_{\lambda})-2L_{n}(\vtheta^{*}_{\lambda})) + 2L_{n}(\vtheta^{*}_{\lambda};\lambda) .
\end{align*}
Since $\vtheta^{(t)}$ converges to $\vtheta^{*}_{\lambda}$ (as we show in the proof of Theorem~\ref{thm:generalization-rf-realizable-case-restated}), we have $L(\vtheta^{(t)})-L(\vtheta^{*}_{\lambda}) \leq \epsilon/4$ for some sufficiently large $t>0$. Applying Theorem~\ref{thm:generalization-rf-realizable-case-restated} with $\epsilon/4$ yields that $L(\vtheta^{*}_{\lambda})-2L_{n}(\vtheta^{*}_{\lambda}) \leq \epsilon/4$, with probability at least $1-\delta$. Also, we can have $2L_{n}(\vtheta^{*}_{\lambda};\lambda) \leq \epsilon/2$ with a sufficiently small $\lambda$. Note that we proved the theorem with probability $1-3\delta$ here. This does not matter since in the theorem assumptions $\delta$ only appears in asymptotic notations which ignore constant factors.
\end{proof}

\begin{theorem}  [\textbf{Theorem~\ref{thm:training-performance-rf-realizable-case} restated}]
  \label{thm:training-performance-rf-realizable-case-restated}
Suppose that $N^{*}(\vx)=\langle\vtheta^{*},\vphi(\vx)\rangle$ for some $\vtheta^{*}\in\R^{m}$. Assume that $\vtheta^{(0)}\sim\N(\bm{0},\nu^{2}\vI_{m})$. For any $\epsilon>0$ and $\delta\in(0,1)$, if
\begin{equation*}
    m = \Omega\left(\max\left\{\log\left(\frac{1}{\delta}\right), \frac{\lVert\vtheta^{*}\rVert_{2}^{2}}{\nu^{2}}\right\}\right) \;\;\text{and}\;\; \lambda = O\left(\min\left\{\frac{\epsilon}{\lVert\vtheta^{*}\rVert_{2}^{2}}, \frac{\sqrt{L\epsilon}}{\lVert\vtheta^{*}\rVert_{2}}\right\}\right) ,
\end{equation*}
then, if $\eta<1/(\lambda+\lambda_{\mathrm{max}}(\vPhi^{\top}\vPhi)/n)$, when
\begin{equation*}
    t \geq \frac{1}{2}\log_{1-\frac{\eta\lambda_{\mathrm{min}}^{+}(\vPhi^{\top}\vPhi)}{n}-\eta\lambda}\left(\frac{\epsilon}{6L\lVert\vtheta^{(0)}\rVert_{2}^{2}}\right) ,
\end{equation*}
we have $L_{n}(\vtheta^{(t)})\leq\epsilon$ with probability at least $1-\delta$.
\end{theorem}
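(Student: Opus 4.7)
The plan is to adapt the parallel/perpendicular decomposition used in the warmup Theorem~\ref{thm:training-performance-rf} to the non-zero teacher setting. First, I decompose $\vtheta^{(t)} = \vtheta_\parallel^{(t)} + \vtheta_\perp^{(t)}$ into its projection onto the row span of $\vPhi$ and its orthogonal complement, and similarly decompose $\vtheta^{*} = \vtheta^{*}_\parallel + \vtheta^{*}_\perp$. Since $L_{n}(\vtheta) = \tfrac{1}{2n}\lVert\vPhi(\vtheta-\vtheta^{*})\rVert_{2}^{2}$ is blind to the perpendicular component of $\vtheta-\vtheta^{*}$, and the GD iteration on $\vtheta_\perp^{(t)}$ reduces to pure contraction by the factor $(1-\eta\lambda)$ each step, all training-loss dynamics live in the parallel subspace.

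Next, I diagonalize $\vPhi^{\top}\vPhi/n = \vV\vLambda\vV^{\top}$ and write the iteration coordinate-wise in the eigenbasis as $w_{i}^{(t+1)} = a_{i} w_{i}^{(t)} + \eta\mu_{i} w_{i}^{*}$, with $a_{i} = 1-\eta\mu_{i}-\eta\lambda$ and $\mu_{i}$ the $i$-th eigenvalue of $\vPhi^{\top}\vPhi/n$. This affine recursion has the ridge fixed point $w_{i}^{\infty} = \mu_{i} w_{i}^{*} / (\mu_{i}+\lambda)$, giving the closed form $w_{i}^{(t)} - w_{i}^{*} = a_{i}^{t}(w_{i}^{(0)}-w_{i}^{\infty}) + (w_{i}^{\infty}-w_{i}^{*})$ with persistent bias $w_{i}^{\infty} - w_{i}^{*} = -\lambda w_{i}^{*}/(\mu_{i}+\lambda)$. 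Applying $(x+y)^{2}\leq 2x^{2}+2y^{2}$ and summing with weights $\mu_{i}$ against the identity $\sum_{i} \mu_{i} w_{i}^{2} = \lVert\vPhi\vtheta\rVert_{2}^{2}/n \leq L\lVert\vtheta\rVert_{2}^{2}$, I bound $L_{n}(\vtheta^{(t)}) = \tfrac{1}{2}\sum_{i:\mu_{i}>0}\mu_{i}(w_{i}^{(t)}-w_{i}^{*})^{2}$ by a contracting term of order $\rho^{2t} L (\lVert\vtheta^{(0)}\rVert_{2}^{2} + \lVert\vtheta^{*}\rVert_{2}^{2})$ with $\rho = 1 - \eta\lambda_{\mathrm{min}}^{+}(\vPhi^{\top}\vPhi)/n - \eta\lambda$, plus a stationary bias of order $\lambda\lVert\vtheta^{*}\rVert_{2}^{2}$ (via the pointwise estimate $\mu_{i}\lambda^{2}/(\mu_{i}+\lambda)^{2} \leq \lambda$). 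The step-size condition $\eta < 1/(\lambda + \lambda_{\mathrm{min}}^{+}(\vPhi^{\top}\vPhi)/n)$ is used to ensure $\lvert a_{i}\rvert\leq\rho$ uniformly for positive-eigenvalue directions.

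To recover the clean denominator $6L\lVert\vtheta^{(0)}\rVert_{2}^{2}$ of the target threshold, I invoke Gaussian concentration on $\vtheta^{(0)}\sim\N(\bm{0},\nu^{2}\vI_{m})$. With $m = \Omega(\log(1/\delta))$, standard $\chi^{2}$-tail bounds give $\lVert\vtheta^{(0)}\rVert_{2}^{2} = \Theta(m\nu^{2})$ with probability at least $1-\delta$, and the assumption $m = \Omega(\lVert\vtheta^{*}\rVert_{2}^{2}/\nu^{2})$ then forces $\lVert\vtheta^{*}\rVert_{2}^{2} \leq O(\lVert\vtheta^{(0)}\rVert_{2}^{2})$, absorbing the $L\lVert\vtheta^{*}\rVert_{2}^{2}$ contribution into $L\lVert\vtheta^{(0)}\rVert_{2}^{2}$. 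The two $\lambda$ conditions $\lambda = O(\epsilon/\lVert\vtheta^{*}\rVert_{2}^{2})$ and $\lambda = O(\sqrt{L\epsilon}/\lVert\vtheta^{*}\rVert_{2})$ together cap the stationary bias and the $\lambda$-dependent cross-term from the squared-residual decomposition by $\epsilon/2$; inverting the remaining geometric-decay condition $\rho^{2t}\cdot O(L\lVert\vtheta^{(0)}\rVert_{2}^{2}) \leq \epsilon/2$ yields the stated threshold on $t$.

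The main obstacle is that, unlike in the zero-teacher warmup, GD with weight decay converges to the ridge solution rather than to $\vtheta^{*}$, so the residual $w_{i}^{(t)} - w_{i}^{*}$ has a persistent bias that never vanishes; this forces a coordinate-wise spectral analysis tracking the $\lambda$-dependent fixed point, rather than the operator-norm contraction argument that sufficed before, and explains why two seemingly different $\lambda$ conditions are needed to independently control the linear and mixed contributions to the bias. A secondary delicate point is establishing the uniform bound $\lvert a_{i}\rvert\leq\rho$ across the full positive spectrum of $\vPhi^{\top}\vPhi/n$ under the given step-size condition, which requires care when $\lambda_{\mathrm{max}}(\vPhi^{\top}\vPhi)$ is much larger than $\lambda_{\mathrm{min}}^{+}(\vPhi^{\top}\vPhi)$.
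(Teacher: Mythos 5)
Your proposal is correct and is essentially the paper's own argument carried out in the eigenbasis of $\frac{1}{n}\vPhi^{\top}\vPhi$: your per-coordinate fixed point $\mu_{i}w_{i}^{*}/(\mu_{i}+\lambda)$ is precisely the ridge minimizer $\vtheta^{*}_{\lambda}$ around which the paper centers its analysis, your contracting term is the paper's bound $\lVert\vu_{\parallel}^{(t)}\rVert_{2}\leq\rho^{t}\lVert\vu^{(0)}\rVert_{2}$ for $\vu^{(t)}=\vtheta^{(t)}-\vtheta^{*}_{\lambda}$, your stationary bias $\sum_{i}\mu_{i}\lambda^{2}(w_{i}^{*})^{2}/(\mu_{i}+\lambda)^{2}$ is (up to a factor of two) exactly $L_{n}(\vtheta^{*}_{\lambda})$, which the paper bounds by $\frac{\lambda}{2}\lVert\vtheta^{*}\rVert_{2}^{2}$ variationally where you bound it pointwise, and both proofs use the same $\chi^{2}$ concentration under $m=\Omega(\max\{\log(1/\delta),\lVert\vtheta^{*}\rVert_{2}^{2}/\nu^{2}\})$ to absorb $\lVert\vtheta^{*}\rVert_{2}$ into $\lVert\vtheta^{(0)}\rVert_{2}$. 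The only differences are cosmetic---your $(x+y)^{2}\leq 2x^{2}+2y^{2}$ split removes the cross term that in the paper necessitates the second condition $\lambda=O(\sqrt{L\epsilon}/\lVert\vtheta^{*}\rVert_{2})$, so that assumption is simply unused on your route, and the delicate point you flag (that $|a_{i}|\leq\rho$ over the whole positive spectrum really requires a step-size restriction involving $\lambda_{\mathrm{max}}(\vPhi^{\top}\vPhi)$ rather than only $\lambda_{\mathrm{min}}^{+}(\vPhi^{\top}\vPhi)$) is elided in exactly the same way by the paper's operator-norm step, so it is a shared caveat rather than a gap specific to your proof.
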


\begin{proof}[Proof of Theorem \ref{thm:training-performance-rf-realizable-case-restated}]
Let $\vtheta^{*}_{\lambda}$ denote the global minimum of the regularized objective. We consider the following decomposition of the unregularized squared loss objective:
\begin{equation*}
    L_{n}(\vtheta^{(t)}) = L_{n}(\vtheta^{(t)}) - L_{n}(\vtheta^{*}_{\lambda}) + L_{n}(\vtheta^{*}_{\lambda}) .
\end{equation*}
Since $\vtheta^{*}_{\lambda}$ is the global minimum, we have
\begin{align*}
    \nabla_{\vtheta}L_{n}(\vtheta^{*}_{\lambda};\lambda) = &\frac{1}{n}\sum_{i=1}^{n}\left(N(\vx_{i};\vtheta^{*}_{\lambda})-N^{*}(\vx_{i})\right)\vphi(\vx_{i}) + \lambda\vtheta^{*}_{\lambda} \\
    = &\frac{1}{n}\sum_{i=1}^{n}\left\langle\vtheta^{*}_{\lambda},\vphi(\vx_{i})\right\rangle\vphi(\vx_{i}) - \frac{1}{n}\sum_{i=1}^{n}N^{*}(\vx_{i})\vphi(\vx_{i}) + \lambda\vtheta^{*}_{\lambda} \\
    = &\left(\frac{1}{n}\vPhi^{\top}\vPhi+\lambda\vI_{m}\right)\vtheta^{*}_{\lambda} - \frac{1}{n}\sum_{i=1}^{n}N^{*}(\vx_{i})\vphi(\vx_{i}) = \bm{0} .
\end{align*}
Now, the gradient descent iteration can be written as
\begin{align*}
    \vtheta^{(t)} \stackrel{\text{\eqmakebox[training-performance-rf-realizable-case-a][c]{}}}{=} &\vtheta^{(t-1)} - \frac{\eta}{n}\sum_{i=1}^{n}\left(N(\vx_{i};\vtheta^{(t-1)})-N^{*}(\vx_{i})\right)\vphi(\vx_{i}) - \eta\lambda\vtheta^{(t-1)} \\
    \stackrel{\text{\eqmakebox[training-performance-rf-realizable-case-a][c]{}}}{=} &\vtheta^{(t-1)} - \frac{\eta}{n}\sum_{i=1}^{n}\left\langle\vtheta^{(t-1)},\vphi(\vx_{i})\right\rangle\vphi(\vx_{i}) + \frac{\eta}{n}\sum_{i=1}^{n}N^{*}(\vx_{i})\vphi(\vx_{i}) - \eta\lambda\vtheta^{(t-1)} \\
    \stackrel{\text{\eqmakebox[training-performance-rf-realizable-case-a][c]{}}}{=} &\vtheta^{(t-1)} - \eta\left(\frac{1}{n}\vPhi^{\top}\vPhi+\lambda\vI_{m}\right)\vtheta^{(t-1)} + \frac{\eta}{n}\sum_{i=1}^{n}N^{*}(\vx_{i})\vphi(\vx_{i}) \\
    \stackrel{\text{\eqmakebox[training-performance-rf-realizable-case-a][c]{}}}{=} &\vtheta^{(t-1)} - \eta\left(\frac{1}{n}\vPhi^{\top}\vPhi+\lambda\vI_{m}\right)(\vtheta^{(t-1)} - \vtheta^{*}_{\lambda}) .
\end{align*}
Let $\vu^{(t)}:=\vtheta^{(t)}-\vtheta^{*}_{\lambda}$ for any $t\geq0$. The above implies that 
\begin{equation*}
    \vu^{(t)} = \vu^{(t-1)} - \eta\left(\frac{1}{n}\vPhi^{\top}\vPhi+\lambda\vI_{m}\right)\vu^{(t-1)} = \left(\vI_{m}-\eta\left(\frac{1}{n}\vPhi^{\top}\vPhi+\lambda\vI_{m}\right)\right)\vu^{(t-1)} .
\end{equation*}
For any vector $\vu\in\R^{m}$, we can write a unique decomposition $\vu=\vu_{\parallel}+\vu_{\perp}$ where $\vu_{\parallel}$ is in the row space of $\vPhi$ and $\vu_{\perp}$ is orthogonal to the row space of $\vPhi$ (or equivalently, $\vu_{\perp}$ is in the null space of $\vPhi$). Then, we can further write the gradient descent along with the orthogonal decomposition:
\begin{align*}
    \vu_{\parallel}^{(t)} = &\vu_{\parallel}^{(t-1)} - \frac{\eta}{n}\vPhi^{\top}\vPhi\vu_{\parallel}^{(t-1)} -\eta\lambda\vu_{\parallel}^{(t-1)} = \left(\vI_{m}-\eta\left(\frac{1}{n}\vPhi^{\top}\vPhi+\lambda\vI_{m}\right)\right)\vu^{(t-1)}_{\parallel} , \\
    \vu_{\perp}^{(t)} = &\vu_{\perp}^{(t-1)} - \eta\lambda\vu_{\perp}^{(t-1)} = \left(1-\eta\lambda\right)\vu_{\perp}^{(t-1)} ,
\end{align*}
which implies
\begin{align*}
    \lVert\vu^{(t)}_{\parallel}\rVert_{2}^{2} = &\left\lVert\left(\vI_{m}-\frac{\eta}{n}\vPhi^{\top}\vPhi-\eta\lambda\vI_{m}\right)^{t}\vu^{(0)}_{\parallel}\right\rVert_{2}^{2} \\
    \leq &\left(1-\frac{\eta\lambda_{\mathrm{min}}^{+}(\vPhi^{\top}\vPhi)}{n}-\eta\lambda\right)^{2t}\lVert\vu^{(0)}_{\parallel}\rVert_{2}^{2} \leq \left(1-\frac{\eta\lambda_{\mathrm{min}}^{+}(\vPhi^{\top}\vPhi)}{n}-\eta\lambda\right)^{2t}\lVert\vu^{(0)}\rVert_{2}^{2} .
\end{align*}
To have the above hold, it suffices to choose a sufficiently small step size:
\begin{equation*}
    \eta < \frac{1}{\lambda+(1/n)\lambda_{\mathrm{max}}(\vPhi^{\top}\vPhi)} .
\end{equation*}
From here, we cannot follow the same argument as in the proof of Theorem~\ref{thm:training-performance-rf} via analyzing the Lipschitz continuity since $\nabla_{\vtheta}L_{n}(\vtheta^{*}_{\lambda})\neq\bm{0}$. Note that for any vector $\vtheta\in\R^{m}$, we can write the squared loss (at $\vtheta$) as a function of $\vtheta_{\parallel}$:
\begin{equation*}
    L_{n}\left(\vtheta\right) = \frac{1}{2n}\sum_{i=1}^{n}\left(\left\langle\vtheta-\vtheta^{*}, \vphi(\vx_{i})\right\rangle\right)^{2} = \frac{1}{2n}\sum_{i=1}^{n}\left(\left\langle\vtheta_{\parallel}-\vtheta^{*}_{\parallel}, \vphi(\vx_{i})\right\rangle\right)^{2} =: \tilde{L}_{n}(\vtheta_{\parallel}) .
\end{equation*}
We can write $L_{n}(\vtheta^{(t)}) - L_{n}(\vtheta^{*}_{\lambda})$ as follow:
\begin{align*}
    L_{n}(\vtheta^{(t)}) - L_{n}(\vtheta^{*}_{\lambda}) = &\tilde{L}_{n}(\vtheta^{(t)}_{\parallel}) - \tilde{L}_{n}(\vtheta^{*}_{\lambda,\parallel}) \\
    = &\frac{1}{2n}\sum_{i=1}^{n}\left(\left\langle\vtheta^{(t)}_{\parallel}-\vtheta^{*}_{\parallel}, \vphi(\vx_{i})\right\rangle\right)^{2} - \frac{1}{2n}\sum_{i=1}^{n}\left(\left\langle\vtheta^{*}_{\lambda,\parallel}-\vtheta^{*}_{\parallel}, \vphi(\vx_{i})\right\rangle\right)^{2} \\
    = &\frac{1}{2n}\sum_{i=1}^{n}\left(\left\langle\vtheta^{(t)}_{\parallel}-\vtheta^{*}_{\lambda,\parallel}, \vphi(\vx_{i})\right\rangle\right)^{2} + \frac{1}{n}\sum_{i=1}^{n}\left\langle\vtheta^{(t)}_{\parallel}-\vtheta^{*}_{\lambda,\parallel}, \vphi(\vx_{i})\right\rangle\left\langle\vtheta^{*}_{\lambda,\parallel}-\vtheta^{*}_{\parallel}, \vphi(\vx_{i})\right\rangle .
\end{align*}
Recall that $\frac{1}{n}\sum_{i=1}^{n}\langle\vtheta^{*}_{\lambda},\vphi(\vx_{i})\rangle\vphi(\vx_{i})-\frac{1}{n}\sum_{i=1}^{n}\langle\vtheta^{*},\vphi(\vx_{i})\rangle\vphi(\vx_{i})+\lambda\vtheta^{*}_{\lambda}=\bm{0}$, which implies that $\frac{1}{n}\sum_{i=1}^{n}\langle\vtheta^{*}_{\lambda,\parallel}-\vtheta^{*}_{\parallel},\vphi(\vx_{i})\rangle\vphi(\vx_{i})+\lambda\vtheta^{*}_{\lambda,\parallel}=\bm{0}$ and thus
\begin{align*}
    &L_{n}(\vtheta^{(t)}) - L_{n}(\vtheta^{*}_{\lambda}) \\
    = &\frac{1}{2n}\sum_{i=1}^{n}\left(\left\langle\vtheta^{(t)}_{\parallel}-\vtheta^{*}_{\lambda,\parallel}, \vphi(\vx_{i})\right\rangle\right)^{2} - \lambda\left\langle\vtheta^{(t)}_{\parallel}-\vtheta^{*}_{\lambda,\parallel}, \vtheta^{*}_{\lambda,\parallel}\right\rangle \\
    = &\frac{1}{2n}\sum_{i=1}^{n}\left(\left\langle\vu^{(t)}_{\parallel}, \vphi(\vx_{i})\right\rangle\right)^{2} - \lambda\left\langle\vu^{(t)}_{\parallel}, \vtheta^{*}_{\lambda,\parallel}\right\rangle \\
    \leq &\frac{L}{2}\lVert\vu^{(t)}_{\parallel}\rVert_{2}^{2} + \lambda\lVert\vu^{(t)}_{\parallel}\rVert_{2}\lVert\vtheta^{*}_{\lambda,\parallel}\rVert_{2} \\
    \leq &\frac{L}{2}\left(1-\frac{\eta\lambda_{\mathrm{min}}^{+}(\vPhi^{\top}\vPhi)}{n}-\eta\lambda\right)^{2t}\lVert\vu^{(0)}\rVert_{2}^{2} + \lambda\left(1-\frac{\eta\lambda_{\mathrm{min}}^{+}(\vPhi^{\top}\vPhi)}{n}-\eta\lambda\right)^{t}\lVert\vu^{(0)}\rVert_{2}\lVert\vtheta^{*}_{\lambda,\parallel}\rVert_{2} .
\end{align*}
It remains to bound $\lVert\vtheta^{*}_{\lambda,\parallel}\rVert_{2}$ and $L_{n}(\vtheta^{*}_{\lambda})$. Since $\vtheta^{*}_{\lambda}$ is the global minimum of the regularized objective and $L_{n}(\vtheta^{*})=0$, we have
\begin{equation*}
    L_{n}(\vtheta^{*}_{\lambda}) + \frac{\lambda}{2}\lVert\vtheta^{*}_{\lambda}\rVert_{2}^{2} \leq L_{n}(\vtheta^{*}) + \frac{\lambda}{2}\lVert\vtheta^{*}\rVert_{2}^{2} = \frac{\lambda}{2}\lVert\vtheta^{*}\rVert_{2}^{2} .
\end{equation*}
It follows immediately that $\lVert\vtheta^{*}_{\lambda,\parallel}\rVert_{2}\leq\lVert\vtheta^{*}_{\lambda}\rVert_{2}\leq\lVert\vtheta^{*}\rVert_{2}$ and $L_{n}(\vtheta^{*}_{\lambda})\leq\frac{\lambda}{2}\lVert\vtheta^{*}\rVert_{2}^{2}$. Moreover, note that
\begin{equation*}
    \lVert\vu^{(0)}\rVert_{2} = \lVert\vtheta^{(0)}-\vtheta^{*}_{\lambda}\rVert_{2}  \leq \lVert\vtheta^{(0)}\rVert_{2}+\lVert\vtheta^{*}_{\lambda}\rVert_{2} \leq \lVert\vtheta^{(0)}\rVert_{2}+\lVert\vtheta^{*}\rVert_{2} \leq 2\lVert\vtheta^{(0)}\rVert_{2} ,
\end{equation*}
where the last step holds if $\lVert\vtheta^{*}\rVert_{2}\leq \lVert\vtheta^{(0)}\rVert_{2}$. Since $\vtheta^{(0)}\sim\N(\bm{0},\nu^{2}\vI_{m})$, when $m=\Omega(\log(1/\delta))$, we have from Lemma~\ref{lem:chi-squared-concen} that $\lVert\vtheta^{(0)}\rVert_{2}^{2}=\Omega(m\nu^{2})$ with probability at least $1-\delta$. In particular, when 
\begin{equation*}
    m = \Omega\left(\max\left\{\log\left(\frac{1}{\delta}\right), \frac{\lVert\vtheta^{*}\rVert_{2}^{2}}{\nu^{2}}\right\}\right) ,
\end{equation*}
we have $\lVert\vtheta^{*}\rVert_{2}\leq \lVert\vtheta^{(0)}\rVert_{2}$ with probability at least $1-\delta$. Altogether, we have for any $t\in\naturalnumber$,
\begin{align*}
    L_{n}(\vtheta^{(t)}) \leq &2L\left(1-\frac{\eta\lambda_{\mathrm{min}}^{+}(\vPhi^{\top}\vPhi)}{n}-\eta\lambda\right)^{2t}\lVert\vtheta^{(0)}\rVert_{2}^{2} \\
    &+ 2\lambda\left(1-\frac{\eta\lambda_{\mathrm{min}}^{+}(\vPhi^{\top}\vPhi)}{n}-\eta\lambda\right)^{t}\lVert\vtheta^{(0)}\rVert_{2}\lVert\vtheta^{*}\rVert_{2} + \frac{\lambda}{2}\lVert\vtheta^{*}\rVert_{2}^{2} .
\end{align*}
with probability at least $1-\delta$. Finally, for any $\epsilon>0$, we have
\begin{align*}
    L_{n}(\vtheta^{(t)}) \leq &2L\left(1-\frac{\eta\lambda_{\mathrm{min}}^{+}(\vPhi^{\top}\vPhi)}{n}-\eta\lambda\right)^{2t}\lVert\vtheta^{(0)}\rVert_{2}^{2} \\
    &+ 2\lambda\left(1-\frac{\eta\lambda_{\mathrm{min}}^{+}(\vPhi^{\top}\vPhi)}{n}-\eta\lambda\right)^{t}\lVert\vtheta^{(0)}\rVert_{2}\lVert\vtheta^{*}\rVert_{2} + \frac{\lambda}{2}\lVert\vtheta^{*}\rVert_{2}^{2} \\
    \leq &\frac{\epsilon}{3}+\frac{\epsilon}{3}+\frac{\epsilon}{3} = \epsilon ,
\end{align*}
with probability at least $1-\delta$, if the following holds:
\begin{equation*}
    \lambda \leq \min\left\{\frac{2\epsilon}{3\lVert\vtheta^{*}\rVert_{2}^{2}}, \frac{\sqrt{L\epsilon}}{\sqrt{6}\lVert\vtheta^{*}\rVert_{2}}\right\}, \;\;\text{and}\;\; t \geq \frac{1}{2}\log_{1-\frac{\eta\lambda_{\mathrm{min}}^{+}(\vPhi^{\top}\vPhi)}{n}-\eta\lambda}\left(\frac{\epsilon}{6L\lVert\vtheta^{(0)}\rVert_{2}^{2}}\right) .
\end{equation*}
\end{proof}

\begin{theorem}  [\textbf{Theorem~\ref{thm:overfitting-rf-realizable-case} restated}]
  \label{thm:overfitting-rf-realizable-case-restated}
Suppose that $N^{*}(\vx)=\langle\vtheta^{*},\vphi(\vx)\rangle$ for some $\vtheta^{*}\in\R^{m}$. Assume that $\vtheta^{(0)}\sim\N(\bm{0},\nu^{2}\vI_{m})$. For any constant $c>0$ and $\delta\in(0,1)$, if
\begin{equation*}
    m = n + \Omega\left(\max\left\{\log\left(\frac{1}{\delta}\right), \frac{\lVert\vtheta^{*}\rVert_{2}^{2}}{\nu^{2}}, \frac{c}{\lambda_{\mathrm{min}}(\vSigma)\nu^{2}}\right\}\right) ,
\end{equation*}
then, if $\eta<1/\lambda$, when
\begin{equation*}
    t \leq \log_{1-\eta\lambda}\left(\sqrt{\frac{2}{(m-n)\nu^{2}}}\left(\sqrt{\frac{c}{\lambda_{\mathrm{min}}(\vSigma)}} + \lVert\vtheta^{*}\rVert_{2}\right)\right) ,
\end{equation*}
we have $L(\vtheta^{(t)}) \geq c$ with probability at least $1-\delta$.
\end{theorem}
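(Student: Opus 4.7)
The plan is to reduce the desired lower bound on the generalization loss to a lower bound on a component of $\vtheta^{(t)}-\vtheta^{*}$ that decays only at the weight-decay rate, and then invoke a high-probability lower bound on the Gaussian initialization projected onto an $(m-n)$-dimensional subspace. This closely parallels the zero-teacher argument used in the proof of Theorem~\ref{thm:overfitting-rf}, but must additionally absorb the teacher's norm $\lVert\vtheta^{*}\rVert_{2}$ via a triangle inequality.

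First, I would lower bound
\begin{equation*}
    L(\vtheta^{(t)}) = (\vtheta^{(t)}-\vtheta^{*})^{\top}\vSigma(\vtheta^{(t)}-\vtheta^{*}) \geq \lambda_{\mathrm{min}}(\vSigma)\cdot\lVert\vtheta^{(t)}-\vtheta^{*}\rVert_{2}^{2}.
\end{equation*}
For every $\vw\in\R^{m}$ I would use the decomposition $\vw=\vw_{\parallel}+\vw_{\perp}$ into the row space of $\vPhi$ and its orthogonal complement (the null space of $\vPhi$). Inspecting Equation~\eqref{eq:gd-update-rf}, the data-dependent term $\frac{\eta}{n}\sum_{i=1}^{n}\langle\vtheta^{(t)}-\vtheta^{*},\vphi(\vx_{i})\rangle\vphi(\vx_{i})$ lies entirely in the row space of $\vPhi$, while weight decay acts diagonally. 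Thus the orthogonal component of the iterate obeys the exact recursion
\begin{equation*}
    \vtheta_{\perp}^{(t+1)} = (1-\eta\lambda)\,\vtheta_{\perp}^{(t)}, \qquad \vtheta_{\perp}^{(t)}=(1-\eta\lambda)^{t}\,\vtheta_{\perp}^{(0)},
\end{equation*}
which is identical to what appears in the zero-teacher analysis. A projection-onto-subspace inequality followed by the triangle inequality then yields
\begin{equation*}
    \lVert\vtheta^{(t)}-\vtheta^{*}\rVert_{2} \geq \lVert(\vtheta^{(t)}-\vtheta^{*})_{\perp}\rVert_{2} \geq \lVert\vtheta_{\perp}^{(t)}\rVert_{2} - \lVert\vtheta_{\perp}^{*}\rVert_{2} \geq (1-\eta\lambda)^{t}\lVert\vtheta_{\perp}^{(0)}\rVert_{2} - \lVert\vtheta^{*}\rVert_{2}.
\end{equation*}

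Next, mirroring the argument in Theorem~\ref{thm:overfitting-rf}, the null space of $\vPhi$ has dimension at least $m-n$, so projecting $\vtheta^{(0)}\sim\N(\bm{0},\nu^{2}\vI_{m})$ onto any orthonormal basis of it and applying chi-squared concentration (Lemma~\ref{lem:chi-squared-concen}) gives $\lVert\vtheta_{\perp}^{(0)}\rVert_{2}^{2}\geq(m-n)\nu^{2}/2$ with probability at least $1-2e^{-(m-n)/32}$. Chaining these estimates, a sufficient condition for $L(\vtheta^{(t)})\geq c$ is
\begin{equation*}
    (1-\eta\lambda)^{t}\sqrt{(m-n)\nu^{2}/2} - \lVert\vtheta^{*}\rVert_{2} \geq \sqrt{c/\lambda_{\mathrm{min}}(\vSigma)}.
\end{equation*}
Rearranging and applying $\log_{1-\eta\lambda}$, which is a decreasing function of its argument because $1-\eta\lambda<1$, produces exactly the upper bound on $t$ stated in the theorem. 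The overparameterization assumption $m=n+\Omega(\max\{\lVert\vtheta^{*}\rVert_{2}^{2}/\nu^{2},\,c^{2}/(\lambda_{\mathrm{min}}^{2}(\vSigma)\nu^{2})\})$ ensures that the argument of the logarithm lies in $(0,1)$, so the resulting bound is positive, while the $\log(1/\delta)$ term in the assumption on $m-n$ upgrades the chi-squared failure probability to at most $\delta$.

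The main obstacle is not deep but requires careful bookkeeping: the triangle inequality splits $\lVert\vtheta^{(t)}-\vtheta^{*}\rVert_{2}$ as $\lVert\vtheta_{\perp}^{(t)}\rVert_{2}-\lVert\vtheta^{*}\rVert_{2}$, so the overparameterization budget $(m-n)\nu^{2}$ must simultaneously dominate (i) the teacher norm and (ii) the target threshold $c/\lambda_{\mathrm{min}}(\vSigma)$. The constants in the assumption on $m$ must be chosen so that the two contributions combine into a quantity of the form $(\sqrt{c/\lambda_{\mathrm{min}}(\vSigma)}+\lVert\vtheta^{*}\rVert_{2})^{2}$, which is what appears under the square root inside the logarithm in the final bound on $t_{2}$.
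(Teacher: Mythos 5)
Your proposal is correct and follows essentially the same route as the paper's proof: lower bounding $L(\vtheta^{(t)})$ by $\lambda_{\mathrm{min}}(\vSigma)\lVert\vtheta^{(t)}-\vtheta^{*}\rVert_{2}^{2}$, isolating the null-space component which decays exactly as $(1-\eta\lambda)^{t}\vtheta_{\perp}^{(0)}$, applying chi-squared concentration to get $\lVert\vtheta_{\perp}^{(0)}\rVert_{2}^{2}\geq(m-n)\nu^{2}/2$, and absorbing $\lVert\vtheta^{*}\rVert_{2}$ via the triangle inequality before inverting $(1-\eta\lambda)^{t}$. The only cosmetic difference is that you project before applying the triangle inequality while the paper does the reverse, which yields the identical bound.
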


\begin{proof}[Proof of Theorem \ref{thm:overfitting-rf-realizable-case-restated}]
Let $\vSigma:=\E_{\vx}[\vphi(\vx)\vphi(\vx)^{\top}]$. Since $N^{*}(\vx)=\langle\vtheta^{*},\vphi(\vx)\rangle$, we have 
\begin{equation*}
    L(\vtheta^{(t)}) = \E_{\vx}\left[(N(\vx;\vtheta^{(t)})-N^{*}(\vx))^{2}\right] = (\vtheta^{(t)}-\vtheta^{*})^{\top}\vSigma(\vtheta^{(t)}-\vtheta^{*}) \geq \lambda_{\mathrm{min}}(\vSigma)\cdot\lVert\vtheta^{(t)}-\vtheta^{*}\rVert_{2}^{2} .
\end{equation*}
Moreover, the gradient descent follows as
\begin{align*}
    \vtheta^{(t)} = &\vtheta^{(t-1)} - \frac{\eta}{n}\sum_{i=1}^{n}\left(N(\vx_{i};\vtheta^{(t-1)})-N^{*}(\vx_{i})\right)\vphi(\vx_{i}) - \eta\lambda\vtheta^{(t-1)} \\
    = &\vtheta^{(t-1)} - \frac{\eta}{n}\sum_{i=1}^{n}\left\langle\vtheta^{(t-1)},\vphi(\vx_{i})\right\rangle\vphi(\vx_{i}) + \frac{\eta}{n}\sum_{i=1}^{n}N^{*}(\vx_{i})\vphi(\vx_{i}) - \eta\lambda\vtheta^{(t-1)} \\
    = &\vtheta^{(t-1)} - \frac{\eta}{n}\vPhi^{\top}\vPhi\vtheta^{(t-1)} - \eta\lambda\vtheta^{(t-1)} + \frac{\eta}{n}\vPhi^{\top}\vPhi\vtheta^{*} .
\end{align*}
Using the same notation $\vtheta=\vtheta_{\parallel}+\vtheta_{\perp}$ for any $\vtheta\in\R^{m}$ where $\vtheta_{\perp}$ is orthogonal to the row space of $\vPhi$, we can write
\begin{align*}
    \vtheta_{\parallel}^{(t+1)} = &\vtheta_{\parallel}^{(t)} - (\eta/n)\vPhi^{\top}\vPhi\vtheta_{\parallel}^{(t)} -\eta\lambda\vtheta_{\parallel}^{(t)} + (\eta/n)\vPhi^{\top}\vPhi\vtheta_{\parallel}^{*} , \\
    \vtheta_{\perp}^{(t+1)} = &\vtheta_{\perp}^{(t)} - \eta\lambda\vtheta_{\perp}^{(t)} = \left(1-\eta\lambda\right)\vtheta_{\perp}^{(t)} .
\end{align*}
Hence, we have at step $t>0$ that
\begin{equation*}
    \vtheta_{\perp}^{(t)} = \left(1-\eta\lambda\right)\vtheta_{\perp}^{(t-1)} = \cdots = \left(1-\eta\lambda\right)^{t}\vtheta_{\perp}^{(0)} .
\end{equation*}
By triangle inequality, we have
\begin{equation*}
    \lVert\vtheta^{(t)}-\vtheta^{*}\rVert_{2} \geq \lVert\vtheta^{(t)}\rVert_{2} - \lVert\vtheta^{*}\rVert_{2} \geq \lVert\vtheta^{(t)}_{\perp}\rVert_{2} - \lVert\vtheta^{*}\rVert_{2} = \left(1-\eta\lambda\right)^{t}\lVert\vtheta_{\perp}^{(0)}\rVert_{2} - \lVert\vtheta^{*}\rVert_{2} .
\end{equation*}
When $\vtheta^{(0)}\sim\N(\bm{0},\nu^{2}\vI_{d})$, following the same analysis in the proof of Theorem~\ref{thm:overfitting-rf}, we get
\begin{equation*}
    \lVert\vtheta^{(t)}-\vtheta^{*}\rVert_{2} \geq \left(1-\eta\lambda\right)^{t}\sqrt{\frac{(m-n)\nu^{2}}{2}} - \lVert\vtheta^{*}\rVert_{2} ,
\end{equation*}
with probability at least $1-2e^{-(m-n)/32}$. When $m\geq n+32\log(2/\delta)$, we have $2e^{-(m-n)/32}\leq\delta$. This implies that with probability at least $1-\delta$,
\begin{equation*}
    L(\vtheta^{(t)}) \geq \lambda_{\mathrm{min}}(\vSigma)\cdot\left(\left(1-\eta\lambda\right)^{t}\sqrt{\frac{(m-n)\nu^{2}}{2}} - \lVert\vtheta^{*}\rVert_{2}\right)^{2} .
\end{equation*}
Now, for any constant $c>0$,
\begin{equation*}
    m \geq n+\frac{8\lVert\vtheta^{*}\rVert_{2}^{2}}{\nu^{2}} \;\;\text{and}\;\; m \geq n+\frac{8c}{\lambda_{\mathrm{min}}(\vSigma)\nu^{2}} 
\end{equation*}
suffice to guarantee that $L(\vtheta^{(0)}) \geq c$. Then, when
\begin{equation*}
    t \leq \log_{1-\eta\lambda}\left(\sqrt{\frac{2}{(m-n)\nu^{2}}}\left(\sqrt{\frac{c}{\lambda_{\mathrm{min}}(\vSigma)}} + \lVert\vtheta^{*}\rVert_{2}\right)\right) ,
\end{equation*}
we have $L(\vtheta^{(t)})\geq c$, with probability at least $1-\delta$.
\end{proof}

\begin{theorem}  [\textbf{Theorem~\ref{thm:generalization-rf-realizable-case} restated}]
  \label{thm:generalization-rf-realizable-case-restated}
Suppose that $N^{*}(\vx)=\langle\vtheta^{*},\vphi(\vx)\rangle$ for some $\vtheta^{*}\in\R^{m}$. Assume that $\lVert\vphi(\vx)\rVert_{2}\leq b, \forall\vx$ for some $b>0$. For any $\epsilon>0$ and $\delta\in(0,1)$, if
\begin{equation*}
    n = \Omega\left( \frac{b^{4}\lVert\vtheta^{*}\rVert_{2}^{4}}{\epsilon^{2}}\log\left(\frac{1}{\delta}\right)\right) ,
\end{equation*}
then, if $\eta\leq1/(\lambda+b^{2})$, we have with probability at least $1-\delta$,
\begin{equation*}
    L(\vtheta^{*}_{\lambda}) \leq 2L_{n}(\vtheta^{*}_{\lambda}) + \epsilon .
\end{equation*}
\end{theorem}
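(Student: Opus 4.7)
The plan is to split the statement into two logically independent parts: an optimization claim that GD converges to $\vtheta^{*}_{\lambda}$, and a statistical claim that controls $L(\vtheta^{*}_{\lambda})-L_{n}(\vtheta^{*}_{\lambda})$ via uniform convergence. The two parts can then be combined at the end.

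For the optimization part, the observation is that the training objective $L_{n}(\vtheta;\lambda)$ is a quadratic in $\vtheta$, $\lambda$-strongly convex thanks to the ridge term, and $(L+\lambda)$-smooth because the data-dependent Hessian is $\frac{1}{n}\vPhi^{\top}\vPhi$, whose operator norm is at most $L\leq b^{2}$. Under the step-size condition already in force in the theorem, standard GD analysis on strongly convex smooth objectives gives linear convergence of $\vtheta^{(t)}$ to the unique minimizer $\vtheta^{*}_{\lambda}$. So it suffices to prove the generalization inequality at $\vtheta^{*}_{\lambda}$.

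For the statistical part, I would first derive an a priori, data-independent bound on $\lVert\vtheta^{*}_{\lambda}\rVert_{2}$. Using optimality of $\vtheta^{*}_{\lambda}$ together with realizability $L_{n}(\vtheta^{*})=0$,
\begin{equation*}
    \tfrac{\lambda}{2}\lVert\vtheta^{*}_{\lambda}\rVert_{2}^{2}\leq L_{n}(\vtheta^{*}_{\lambda})+\tfrac{\lambda}{2}\lVert\vtheta^{*}_{\lambda}\rVert_{2}^{2}\leq L_{n}(\vtheta^{*})+\tfrac{\lambda}{2}\lVert\vtheta^{*}\rVert_{2}^{2}=\tfrac{\lambda}{2}\lVert\vtheta^{*}\rVert_{2}^{2},
\end{equation*}
so $\vtheta^{*}_{\lambda}$ lies (deterministically) in the ball $\mathcal{B}=\{\vtheta:\lVert\vtheta\rVert_{2}\leq\lVert\vtheta^{*}\rVert_{2}\}$. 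This fixed ball is the class over which I would apply uniform convergence, which is what allows a vanilla Rademacher argument to go through.

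Next, I would apply a standard Rademacher-based uniform convergence bound to the squared-loss class $\{\vx\mapsto(\langle\vtheta-\vtheta^{*},\vphi(\vx)\rangle)^{2}:\vtheta\in\mathcal{B}\}$. Since $\lVert\vphi(\vx)\rVert_{2}\leq b$, each residual satisfies $|\langle\vtheta-\vtheta^{*},\vphi(\vx)\rangle|\leq 2b\lVert\vtheta^{*}\rVert_{2}$, so the loss is bounded by $4b^{2}\lVert\vtheta^{*}\rVert_{2}^{2}$ and the map $z\mapsto z^{2}$ is $(4b\lVert\vtheta^{*}\rVert_{2})$-Lipschitz on the relevant interval. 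By the Ledoux--Talagrand contraction principle, the Rademacher complexity of the loss class is at most a constant multiple of $b\lVert\vtheta^{*}\rVert_{2}$ times that of the linear class $\{\vx\mapsto\langle\vw,\vphi(\vx)\rangle:\lVert\vw\rVert_{2}\leq 2\lVert\vtheta^{*}\rVert_{2}\}$, which is $O(b\lVert\vtheta^{*}\rVert_{2}/\sqrt{n})$ by the textbook linear-class bound. Combining these with the bounded-loss uniform convergence theorem gives, with probability at least $1-\delta$,
\begin{equation*}
    \sup_{\vtheta\in\mathcal{B}}\bigl|L(\vtheta)-L_{n}(\vtheta)\bigr|=O\!\left(\frac{b^{2}\lVert\vtheta^{*}\rVert_{2}^{2}\sqrt{\log(1/\delta)}}{\sqrt{n}}\right).
\end{equation*}
The sample-size hypothesis $n=\Omega(b^{4}\lVert\vtheta^{*}\rVert_{2}^{4}\log(1/\delta)/\epsilon^{2})$ makes this right-hand side at most $\epsilon$, and plugging in $\vtheta=\vtheta^{*}_{\lambda}\in\mathcal{B}$ yields $L(\vtheta^{*}_{\lambda})\leq L_{n}(\vtheta^{*}_{\lambda})+\epsilon\leq 2L_{n}(\vtheta^{*}_{\lambda})+\epsilon$, the factor of two in the statement being comfortable slack.

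The main obstacle is ensuring the class over which uniform convergence is applied is genuinely data-independent; this is exactly what the deterministic norm bound $\lVert\vtheta^{*}_{\lambda}\rVert_{2}\leq\lVert\vtheta^{*}\rVert_{2}$ provides. The remaining effort is routine bookkeeping of Lipschitz/Rademacher constants.
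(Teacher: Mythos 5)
Your proposal follows essentially the same route as the paper: the deterministic bound $\lVert\vtheta^{*}_{\lambda}\rVert_{2}\leq\lVert\vtheta^{*}\rVert_{2}$ from optimality plus realizability, then contraction together with the linear-class Rademacher bound and a bounded-loss uniform convergence theorem, with convexity handling GD's convergence to $\vtheta^{*}_{\lambda}$. One slip worth fixing: under the paper's conventions, $L_{n}(\vtheta)=\frac{1}{2n}\sum_{i}(N(\vx_{i};\vtheta)-N^{*}(\vx_{i}))^{2}$ carries a factor $\tfrac12$ while $L(\vtheta)=\E_{\vx}[(N(\vx;\vtheta)-N^{*}(\vx))^{2}]$ does not, so your claimed bound $\sup_{\vtheta\in\mathcal{B}}|L(\vtheta)-L_{n}(\vtheta)|=O(b^{2}\lVert\vtheta^{*}\rVert_{2}^{2}\sqrt{\log(1/\delta)}/\sqrt{n})$ is false as written (even for a fixed $\vtheta$ the gap converges to $\tfrac12 L(\vtheta)$). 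Uniform convergence controls $L(\vtheta)$ against the empirical mean of the loss, which equals $2L_{n}(\vtheta)$, so the factor of two in the theorem is exactly what absorbs this scaling mismatch rather than being ``comfortable slack.'' With that bookkeeping correction your argument matches the paper's proof.
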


\begin{proof}[Proof of Theorem \ref{thm:generalization-rf-realizable-case-restated}]
The proof of generalization follows from the standard argument of uniform convergence based on Rademacher complexity, which is defined formally as

\begin{definition}  [\textbf{Rademacher Complexity}]
  \label{def:rademacher-complexity}
Let $S_{n}=\{(\vx_{i}, y_{i})\}_{i=1}^{n}$ be a dataset and let $\hpc$ be a function class. The \underline{(empirical) Rademacher complexity} of $\hpc$ with respect to $S_{n}$ is defined as follow
\begin{equation}
  \label{eq:empirical-rademacher-complexity}
    \mathrm{Rad}_{S_{n}}\left(\hpc\right) := \E_{\sigma_{1},\ldots,\sigma_{n}\sim\mathrm{Unif}(\{\pm1\})}\left[\sup_{h\in\hpc}\frac{1}{n}\sum_{i=1}^{n}\sigma_{i}h(\vx_{i})\right] ,
\end{equation}
where $\sigma_{1},\ldots,\sigma_{n}$ are called Rademacher variables. 
\end{definition}

Let us first define the following class of regression models for any positive value $B$:
\begin{equation*}
    \hpc_{\vtheta}(B) := \left\{\vx\mapsto \left\langle\vtheta,\vphi(\vx)\right\rangle: \lVert\vtheta\rVert_{2} \leq B\right\} .
\end{equation*}
The (empirical) Rademacher complexity of $\hpc_{\vtheta}(B)$ can be bounded as shown in the following lemma.

\begin{lemma}
  \label{lem:rademacher-complexity-bound}
We have $\mathrm{Rad}_{S_{n}}(\hpc_{\vtheta}(B)) \leq B\sqrt{L/n}$ where $L:=\frac{1}{n}\sum_{i=1}^{n}\lVert\vphi(\vx_{i})\rVert_{2}^{2}$.
\end{lemma}

\begin{proof}[Proof of Lemma~\ref{lem:rademacher-complexity-bound}]
\begin{align*}
    \mathrm{Rad}_{S_{n}}(\hpc_{\vtheta}(B)) \stackrel{\text{\eqmakebox[rf-general-teacher-a][c]{}}}{=} &\frac{1}{n}\E\left[\sup_{\lVert\vtheta\rVert_{2}\leq B}\sum_{i=1}^{n}\sigma_{i}\langle\vtheta,\vphi(\vx_{i})\rangle\right] \\
    \stackrel{\text{\eqmakebox[rf-general-teacher-a][c]{}}}{=} &\frac{1}{n}\E\left[\sup_{\lVert\vtheta\rVert_{2}\leq B}\left\langle\vtheta,\sum_{i=1}^{n}\sigma_{i}\cdot\vphi(\vx_{i})\right\rangle\right] \\
    \stackrel{\text{\eqmakebox[rf-general-teacher-a][c]{\text{\small Cauchy Schwarz}}}}{\leq} &\frac{1}{n}\E\left[\sup_{\lVert\vtheta\rVert_{2}\leq B}\lVert\vtheta\rVert_{2}\left\lVert\sum_{i=1}^{n}\sigma_{i}\cdot\vphi(\vx_{i})\right\rVert_{2}\right] \\
    \stackrel{\text{\eqmakebox[rf-general-teacher-a][c]{}}}{\leq} &\frac{B}{n}\E\left[\sqrt{\left\lVert\sum_{i=1}^{n}\sigma_{i}\cdot\vphi(\vx_{i})\right\rVert_{2}^{2}}\right] \\
    \stackrel{\text{\eqmakebox[rf-general-teacher-a][c]{\text{\small Jensen's ineq.}}}}{\leq} &\frac{B}{n}\sqrt{\E\left[\left\lVert\sum_{i=1}^{n}\sigma_{i}\cdot\vphi(\vx_{i})\right\rVert_{2}^{2}\right]} \\
    \stackrel{\text{\eqmakebox[rf-general-teacher-a][c]{}}}{\leq} &\frac{B}{n}\sqrt{\E\left[\sum_{i=1}^{n}\lVert\sigma_{i}\cdot\vphi(\vx_{i})\rVert_{2}^{2}\right]} \\
    \stackrel{\text{\eqmakebox[rf-general-teacher-a][c]{$\sigma_{i}\in\{\pm1\}$}}}{=} &\frac{B}{n}\sqrt{\sum_{i=1}^{n}\lVert\vphi(\vx_{i})\rVert_{2}^{2}} = B\sqrt{\frac{L}{n}} .
\end{align*}
\end{proof}

Next, we introduce the following classical generalization bound based on the Rademacher complexity. The theorem has been adjusted for the purpose of our setting.

\begin{lemma}  [{\textbf{\citealp[][Theorem 26.5]{shalev2014understanding}}}]
  \label{lem:generalization-bound-based-on-rademacher-complexity}
Let $\hpc_{\vtheta}$ be a function class and $S_{n}=\{(\vx_{i}, y_{i})\}_{i=1}^{n}$ be a dataset independently selected according to some probability measure $P$. For any $N(\vx;\vtheta)\in\hpc_{\vtheta}$, let 
\begin{equation*}
    \mathcal{L}_{n}(\vtheta):=\frac{1}{n}\sum_{i=1}^{n}\ell(N(\vx_{i};\vtheta),y_{i}) \;\;\text{and}\;\; \mathcal{L}_{P}(\vtheta):=\E_{(\vx,y)\sim P}\left[\ell(N(\vx;\vtheta),y)\right]
\end{equation*}
with some loss function $\ell(\cdot,\cdot)$. Assume the boundedness of the loss function, i.e.\ $|\ell(\cdot,\cdot)|\leq c$. Then, for any integer $n>0$ and any $\delta\in(0,1)$, we have that with probability of at least $1-\delta$ over samples $S_{n}$,
\begin{equation*}
    \mathcal{L}_{P}(\vtheta) \leq \mathcal{L}_{n}(\vtheta) + 2\mathrm{Rad}_{S_{n}}(\ell\circ\hpc_{\vtheta}) + 4c\sqrt{\frac{2\log(4/\delta)}{n}} ,
\end{equation*}
where $\ell\circ\hpc:=\{(x,y)\mapsto\ell(h(x),y):h\in\hpc\}$. Specifically, if the loss function $\ell(\cdot,\cdot)$ is $\mathcal{L}$-Lipschitz in the first argument, we can have $\mathrm{Rad}_{S_{n}}(\ell\circ\hpc)=O(\mathcal{L}\cdot\mathrm{Rad}_{S_{n}}(\hpc))$ \citep[c.f. Theorem 12][]{bartlett2002rademacher}.
\end{lemma}

With all these technical tools in hand, we now continue to prove Theorem~\ref{thm:generalization-rf-realizable-case-restated}. Note that our kernel ridge regression objective is convex
\begin{equation*}
    \min_{\vtheta}\left\{L_{n}(\vtheta;\lambda)\right\} = \min_{\vtheta}\left\{\frac{1}{2n}\sum_{i=1}^{n}\left(N(\vx_{i};\vtheta)-N^{*}(\vx_{i})\right)^{2}+\frac{\lambda}{2}\lVert\vtheta\rVert_{2}^{2}\right\} .
\end{equation*}
Moreover, since
\begin{equation*}
    \nabla_{\vtheta}L_{n}(\vtheta;\lambda) = \frac{1}{n}\sum_{i=1}^{n}\left\langle\vtheta,\vphi(\vx_{i})\right\rangle\vphi(\vx_{i}) - \frac{1}{n}\sum_{i=1}^{n}N^{*}(\vx_{i})\vphi(\vx_{i}) + \lambda\vtheta ,
\end{equation*}
we have by triangle inequality that
\begin{equation*}
    \left\lVert \nabla_{\vtheta}L_{n}(\vtheta;\lambda)-\nabla_{\vtheta}L_{n}(\vtheta';\lambda) \right\rVert_{2} \leq (b^{2}+\lambda)\lVert\vtheta-\vtheta'\rVert_{2} ,
\end{equation*}
i.e., the loss objective is smooth. Hence, we know that GD converges to the global minimum $\vtheta^{*}_{\lambda}$ of the regularized loss objective if using a small enough step size $\eta < 2/(b^{2}+\lambda)$. Moreover, since $L_{n}(\vtheta^{*}_{\lambda};\lambda)\leq L_{n}(\vtheta^{*};\lambda)$ and $L_{n}(\vtheta^{*})=0$, we have $\frac{\lambda}{2}\lVert\vtheta^{*}_{\lambda}\rVert_{2}^{2} \leq L_{n}(\vtheta^{*}_{\lambda})+\frac{\lambda}{2}\lVert\vtheta^{*}_{\lambda}\rVert_{2}^{2} = \frac{\lambda}{2}\lVert\vtheta^{*}\rVert_{2}^{2}$, i.e., $\lVert\vtheta^{*}_{\lambda}\rVert_{2} \leq \lVert\vtheta^{*}\rVert_{2}$. Now, we know $N(\vx_{i};\vtheta^{*}_{\lambda})\in\hpc_{\vtheta}(\lVert\vtheta^{*}\rVert_{2})$. 

Note that, in order to apply Lemma~\ref{lem:generalization-bound-based-on-rademacher-complexity} to $\hpc_{\vtheta}(\lVert\vtheta^{*}\rVert_{2})$, we need to argue the Lipschitzness of the loss function. While the squared loss is not Lipschitz globally, it is indeed Lipschitz on a bounded domain. Since we assume that $\lVert\vphi(\vx)\rVert_{2}\leq b$ for some universal constant $b>0$, we have for any $N(\vx;\vtheta)\in\hpc_{\vtheta}(\lVert\vtheta^{*}\rVert_{2})$,
\begin{equation*}
     |N(\vx;\vtheta)| = |\left\langle\vtheta, \vphi(\vx)\right\rangle| \leq \lVert\vtheta\rVert_{2}\cdot\lVert\vphi(\vx)\rVert_{2} \leq b\lVert\vtheta^{*}\rVert_{2} .
\end{equation*}
Hence, the squared loss is $4b\lVert\vtheta^{*}\rVert_{2}$-Lipschitz in the first argument for $\hpc_{\vtheta}(\lVert\vtheta^{*}\rVert_{2})$. Next, we argue the boundedness of the squared loss objective. This is clear since for any $N(\vx;\vtheta)\in\hpc_{\vtheta}(\lVert\vtheta^{*}\rVert_{2})$, we have
\begin{equation*}
    |\ell(N(\vx;\vtheta),y)| = (N(\vx;\vtheta)-N(\vx;\vtheta^{*}))^{2} \leq 4b^{2}\lVert\vtheta^{*}\rVert_{2}^{2} .
\end{equation*}
Additionally, we also have $L=\frac{1}{n}\sum_{i=1}^{n}\lVert\vphi(\vx_{i})\rVert_{2}^{2}\leq b^{2}$.

Now, applying the uniform convergence (Lemma~\ref{lem:generalization-bound-based-on-rademacher-complexity}), we get with probability at least $1-\delta$,
\begin{equation*}
    L(\vtheta^{*}_{\lambda}) \leq 2L_{n}(\vtheta^{*}_{\lambda}) + \frac{Cb^{2}\lVert\vtheta^{*}\rVert_{2}^{2}}{\sqrt{n}} + 16b^{2}\lVert\vtheta^{*}\rVert_{2}^{2}\sqrt{\frac{2\log(4/\delta)}{n}} ,
\end{equation*}
where $C>0$ is some universal constant. Finally, for any $\epsilon>0$, if the following holds:
\begin{equation*}
    n \geq \max\left\{\frac{4C^{2}b^{4}\lVert\vtheta^{*}\rVert_{2}^{4}}{\epsilon^{2}}, \frac{2048b^{4}\lVert\vtheta^{*}\rVert_{2}^{4}}{\epsilon^{2}}\log\left(\frac{4}{\delta}\right)\right\} ,
\end{equation*}
then we have
\begin{equation*}
    \frac{Cb^{2}\lVert\vtheta^{*}\rVert_{2}^{2}}{\sqrt{n}} + 16b^{2}\lVert\vtheta^{*}\rVert_{2}^{2}\sqrt{\frac{2\log(4/\delta)}{n}} \leq \frac{\epsilon}{2}+\frac{\epsilon}{2} \leq \epsilon .
\end{equation*}
Therefore, with probability at least $1-\delta$, we have $L(\vtheta^{*}_{\lambda}) \leq 2L_{n}(\vtheta^{*}_{\lambda}) + \epsilon$.
\end{proof}

\begin{remark}
  \label{rem:more-explicit-bounds}
We discuss how to derive our bounds in Equation~\eqref{eq:theoretical-bounds} when assuming specific distributions over features. Specifically, under the setting and notations of Theorem~\ref{thm:e2e-provable-grokking-rf-realizable-case}, we assume further that $\vphi(\vx_{1}),\ldots,\vphi(\vx_{n})$ are drawn i.i.d.\ from $\N(\bm{0}, \frac{1}{m}\vI_{m})$. For any $\epsilon>0$, we also assume that $m=\Omega(\lVert\vtheta^{*}\rVert_{2}^{2}/\epsilon)$. The bounds then follow directly from applying Theorem~\ref{thm:e2e-provable-grokking-rf-realizable-case}. Since $\vphi(\vx_{1}),\ldots,\vphi(\vx_{n})$ are i.i.d.\ $\N(\bm{0}, \frac{1}{m}\vI_{m})$ random variables, we can have $b^{2}=3/2$ with high probability and $\lambda_{\mathrm{min}}(\vSigma)=1/m$. Moreover, since $\vtheta^{(0)}\sim\N(\bm{0},\nu^{2}\vI_{m})$, we have $m\nu^{2}/2 \leq \lVert\vtheta^{(0)}\rVert_{2}^{2} \leq 3m\nu^{2}/2$ with high probability. For $t_{1}$, we have
\begin{equation*}
    t_{1} \leq \frac{n\ln\left(\frac{
    6b^{2}\lVert\vtheta^{(0)}\rVert_{2}^{2}}{\epsilon}\right)}{2\eta\lambda_{\mathrm{min}}^{+}(\vPhi^{\top}\vPhi)} \leq \frac{n\ln\left(\frac{
    14m\nu^{2}}{\epsilon}\right)}{2\eta\lambda_{\mathrm{min}}^{+}(\vPhi^{\top}\vPhi)} .
\end{equation*}
For $t_{2}$, note that according to the assumption, we have $c=\epsilon$ and $\epsilon > \lambda_{\mathrm{min}}(\vSigma)\lVert\vtheta^{*}\rVert_{2}^{2} = \lVert\vtheta^{*}\rVert_{2}^{2}/m$ and thus
\begin{equation*}
    \frac{(m-n)\nu^{2}}{2}\left(\sqrt{\frac{c}{\lambda_{\mathrm{min}}(\vSigma)}} + \lVert\vtheta^{*}\rVert_{2}\right)^{-2} \geq \frac{(m-n)\nu^{2}}{8m\epsilon} .
\end{equation*}
Moreover, to get a better approximation, we note that the inequality $\ln(1-\eta\lambda)\geq-2\eta\lambda$ used in the proof of Theorem~\ref{thm:e2e-provable-grokking-rf-realizable-case-restated} is loose when $\lambda$ is sufficiently small. Instead, we can have $\ln(1-\eta\lambda)\geq-1.01\eta\lambda$ when $\eta\lambda \leq 0.01$. This gives a more tight bound followed from the same analysis:
\begin{equation*}
    t_{2} \geq \frac{\ln\left(\frac{(m-n)\nu^{2}}{2}\left(\sqrt{\frac{c}{\lambda_{\mathrm{min}}(\vSigma)}} + \lVert\vtheta^{*}\rVert_{2}\right)^{-2}\right)}{2.02\eta\lambda} \geq \frac{\ln\left(\frac{(m-n)\nu^{2}}{8m\epsilon}\right)}{2.02\eta\lambda} .
\end{equation*}
Putting together, we get
\begin{equation*}
    t_{2} \geq \frac{\ln\left(\frac{(m-n)\nu^{2}}{8m\epsilon}\right)}{2.02\eta\lambda} \;\;\mathrm{and}\;\; t_{1} \leq \frac{n\ln\left(\frac{
    14m\nu^{2}}{\epsilon}\right)}{2\eta\lambda_{\mathrm{min}}^{+}(\vPhi^{\top}\vPhi)} .
\end{equation*}
\end{remark}

\section{Additional Experiments}
  \label{sec:additional-experiments}

In this section we provide several additional experiments.

First, we define a threshold-based surrogate accuracy metric by 
\begin{equation*}
    \mathbb{P}_{\vx}\left(\left(N(\vx;\vtheta^{(t)})-N^{*}(\vx)\right)^{2} \leq \epsilon\right)
\end{equation*}
for a fixed $\epsilon>0$. Evaluating our model under this metric reveals the familiar plateauing phenomenon experimentally (Figure~\ref{fig:comparing-grokking-with-accuracy-and-loss}). This confirms that the plateau 
%is not unique to classification architectures but 
can be recovered in linear regression under appropriate evaluation, offering a promising direction for future theoretical analysis.

\begin{figure}[H]
    \begin{subfigure}{0.49\columnwidth}
     \includegraphics[width=\columnwidth, height=6.0cm]{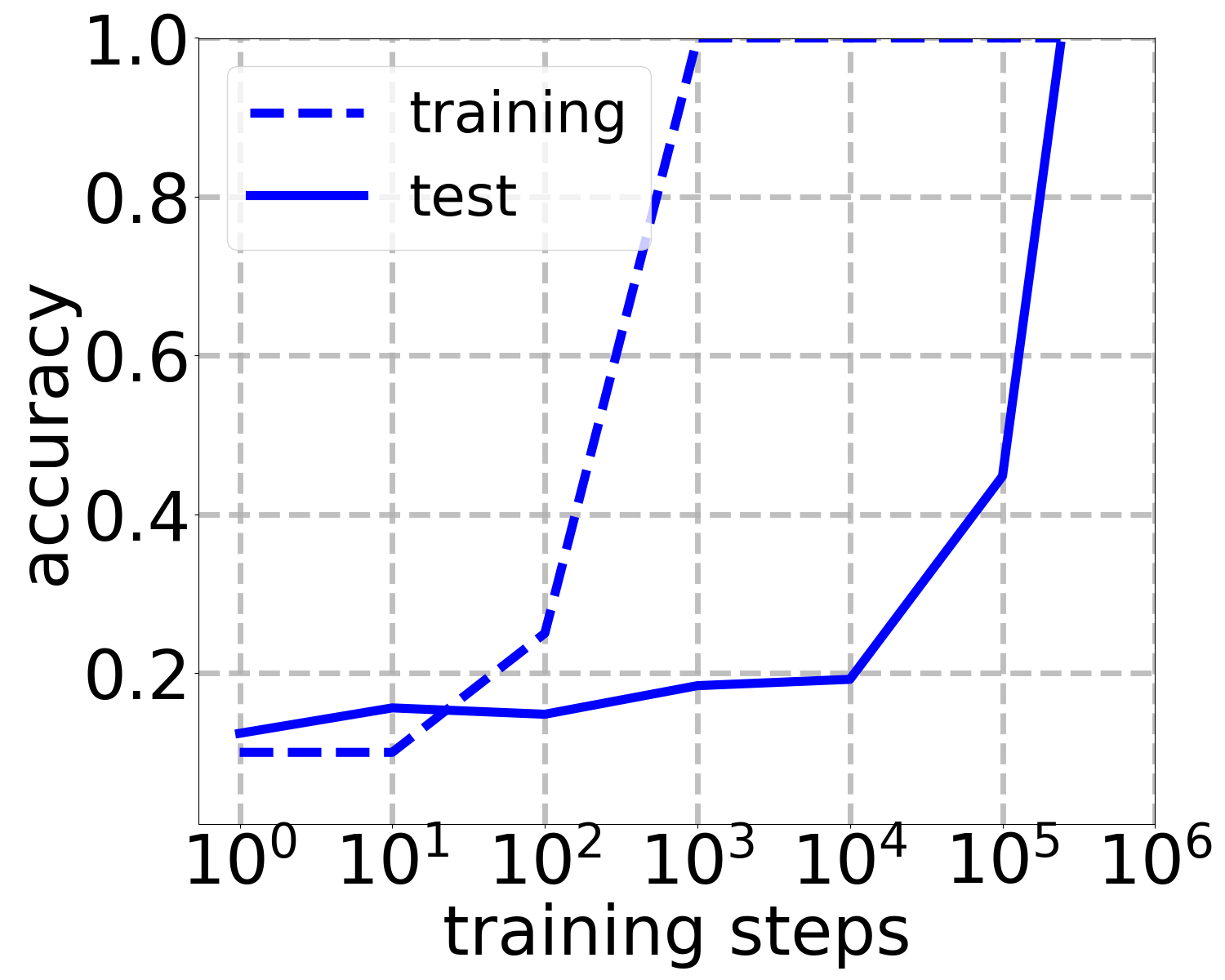}
    \end{subfigure}
    \hfill
    \begin{subfigure}{0.49\columnwidth}
     \includegraphics[width=\columnwidth, height=6.0cm]{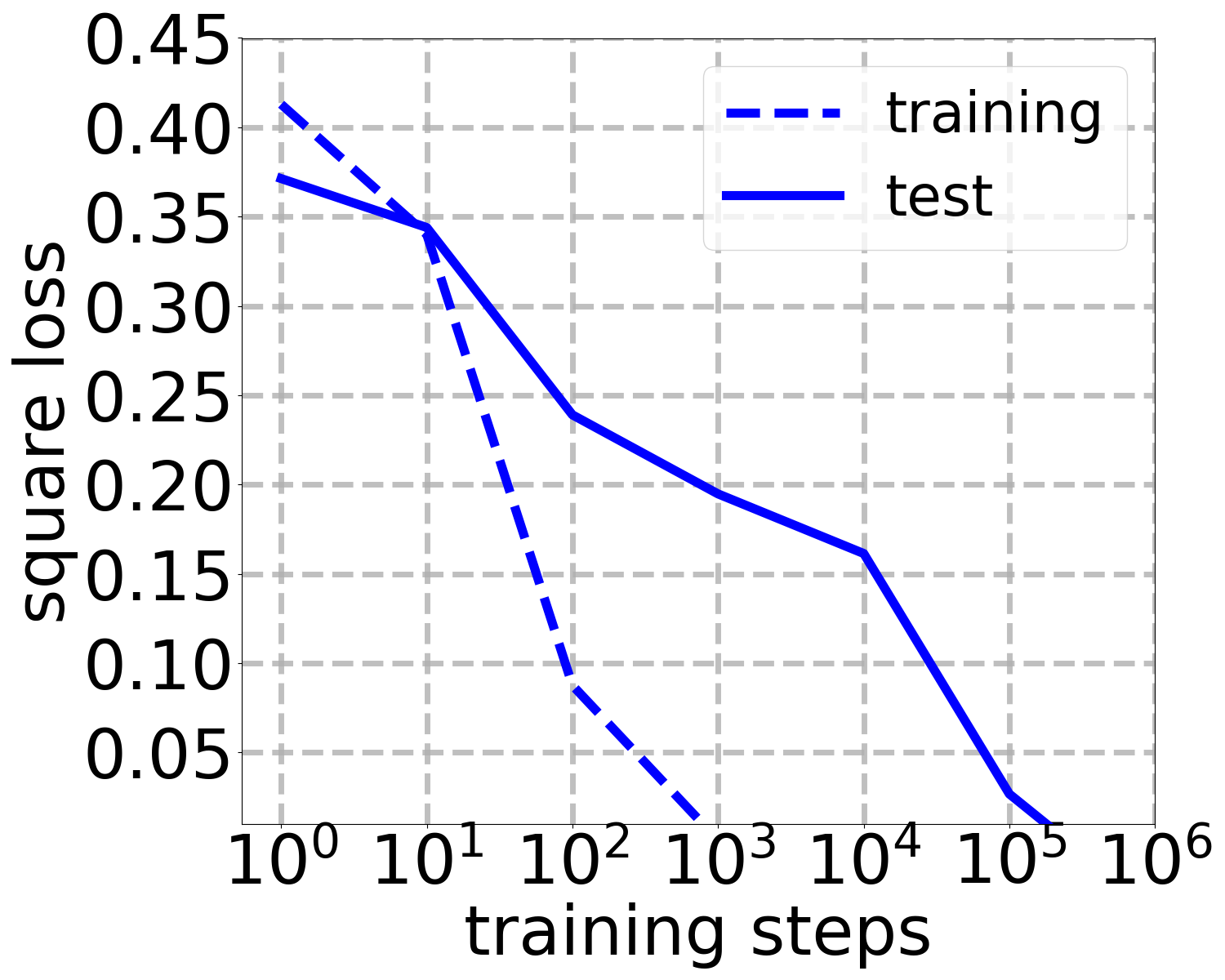}
    \end{subfigure}
    \caption{Plotting the grokking by evaluating the surrogate loss (an accuracy-type metric) compared to evaluating the actual loss. The data for the two plots are collected from a single simulation. We set $m=10000$, $d=100$, $n=100$, $\eta=1$, $\lambda=10^{-5}$ and $\epsilon=0.01$.}
    \label{fig:comparing-grokking-with-accuracy-and-loss}
\end{figure}

We also observe grokking in ridge regression with labels that are contaminated with mean-zero variance-$\mathrm{std}^{2}$ Gaussian noise (Figure~\ref{fig:grokking-label noise}). 

\begin{figure}[H]
    \begin{subfigure}{0.49\columnwidth}
     \includegraphics[width=\columnwidth, height=6.0cm]{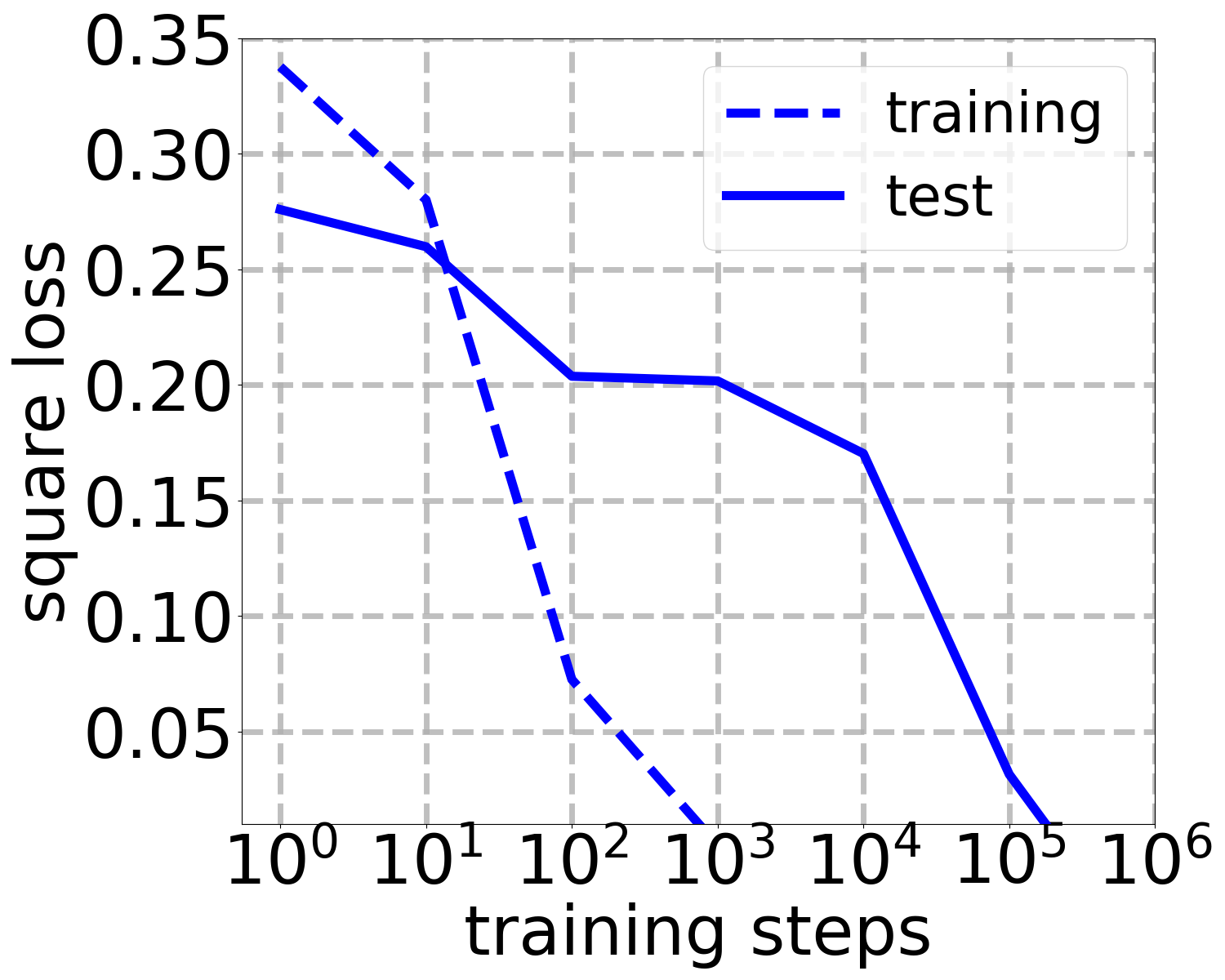}
    \end{subfigure}
    \hfill
    \begin{subfigure}{0.49\columnwidth}
     \includegraphics[width=\columnwidth, height=6.0cm]{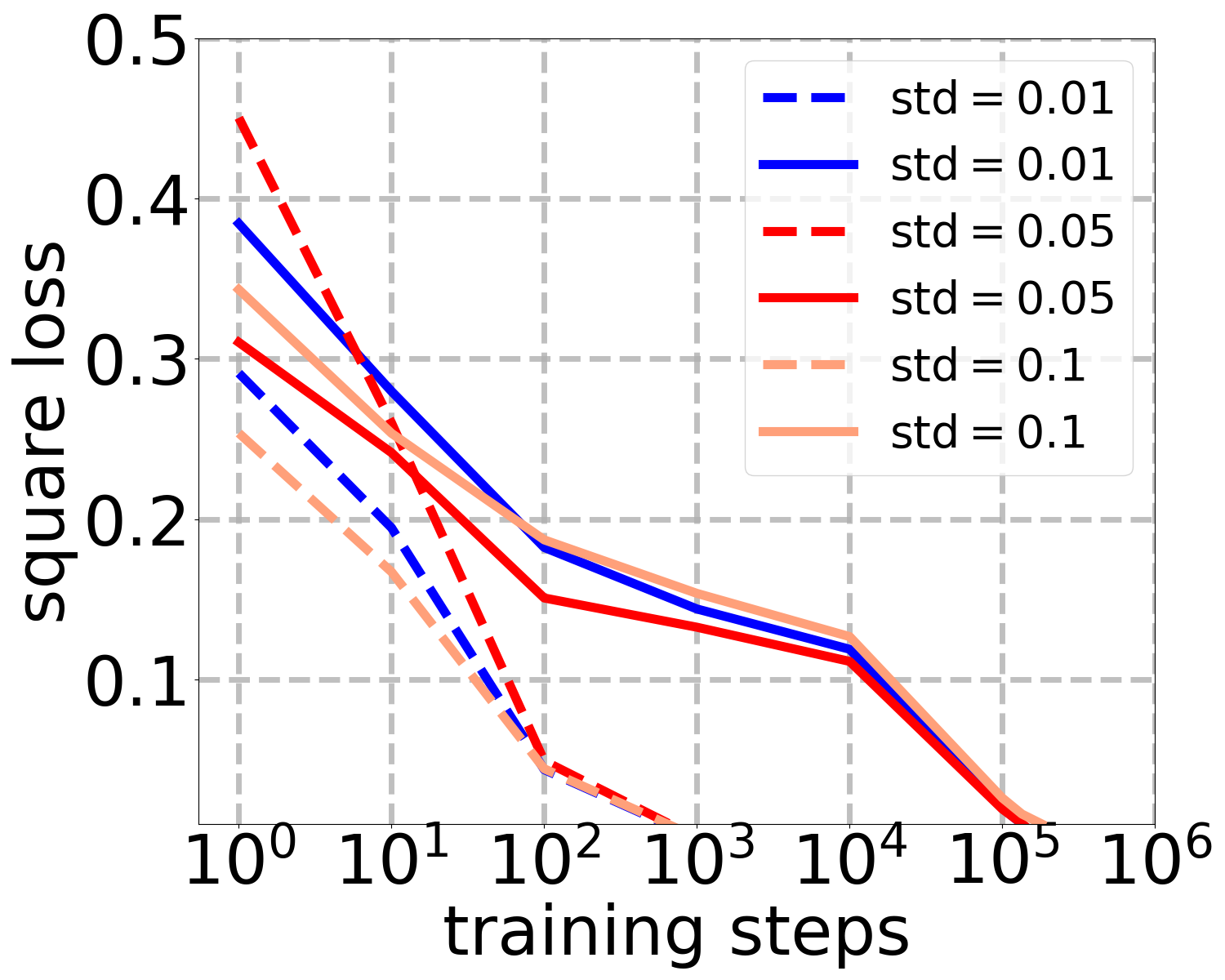}
    \end{subfigure}
    \caption{Ridge regression with label noise --- \textbf{Left}: We set $m=5000$, $d=50$, $n=100$, $\eta=1$, 
    %$\epsilon=0.01$, 
    $\mathrm{std}=0.1$ and $\lambda=10^{-5}$; \textbf{Right}: We notice that different noise levels do not affect the grokking time by much. We set $m=5000$, $d=50$, $n=100$, $\eta=1$,
    %$\epsilon=0.01$ 
    and $\lambda=10^{-5}$.}
    \label{fig:grokking-label noise}
\end{figure}

Finally, we observe grokking in the following setting. We study training a random Fourier feature network to learn a randomly-initialized realizable teacher. We implemented the random Fourier feature map $\phi(\vx)=\sqrt{2/m}\cos(\vW\vx+\vb)$ where each row of $\vW$ is sampled from $\mathcal{N}(\bm{0},\sigma^{2}\vI)$ and each entry of $\vb$ is sampled from $\mathrm{Uniform}([0,2\pi])$. Our randomly-initialized realizable teacher is a single such Fourier feature function generated from the same distribution. We note that this setting requires careful hyperparameter tuning to observe grokking (Figure~\ref{fig:grokking-rff}).

\begin{figure}[H]
    \centering
     \includegraphics[width=0.75\columnwidth, height=6.0cm]{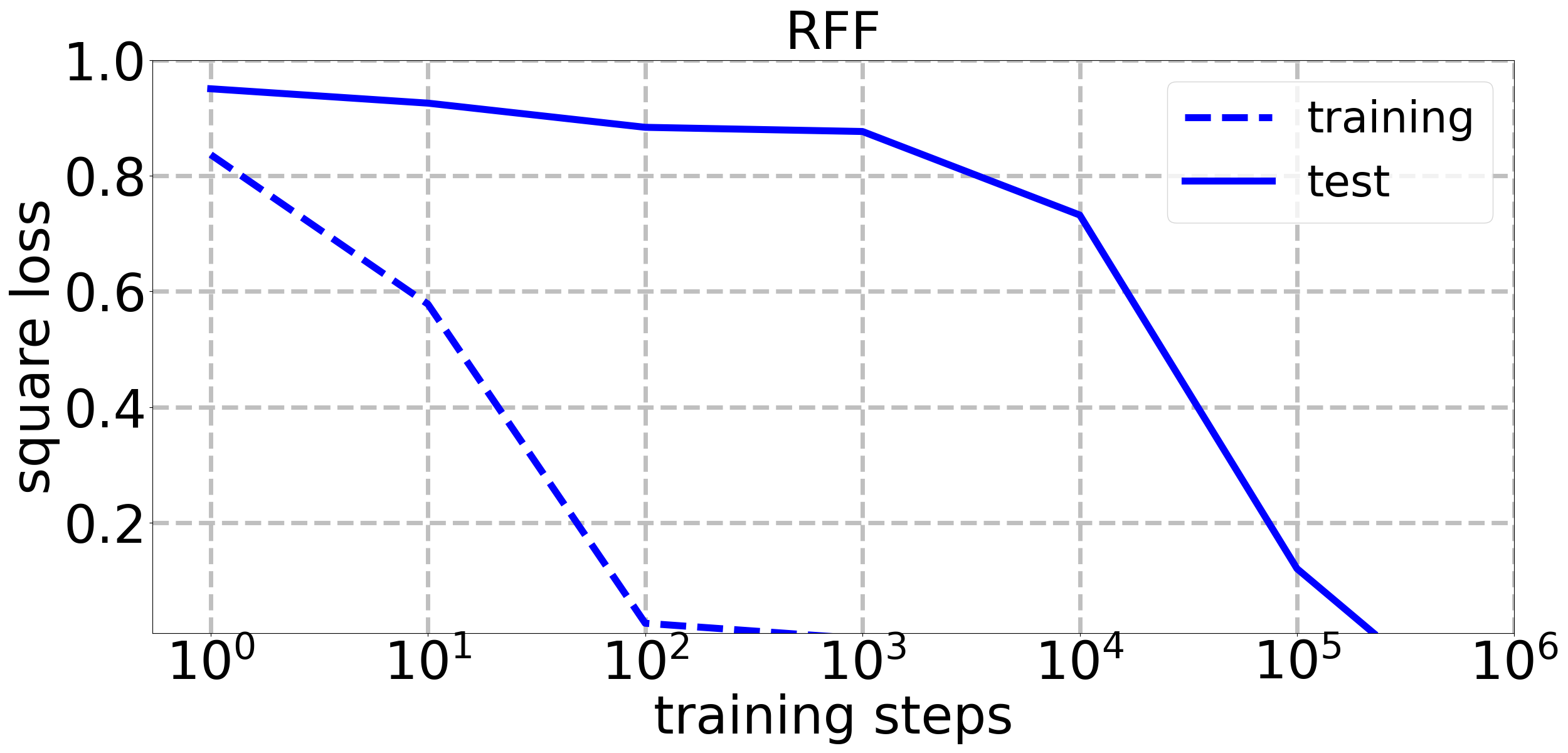}
    \caption{Random Fourier features --- we set $m=5000$, $d=10$, $n=100$, $\eta=1$, $\sigma^{2}=1$ and $\lambda=10^{-4}$.}
    \label{fig:grokking-rff}
\end{figure}

\section{Technical Lemmas}
  \label{sec:technical-lemmas}

\begin{lemma}  [\textbf{Chi-squared Concentration}]
  \label{lem:chi-squared-concen}
Let $\vw\in\Rd$ such that $\vw\sim\N(\bm{0}, \sigma^{2}\vI_{d})$ for some $\sigma^{2}>0$. For all $t\in(0,1)$,
\begin{equation*}
    \mathbb{P}\left(\frac{1}{d\sigma^{2}}\Big|\lVert\vw\rVert_{2}^{2}-d\sigma^{2}\Big| \geq t\right) \leq 2e^{-dt^{2}/8} .
\end{equation*}
\end{lemma}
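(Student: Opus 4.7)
The plan is to reduce the statement to a standard Chernoff/Laurent--Massart style argument on a chi-squared random variable. By rescaling, $Y_i := w_i/\sigma \sim \N(0,1)$ i.i.d., so $Z := \lVert\vw\rVert_{2}^{2}/\sigma^{2} = \sum_{i=1}^{d} Y_{i}^{2}$ has a $\chi^{2}_{d}$ distribution, and the desired inequality becomes $\mathbb{P}(|Z - d| \geq dt) \leq 2 e^{-dt^{2}/8}$ for $t \in (0,1)$.

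First, I would compute the moment generating function of the centered summands $X_i := Y_i^{2} - 1$: using the Gaussian integral $\mathbb{E}[e^{\lambda Y_{i}^{2}}] = (1-2\lambda)^{-1/2}$ for $\lambda < 1/2$, we get $\mathbb{E}[e^{\lambda X_{i}}] = e^{-\lambda}(1-2\lambda)^{-1/2}$. Taking logarithms and applying the Taylor expansion $-\tfrac{1}{2}\ln(1-2\lambda) = \lambda + \lambda^{2} + \tfrac{4\lambda^{3}}{3} + \cdots$ cancels the linear term, leaving $\ln \mathbb{E}[e^{\lambda X_{i}}] = \lambda^{2} + \tfrac{4\lambda^{3}}{3} + \cdots$. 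A routine tail estimate on the remainder of the series shows that for all $|\lambda| \leq 1/4$ we have $\ln \mathbb{E}[e^{\lambda X_{i}}] \leq 2\lambda^{2}$, i.e.\ each $X_i$ is sub-exponential in the standard sense.

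Next, I would apply the Chernoff bound to the sum. By independence, $\ln \mathbb{E}[e^{\lambda \sum_i X_i}] \leq 2 d \lambda^{2}$ for $|\lambda| \leq 1/4$, hence
\begin{equation*}
    \mathbb{P}(Z - d \geq s) \;\leq\; \exp\left(-\lambda s + 2 d \lambda^{2}\right)
\end{equation*}
for any $\lambda \in (0, 1/4]$. Optimizing with $\lambda = s/(4d)$ is admissible whenever $s \leq d$; plugging in $s = dt$ with $t \in (0,1)$ yields $\lambda = t/4 \leq 1/4$, and the bound becomes $\exp(-dt^{2}/8)$, matching the target. The lower tail $\mathbb{P}(d - Z \geq dt)$ is handled by the same argument applied to $-X_i$, whose MGF satisfies $\ln \mathbb{E}[e^{-\lambda X_i}] = \lambda + \tfrac{1}{2}\ln(1+2\lambda) \leq \lambda^{2}$ for $\lambda \in [0, 1/4]$ (indeed this one-sided bound is tighter), so the same Chernoff optimization gives the matching $e^{-dt^{2}/8}$ bound. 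A union bound combines the two tails with the factor $2$.

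There is no real obstacle: the only technical point is verifying the sub-exponential MGF bound $\ln \mathbb{E}[e^{\lambda X_i}] \leq 2 \lambda^{2}$ on $|\lambda| \leq 1/4$, which follows by bounding the geometric tail $\sum_{k \geq 3} (2\lambda)^{k}/(2k)$ by, say, $\tfrac{(2\lambda)^{3}}{3(1-2\lambda)}$ and using $2\lambda \leq 1/2$. If preferred, the entire proof can instead be quoted from the Laurent--Massart tail bound $\mathbb{P}(Z - d \geq 2\sqrt{dx} + 2x) \leq e^{-x}$ by choosing $x$ so that $2\sqrt{dx} + 2x \leq dt$ for $t \in (0,1)$, but the direct MGF/Chernoff route gives the constant $1/8$ most transparently.
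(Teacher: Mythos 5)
Your proof is correct. Note, however, that the paper itself gives no proof of this lemma: it is stated in the appendix as a standard technical fact (and where a chi-squared concentration step is actually used, in the proof of Theorem~\ref{thm:overfitting-rf}, the paper simply cites Fact~5 of \citet{dudeja2018learning}). Your contribution is therefore a self-contained derivation of the quoted bound, and it is the standard one: rescale to a $\chi^2_d$ variable, verify the sub-exponential MGF bound $\ln\E[e^{\lambda(Y_i^2-1)}]\leq 2\lambda^2$ for $|\lambda|\leq 1/4$, and run Chernoff with $\lambda = t/4$, which is exactly where the constant $1/8$ and the restriction $t\in(0,1)$ come from; the two-sided statement then follows from a union bound. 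One tiny quibble: in your final paragraph the remainder estimate $\sum_{k\geq 3}(2\lambda)^k/(2k)\leq (2\lambda)^3/\bigl(3(1-2\lambda)\bigr)$ has the wrong constant (the factor should be $1/(2k)\leq 1/6$, giving denominator $6$, or more simply bound the whole series by $\frac{1}{4}\sum_{k\geq 2}(2\lambda)^k=\frac{\lambda^2}{1-2\lambda}\leq 2\lambda^2$ for $\lambda\leq 1/4$); with your stated constant the bound $2\lambda^2$ does not quite follow as written, though the claim itself is true and the fix is immediate, so this does not affect the validity of the argument. Your alternative suggestion of quoting Laurent--Massart would likewise be consistent with how the paper treats this lemma.
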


%%%%%%%%%%%%%%%%%%%%%%%%%%%%%%%%%%%%%%%%%%%%%%%%%%%%%%%%%%%%%%%%%%%%%%%%%%%%%%%
%%%%%%%%%%%%%%%%%%%%%%%%%%%%%%%%%%%%%%%%%%%%%%%%%%%%%%%%%%%%%%%%%%%%%%%%%%%%%%%

\end{document}